\newcommand{\cB}{\mathcal{B}}
\newcommand{\cF}{\mathcal{F}}
\newcommand{\cG}{\mathcal{G}}
\newcommand{\cL}{\mathcal{L}}
\newcommand{\cV}{\mathcal{V}}
\newcommand{\cW}{\mathcal{W}}
\newcommand{\cX}{\mathcal{X}}
\newcommand{\cZ}{\mathcal{Z}}
\newcommand{\E}{\mathbb{E}}
\newcommand{\I}{\mathbb{I}}
\newcommand{\N}{\mathbb{N}}
\renewcommand{\P}{\mathbb{P}}
\newcommand{\R}{\mathbb{R}}
\DeclareMathOperator*{\argmin}{argmin}
\DeclareMathOperator*{\supp}{supp}
\newcommand{\dif}{\,\mathrm{d}}
\newcommand{\e}{\varepsilon}
\newcommand{\tht}{\vartheta}
\newcommand{\lrb}[1]{\left(#1\right)}
\newcommand{\brb}[1]{\bigl(#1\bigr)}
\newcommand{\Brb}[1]{\Bigl(#1\Bigr)}
\newcommand{\bbrb}[1]{\biggl(#1\biggr)}
\newcommand{\lsb}[1]{\left[#1\right]}
\newcommand{\bsb}[1]{\bigl[#1\bigr]}
\newcommand{\Bsb}[1]{\Bigl[#1\Bigr]}
\newcommand{\bbsb}[1]{\biggl[#1\biggr]}
\newcommand{\Bbsb}[1]{\Biggl[#1\Biggr]}
\newcommand{\lcb}[1]{\left\{#1\right\}}
\newcommand{\bcb}[1]{\bigl\{#1\bigr\}}
\newcommand{\Bcb}[1]{\Bigl\{#1\Bigr\}}
\newcommand{\bce}[1]{\bigl\lceil#1\bigr\rceil}
\newcommand{\labs}[1]{\left\lvert#1\right\rvert}
\newcommand{\babs}[1]{\bigl\lvert#1\bigr\rvert}
\newcommand{\Babs}[1]{\Bigl\lvert#1\Bigr\rvert}
\newcommand{\lno}[1]{\left\lVert#1\right\rVert}
\newcommand{\wt}{\widetilde}
\newcommand{\s}{\subset}
\newcommand{\m}{\setminus}
\newcommand{\iop}{\infty}
\renewcommand{\l}{\ldots}
\newcommand{\nn}{Nearest Neighbor}
\newcommand{\nna}{Nearest Neighbor algorithm}
\newcommand{\smn}{\psi}
\newcommand{\Smn}{\Psi}
\newcommand{\leb}{Lebesgue}
\newcommand{\besi}{Besicovitch}
\newcommand{\wtil}{\wt{\eta}}
\newcommand{\aaa}[1]{#1}
\newtheorem{lemma}{Lemma}
\newtheorem{proposition}{Proposition}
\newtheorem{corollary}{Corollary}
\newtheorem{theorem}{Theorem}
    \theoremstyle{definition}
\newtheorem{assumption}{Assumption}
\newtheorem{definition}{Definition}
\newtheorem{sett}{Setting}
\newtheorem{example}{Example}
\title{A Nearest Neighbor Characterization of \leb{} Points in Metric Measure Spaces}
\author[1]{Tommaso R. Cesari}
\affil[1]{Toulouse School of Economics (TSE), Toulouse, France

        \& Artificial and Natural Intelligence Toulouse Institute (ANITI)
        
         \href{mailto:tom.cesari@univ-toulouse.fr}{email: tom.cesari@univ-toulouse.fr}}
\author[2]{Roberto Colomboni}
\affil[2]{Istituto Italiano di Tecnologia, Genova, Italy
        
        \& Universit\`a degli Studi di Milano, Milano, Italy

        \href{mailto:roberto.colomboni@unimi.it}{email:
        roberto.colomboni@unimi.it}}
\begin{document}

\maketitle

\textbf{keywords:} Nearest Neighbor algorithms, geometric measure theory

\begin{abstract}
The property of almost every point being a \leb{} point has proven to be crucial for the consistency of several classification algorithms based on nearest neighbors.
We characterize \leb{} points in terms of a $1$-\nn{} regression algorithm for pointwise estimation, fleshing out the role played by tie-breaking rules in the corresponding convergence problem. 
We then give an application of our results, proving the convergence of the risk of a large class of $1$-\nn{} classification algorithms in general metric spaces where almost every point is a \leb{} point.
\end{abstract}

\section{Introduction}

A point $x$ in a metric space is a \emph{\leb{} point} for a function $f$ with respect to a locally-finite measure $\mu$ if
\[
    \frac{1}{\mu\brb{\bar{B}_r(x)}} \int_{\bar{B}_r(x)} \babs{ f(x') - f(x) } \dif \mu(x') \to 0\;, \qquad r\to 0^+
\]
where $\bar{B}_r(x)$ is the closed ball of radius $r$ centered at $x$.
\leb{} points are an integral generalization of continuity points. 
They originally found applications in Fourier analysis: 
\cite{lebesgue1905recherches} (resp., \cite{fatou1906series}) showed that the Fourier series of an integrable function $f$ is Ces\`aro-summable (resp., non-tangentially Abel-summable) to $f$ at all its \leb{} points.
They also find applications in harmonic analysis \cite[Theorem 1.25]{stein1971introduction}, wavelet and spline theory \cite[Theorem 2.1 and Corollary 2.2]{kelly1994pointwise}, and are a central concept in geometric measure theory, both in $\R^d$ \citep{evans2015measure,federer2014geometric,maggi2012sets,mattila1999geometry} and in general metric spaces \citep{cheeger1999differentiability,kinnunen2002lebesgue,kinnunen2008lebesgue,bjorn2010lebesgue}.
The most famous result on \leb{} points is probably the celebrated \leb{}--\besi{} differentiation theorem which states that for any Radon measure $\mu$ on a Euclidean space, $\mu$-almost every point is a \leb{} point for all $f\in\cL^1_{\mathrm{loc}}(\mu)$ (see, e.g., \cite[Theorems~1.32-1.33]{evans2015measure}).
\cite{preiss1979invalid} showed that this result does not hold in general metric spaces and characterized those in \aaa{which it does} \citep{preiss1983dimension}. 
These spaces include finite-dimensional Banach spaces \citep{loeb2006microscopic}, locally-compact separable ultrametric spaces \cite[Theorem 9.1]{simmons2012conditional}, separable Riemannian manifolds \cite[Theorem 9.1]{simmons2012conditional}, and (straightforwardly) countable spaces.

The \leb{}--\besi{} differentiation theorem found several applications in classification, regression and density-estimation problems with \nna{}s and variants thereof \citep{abraham2006kernel,biau2015lectures}.
To the best of our knowledge, \cite{devroye1981inequality} was the first to show a connection between the \leb{}--\besi{} differentiation theorem in $\R^d$ and the convergence of the risk of the \nn{} classifier in which \emph{ties are broken lexicographically} (i.e., when ties are broken by taking the sample with smallest index).
With the same tie-breaking rule, \cite{devroye1981almost} showed that the \leb{}--\besi{} differentiation theorem plays a crucial role in regression problems with \nn{} and kernel algorithms.
\cite{cerou2006nearest} proved that, if \emph{ties are broken uniformly at random}, the $k_m$-\nn{} classifier is consistent in any Polish metric space in which the \leb{}--\besi{} differentiation theorem holds (see also~\citep{forzani2012consistent,chaudhuri2014rates}). 
\aaa{Finally, it was shown that it is possible to combine compression techniques with $1$-\nn{} classification in order achieve consistency in essentially separable metric spaces, even if the \leb{}--\besi{} differentiation theorem does not hold \citep{hanneke2019universal}.
Recently, \cite{gyorfi2020universal} achieved the same consistency result through a simpler algorithm that does not rely on compression.}

\paragraph{Our contributions.}

The main purpose of this paper is to give a characterization (Theorem~\ref{t:sum-up} and Corollary~\ref{c:powa}) of \leb{} points in terms of an $\cL^1$-convergence property of a \nna{}.
More precisely, take a bounded measurable function $\eta$ defined on a metric space $(\cX,d)$, a random i.i.d.\ sample $X_1, \l, X_m$ on $\cX$, and an arbitrary point $x\in\cX$ in the support of the distribution of the $X_k$'s.
The algorithm evaluates $\eta$ at a point 
\[
    X^x_m \in \argmin_{X' \in \{X_1, \l, X_m\}} d(x,X')
\]
with the goal of approximating $\eta(x)$.

First, we prove that, if $\eta(X^x_m)$ converges to $\eta(x)$ in $\cL^1$, then $x$ is a \leb{} point for $\eta$, regardless of how ties in the definition of $X^x_m$ are broken (Theorem~\ref{t:nn-then-leb}).

Vice versa, it is known that, if ties are broken lexicographically and $x$ is a \leb{} point for $\eta$, then $\eta(X^x_m)$ converges to $\eta(x)$ in $\cL^1$ \citep{devroye1981inequality}.
By means of a novel technique (Theorem~\ref{t:victory-spheres}), we extend this result allowing more general tie-breaking rules. 
Under two different sufficient conditions, we show that if $x$ is a \leb{} point, then $\eta(X^x_m)$ converges to $\eta(x)$ in $\cL^1$ (Theorems~\ref{t:ass-proba}~and~\ref{t:ass-nn}).
The first one \eqref{e:ass-proba} is a relaxed measure-continuity condition
: in this case the implication holds regardless of how ties are broken.
The second one \eqref{e:ass-nn} bounds the bias of the tie-breaking rules in relation to the distribution of the $X_k$'s. 
In particular, we present a broad class of tie-breaking rules ---which we call ISIMINs (Independent Selectors of Indices of Minimum Numbers, Definition~\ref{d:ISIMIN})--- for which this condition holds no matter how pathological the distribution of the $X_k$'s is (Proposition~\ref{p:ISIMIN}). 
At a high-level, \aaa{an ISIMIN} selects the smallest among finite sets of numbers, relying on an independent source to break ties.
Notably, both lexicographical and random tie-breaking rules fall within this class.

Furthermore, if neither of the conditions \eqref{e:ass-proba} and \eqref{e:ass-nn} holds, we show with a counterexample (Example~\ref{ex:counter}) that $x$ being a \leb{} point for $\eta$ does \emph{not} imply that $\eta(X^x_m)$ converges to $\eta(x)$ in $\cL^1$, highlighting that tie-breaking rules play a role in the convergence of \nna{}s.

Putting all these results together leads to our characterization of \leb{} points $x$ in terms of the $\cL^1$-convergence of $\eta(X^x_m)$ to $\eta(x)$, which we later extend to general measures (Theorem~\ref{t:general-measures}).

Moreover, the proofs of our main theorems suggest a sequential characterization of \leb{} points, which turns out to be true for (not necessarily bounded) locally-integrable functions (Theorem~\ref{t:charact-gen-measures}).

Notably all of our results on \leb{} points can also be extended to \leb{} values (Section~\ref{s:from-points-to-values}).

We then present some applications. 
For the broad class of tie-breaking rules defined by ISIMINs, we give a detailed proof of the convergence of the risk of the $1$-\nn{} classification algorithm.
This result holds in arbitrary (even non separable) metric spaces where the \leb{}--\besi{} differentiation theorem holds (Theorem~\ref{t:conv-nn-classification}).
This shows in particular (Corollary~\ref{c:const-realizz}) that the consistency of the $1$-\nna{} is essentially equivalent to the realizability assumption \eqref{e:realizability-assumption}.
We conclude the paper with a counterexample (Example~\ref{ex:preiss}) showing that these convergence results do not hold (in general) without assuming the validity of the \leb{}--\besi{} differentiation theorem.

\paragraph{Outline of the paper.}
In Section~\ref{s:leb-vs-nn} we study the relationships between $x$ being a \leb{} point and $\eta(X^x_m)$ converging to $\eta(x)$ in $\cL^1$ in a probabilistic setting, proving our \nn{} characterization of \leb{} points using ISIMINs.
In Section~\ref{s:gen} we extend our results to arbitrary measures, also obtaining a sequential characterization of \leb{} points.
Section~\ref{s:from-points-to-values} illustrates how our findings can be extended to \leb{} values.
In Section~\ref{s:classification} we show an application to the convergence of the risk of binary classification with $1$-\nna{}s defined by ISIMINs.

\section{\leb{} vs \nn}
\label{s:leb-vs-nn}

In this section we study ---in a probabilistic setting--- the relationships between the geometric measure-theoretic concept of \leb{} points and the $\cL^1$-convergence of a $1$-\nn{} regression algorithm for pointwise estimation.

\subsection{Preliminaries and Definitions}
\label{s:prelim}

We begin by introducing our setting, notation, and definitions.
\begin{sett}
Fix an arbitrary metric space $(\cX, d)$.
Let $(\Omega, \cF, \P)$ be a probability space and $X, X_1, X_2, \l$ a sequence of $\cX$-valued $\P$-i.i.d.\ random variables.
\end{sett} 
For each measurable space $(\cW,\cF_{\cW})$ and each random variable $W \colon \Omega \to \cW$, we denote the distribution of $W$ with respect to $\P$ by $\P_W \colon \cF_{\cW} \to [0,1]$, $A \mapsto \P(W \in A)$. 
In the sequel, we will use interchangeably the notations $\P_W(A)$ and $\P (W \in A)$, depending on which one is the clearest in the context. 
We will denote expectations with respect to $\P$ by $\E[\cdot]$.

For any $x \in \cX$ and each $r>0$, we denote the open ball $\bcb{ x' \in \cX \mid d(x,x') < r }$ by $B_r(x)$, the closed ball $\bcb{x' \in \cX \mid d(x,x') \le r }$ by $\bar{B}_r(x)$, and the sphere $\bcb{x' \in \cX \mid d(x,x') = r }$ by $S_r(x)$.

We define the \emph{support} of $\P_X$ as the set
\[
    \supp\brb{ \P_X }
:=
    \Bcb{ x\in \cX \mid \forall r>0, \, \P_X\brb{ \bar{B}_r(x) } > 0 } \;.
\]
We are now ready to introduce \leb{} points in our setting.
\begin{definition}[\leb{} point]
\label{d:leb-point}
Let $\eta:\cX \to \R$ be a bounded measurable function.
We say that a point $x\in \supp\brb{ \P_X }$ is a \emph{\leb{} point} (for $\eta$ with respect to $\P_X$) if
\[
    \frac{\E \Bsb{ \I_{\bar{B}_r(x)}(X) \, \babs{\eta(X) - \eta(x)} } }{\P_X \lrb{ \bar{B}_r(x) }} 
\to 
    0 \;, \qquad \text{as } r \to 0^+ \;.
\]
\end{definition}
Note that \leb{} points have a natural probabilistic interpretation.
Indeed, the ratio in the previous definition, which we call \emph{\leb{} ratio}, is simply the expectation of $|\eta(X)-\eta(x)|$ conditioned to $X \in \bar{B}_r(x)$.

The goal is to characterize \leb{} points in terms of nearest neighbors, which we now introduce formally.
\begin{definition}[Nearest neighbor]
\label{n:tr-set-X'm}
For any point $x\in \cX$ and each $m\in \N$, we say that 
a measurable $X^x_m \colon \Omega \to \cX$ is a \emph{nearest neighbor} of $x$ (among $X_1, \l, X_m$) if, for all $\omega \in \Omega$,
\[
    X^x_m(\omega) 
\in 
    \argmin_{x' \in \{X_1(\omega), \l, X_m(\omega)\}} d(x,x') \;.
\]
\end{definition}
To avoid constant repetitions in our statements, we now fix some notation and the corresponding assumptions.

\begin{assumption}
\label{ass:base}
Until the end of Section~\ref{s:main-section}, we will assume the following:
\begin{enumerate}[topsep = 2pt, parsep = 2pt, itemsep = 2pt]
    \item $x\in \cX$ is a point in the support of $\P_X$;
    \item $\eta : \Omega \to \R$ is a bounded measurable function;
    \item for all $m\in \N$, $X^x_m$ is a nearest neighbor among $X_1, \l, X_m$ of $x$.
\end{enumerate}
\end{assumption}
\noindent For the sake of brevity, we denote $\bar{B}_r(x)$, $B_r(x)$, $S_r(x)$ simply by $\bar{B}_r$, $B_r$, $S_r$.

\subsection{\nn{} \texorpdfstring{$\implies$}{implies} \leb{}}
\label{s:main-section}

In this section we show that if $\eta(X^x_m)$ converges to $\eta(x)$ in $\cL^1$, then $x$ is a \leb{} point. 
We begin by giving a high-level overview of the proof of this result.

Note that, by definition of nearest neighbor $X^x_m$, for any measurable set $A\s \cX$,
\[
    \bcb{ X^x_m \in A \cap \bar{B}_r }
\supset
    \bigcup_{k=1}^m \bbrb{ \bcb{ X_k \in A \cap \bar{B}_r } \cap \bigcap_{i=1,i\neq k}^m \bcb{ X_i \notin \bar{B}_r } } \;,
\]
where the key observation is that the union on the right-hand side is disjoint.
Taking probabilities on both sides and integrating, one can show that
\[
    \frac{ \E \bsb{ \I_{\bar{B}_r} (X) \, \labs{ \eta(X) - \eta(x) } } }{ \P_X \brb{ \bar{B}_r } }
\le
    \frac{\E \bsb{ \labs{ \eta(X^x_m) - \eta(x) } }}{m \, \P_X (\bar{B}_r) \, \brb{1 - \P_X\brb{ \bar{B}_r } }^{m-1}}\;.
\]
This suggests a way to control \leb{} ratios with the $\cL^1$-distance between $\eta(X^x_m)$ and $\eta(x)$.
Tuning $m$ in terms of $r$ will lead to the result.

\begin{theorem}
\label{t:nn-then-leb}
If $\E \bsb{\babs{ \eta(X^x_m) - \eta(x) } } \to 0$ as $m \to \infty$, then $x$ is a \leb{} point.
\end{theorem}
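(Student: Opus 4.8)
The plan is to first establish, for every $r>0$ and every $m\in\N$, the pointwise inequality sketched in the overview, namely
\[
    \frac{ \E \bsb{ \I_{\bar{B}_r} (X) \, \babs{ \eta(X) - \eta(x) } } }{ \P_X \brb{ \bar{B}_r } }
\le
    \frac{\E \bsb{ \babs{ \eta(X^x_m) - \eta(x) } }}{m \, \P_X (\bar{B}_r) \, \brb{1 - \P_X\brb{ \bar{B}_r } }^{m-1}}\;,
\]
and then to tune $m=m(r)$ so that the right-hand side vanishes as $r\to 0^+$. For the inequality, fix $r,m$, set $g(x'):=\babs{\eta(x')-\eta(x)}$, and for $k\in\{1,\l,m\}$ put $E_k := \bcb{X_k\in\bar{B}_r}\cap\bigcap_{i\neq k}\bcb{X_i\notin\bar{B}_r}$. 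The events $E_1,\l,E_m$ are pairwise disjoint, and on $E_k$ the sample $X_k$ is the \emph{unique} minimizer of $d(x,\cdot)$ over $\{X_1,\l,X_m\}$ (every other sample lies at distance $>r\ge d(x,X_k)$), so $X^x_m=X_k$ there \emph{regardless of how ties are broken}. Hence $\I_{\bar{B}_r}(X^x_m)\,g(X^x_m)\ge\sum_{k=1}^m\I_{E_k}\,g(X_k)$; taking expectations, using $\I_{\bar{B}_r}(X^x_m)\le 1$ on the left (so that $\E[\I_{\bar{B}_r}(X^x_m)g(X^x_m)]\le\E[g(X^x_m)]=\E[\babs{\eta(X^x_m)-\eta(x)}]$), and using the i.i.d.\ structure on the right ($\E[\I_{E_k}g(X_k)]=\E[\I_{\bar{B}_r}(X)g(X)]\,(1-\P_X(\bar{B}_r))^{m-1}$ by independence), one obtains the claimed bound after dividing by $m\,\P_X(\bar{B}_r)(1-\P_X(\bar{B}_r))^{m-1}$, which is strictly positive in the regime we actually use (see below).

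For the tuning, write $p_r:=\P_X(\bar{B}_r)$; since $x\in\supp(\P_X)$ we have $p_r>0$, the map $r\mapsto p_r$ is non-decreasing, and $p_r\downarrow\P_X(\{x\})$ as $r\to 0^+$ by continuity of $\P_X$ from above together with $\bigcap_{r>0}\bar{B}_r=\{x\}$. If $\P_X(\{x\})>0$, then $x$ is a \leb{} point directly, without invoking the hypothesis: $\E[\I_{\bar{B}_r}(X)g(X)]\le 2\bno{\eta}_\infty\brb{p_r-\P_X(\{x\})}\to 0$, while $p_r\to\P_X(\{x\})>0$, so the \leb{} ratio tends to $0$. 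If instead $\P_X(\{x\})=0$, then $p_r\to 0$ (and in particular $p_r\in(0,1)$ for all small $r$, making the denominator above strictly positive there), and I take $m(r):=\bce{1/p_r}$. Then $m(r)\to\infty$ as $r\to 0^+$, while $m(r)\,p_r\ge 1$ and $m(r)-1\le 1/p_r$ give $(1-p_r)^{m(r)-1}\ge(1-p_r)^{1/p_r}\to e^{-1}$, so for all sufficiently small $r$ the quantity $m(r)\,p_r\,(1-p_r)^{m(r)-1}$ exceeds $\tfrac{1}{2}e^{-1}$. Substituting $m=m(r)$ into the key inequality yields $\E[\I_{\bar{B}_r}(X)g(X)]/p_r\le 2e\,\E\bsb{\babs{\eta(X^x_{m(r)})-\eta(x)}}$ for small $r$; since $m(r)\to\infty$, the hypothesis $\E[\babs{\eta(X^x_m)-\eta(x)}]\to 0$ forces the right-hand side to $0$ as $r\to 0^+$ (composition of limits). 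In either case $x$ is a \leb{} point.

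The one genuinely delicate choice is the coupling $m=m(r)$: we need $m(r)\to\infty$ so the hypothesis becomes usable, yet we also need $m(r)\,p_r\,(1-p_r)^{m(r)-1}$ bounded away from $0$, and these two requirements together pin down $m(r)\asymp 1/p_r$, which in turn is only viable when $p_r\to 0$. This is precisely why the atomic case $\P_X(\{x\})>0$ must be peeled off and settled separately (which, fortunately, is immediate). The remaining ingredients — disjointness of the $E_k$, the fact that the nearest neighbor is forced to equal $X_k$ on $E_k$ irrespective of the tie-breaking rule, and the i.i.d.\ factorization — are routine.
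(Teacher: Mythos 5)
Your proposal is correct: the key inequality is exactly the one the paper establishes (disjoint events $E_k$, uniqueness of the minimizer on $E_k$ so the bound is tie-breaking-free, i.i.d.\ factorization), and the atomic case $\P_X(\{x\})>0$ is peeled off in the same way. Where you genuinely diverge is in the tuning step. The paper fixes $\e>0$, takes $M$ from the hypothesis, studies the functions $f_m(t)=m\,t\,(1-t)^{m-1}$, shows their superlevel sets $I_m=\{f_m\ge 1/e\}$ are overlapping intervals whose union over $m\ge M$ is $(0,b_M]$, and then, for each small $r$, picks some $m\ge M$ with $\P_X(\bar{B}_r)\in I_m$; this requires the monotonicity/covering analysis of the $I_m$'s. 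You instead make the explicit choice $m(r)=\bce{1/p_r}$ with $p_r=\P_X(\bar{B}_r)$, use $m(r)p_r\ge 1$, $m(r)-1\le 1/p_r$ and $(1-p)^{1/p}\to e^{-1}$ to keep $f_{m(r)}(p_r)\ge \tfrac{1}{2e}$ for small $r$, and conclude by composition of limits (valid here since $m(r)\to\iop$ and the hypothesis is convergence of a sequence indexed by $m$). Your route is more elementary and shorter, at the harmless cost of a constant $2e$ in place of the paper's $e$; the paper's interval-covering argument is what later motivates its $\alpha$-sequence machinery, but for this theorem alone your explicit coupling $m(r)\asymp 1/p_r$ does the job, and your observation that the denominator $m\,p_r\,(1-p_r)^{m-1}$ is only exploited in the regime $p_r\in(0,1)$, $p_r\to 0$, correctly accounts for the one place where positivity could fail.
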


\begin{proof}
Note that, up to a rescaling, we can (and do) assume that $\babs{ \eta - \eta(x) } \le 1$.

If $\P(X = x ) > 0$, then, $x$ is a \leb{} point by the dominated convergence theorem (with dominating function $1$) and the monotonicity of the probability.

Assume then that $\P(X = x ) = 0$.
Note that, for all Borel subset $A$ of $(\cX,d)$, for all $r>0$, and each $m \in \N$,
\[
    \{ X^x_m \in A \cap \bar{B}_r \}
\supset
    \bigcup_{k=1}^m \lrb{ \{ X_k \in A \cap \bar{B}_r \} \cap \bigcap_{i=1,i\neq k}^m \{ X_i \notin \bar{B}_r \} } \;,
\]
where all elements in the union are mutually disjoint, then
\begin{align*}
    \P (X^x_m \in A \cap \bar{B}_r)
& \ge
    \P \lrb{ \bigcup_{k=1}^m \lrb{ \{ X_k \in A \cap \bar{B}_r \} \cap \bigcap_{i=1,i\neq k}^m \{ X_i \notin \bar{B}_r \} }}
\\
& =
    \sum_{k=1}^m \P \lrb{ \{ X_k \in A \cap \bar{B}_r \} \cap \bigcap_{i=1,i\neq k}^m \{ X_i \notin \bar{B}_r \} }
\\
& =
    m \, \P_X (A \cap \bar{B}_r) \, \brb{ 1 - \P_X (\bar{B}_r) }^{m-1}
\end{align*}
which in turn gives
\[
    \E \Bsb{ \I_{\bar{B}_r} (X^x_m) \, \babs{ \eta(X^x_m) - \eta(x) } }
\ge
    m \, \E \Bsb{ \I_{\bar{B}_r} (X) \, \babs{ \eta(X) - \eta(x) } } \, \brb{1 - \P_X(\bar{B}_r) }^{m-1}\;,
\]
which rearranging, upper bounding $\I_{\bar{B}_r} (X^x_m) $ with $1$, and dividing both sides by $\P_X (\bar{B}_r)$, yields
\begin{equation}
    \label{e:vs}
    \frac{ \E \Bsb{ \I_{\bar{B}_r} (X) \, \babs{ \eta(X) - \eta(x) } } }{ \P_X (\bar{B}_r)}
\le
    \frac{\E \Bsb{ \babs{ \eta(X^x_m) - \eta(x) } }}{m \, \P_X (\bar{B}_r) \, \brb{1 - \P_X(\bar{B}_r) }^{m-1}}\;,
\end{equation}
if $\P_X(\bar{B}_r)< 1$.
We will show that the right hand side vanishes as $m$ approaches $\iop$.
Fix any $\e > 0$. By assumption there exists $M \in \N$ such that, for all $m \in \N$, $m \ge M$, 
\begin{equation}
    \label{e:vs-2}
    \E \Bsb{ \babs{ \eta(X^x_m) - \eta(x) } } \le \frac{\e}{e} \;.
\end{equation}
For each $m \in \N$, define the (smooth) auxiliary function
\begin{align*}
    f_m \colon [0,1] & \to [0,\iop)\;,\\
    t & \mapsto m \, t \, (1-t)^{m-1}\;.
\end{align*}
By studying the sign of its first derivative, we conclude \aaa{that}, for each $m\in \N$, $f_m$ is increasing on $[0, 1/m]$, it is decreasing on $[1/m, 1]$, and 
\[
    \max_{t \in [0,1]} f_m(t)
=
    f_m \lrb{ \frac{1}{m} }
=
    \lrb{ 1-\frac{1}{m} } ^{m-1}
> 
    \frac{1}{e}\;.
\]
Hence, the superlevel set $\{f_m \ge 1/e\}$ is a non-empty and closed subinterval of $[0,1]$. 
For all $m\in \N$, we let 
\begin{equation}
    \label{e:vs-3}
    I_m:= [a_m, b_m] := \{f_m \ge 1/e\} \;.
\end{equation}
Note that for all $m\in \N$, $b_{m+1} \in I_m$, i.e., that $f_m(b_{m+1}) \ge 1/e$. 
Indeed, since for all $m\in \N$, $f_{m+1}(b_{m+1}) = 1/e$ and $b_{m+1} \ge 1/(m+1)$, we have that
\begin{align*}
    f_m(b_{m+1}) - \frac{1}{e}
&=
    f_m(b_{m+1}) - f_{m+1} (b_{m+1})
\\
&=
    m \, b_{m+1} \, ( 1 - b_{m+1})^{m-1}  - (m+1) \, b_{m+1} \, ( 1 - b_{m+1})^{m}
\\
&=
    b_{m+1}(1-b_{m+1})^{m-1}\brb{(m+1) \,b_{m+1}-1}
\\
&\ge
    b_{m+1}(1-b_{m+1})^{m-1}\lrb{(m+1) \,\frac{1}{m+1} -1}
= 0 \;.
\end{align*}
\aaa{This} implies that for all $m\in \N$, $a_m \le b_{m+1} \le b_m$, thus $I_{m} \cap I_{m+1} \neq \varnothing$ and $(b_m)_{m\in \N}$ is non-increasing.
Finally, $a_m \to 0$ as $m \to \iop$, since for all $m\in \N$, $a_m \le 1/m$.
Hence
\[
    \bigcup_{m \in \N, m \ge M} I_m = (0,b_M] \;.
\]
Let $\delta > 0$ such that for each $r \in (0,\delta)$ we have that \aaa{$\P_X(\bar{B}_r) \in (0, b_M] $.}
Thus, for each $r \in (0,\delta)$, there exists $m \in \N$ such that $m \ge M$ and $\P_X(\bar{B}_r) \in I_m$, yielding
\begin{multline*}
    \frac{ \E \Bsb{ \I_{\bar{B}_r} (X) \, \babs{ \eta(X) - \eta(x) } } }{ \P_X (\bar{B}_r)}
\overset{\eqref{e:vs}}{\le}
    \frac{\E \Bsb{ \babs{ \eta(X^x_m) - \eta(x) } }}{m \, \P_X (\bar{B}_r) \, \brb{1 - \P_X(B_r) }^{m-1}}
\\
=
    \frac{\E \Bsb{ \babs{ \eta(X^x_m) - \eta(x) } }}{f_m(\P_X(\bar{B}_r))}
\overset{\eqref{e:vs-3}}{\le}
    e \, \E \Bsb{ \babs{ \eta(X^x_m) - \eta(x) } }
\overset{\eqref{e:vs-2}}{\le}
    \e\;.
\end{multline*}
Being $\e$ arbitrary, we conclude that $x$ is a \leb{} point.
\end{proof}

\subsection{\leb{} \texorpdfstring{$\implies$}{implies} \nn{} (sometimes)}
\label{s:main-section-part-2}

In this section we will assume that $x$ is a \leb{} point and study when this implies that $\eta(X^x_m)$ converges to $\eta(x)$ in $\cL^1$. 

We begin by addressing two trivial cases. 
The first one is when $x$ is an atom for $\P_X$. 
In this case, the result is trivialized by the fact that $X^x_m$ becomes eventually \emph{equal} to $x$ (almost surely).
The second one is when $\E \bsb{ \I_{\bar{B}_r}(X) \, \babs{\eta(X) - \eta(x)} } = 0$ for some $r>0$.
In this case, since $x$ belongs to the support of $\P_X$, the result is trivialized by the fact that $X^x_m$ will eventually fall inside $\bar{B}_r$ (almost surely).
These ideas are made rigorous in the following lemma.
\begin{lemma}
\label{l:trivia-cases}
If one of the two following conditions is satisfied:
\begin{enumerate} [topsep = 2pt, parsep = 2pt, itemsep = 2pt]
    \item \label{i:trivial-1} $\P ( X = x )>0$;
    \item \label{i:trivial-2} there exists $r>0$ such that $\E\bsb{ \I_{\bar{B}_r}(X) \, \babs{\eta(X) - \eta(x)} } = 0$;
\end{enumerate}
 then $\E \bsb{ \babs{ \eta(X^x_m) - \eta(x) } } \to 0$ as $m\to \iop$.
\end{lemma}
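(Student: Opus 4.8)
The plan is to prove a statement much stronger than what is asked: in each of the two cases I will show that, almost surely, $\eta(X^x_m) = \eta(x)$ for \emph{all sufficiently large} $m$. Since $\eta$ is bounded, $\babs{\eta(X^x_m) - \eta(x)}$ is dominated by a constant, so once this almost sure eventual vanishing is established, the dominated convergence theorem immediately gives $\E\bsb{\babs{\eta(X^x_m) - \eta(x)}} \to 0$. Hence the whole lemma reduces to producing, on a probability-one event, a (random) index $K$ after which the nearest neighbor $X^x_m$ necessarily lands in a set on which $\eta$ coincides with $\eta(x)$; and this will be forced regardless of how ties are broken.

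For Case~\ref{i:trivial-1} the relevant set is the singleton $\{x\}$. Since $X_1, X_2, \l$ are i.i.d.\ copies of $X$ and $\P(X_k = x) = \P(X = x) > 0$, we have $\P\brb{\forall k \in \N,\, X_k \neq x} = \prod_{k}\brb{1 - \P(X=x)} = 0$ (alternatively, the events $\{X_k = x\}$ are independent with divergent probability sum, so the second Borel--Cantelli lemma applies). Thus, almost surely, some $X_K$ equals $x$; then for every $m \ge K$ we have $x \in \{X_1, \l, X_m\}$, whence $d\brb{x, X^x_m} = \min_{k \le m} d(x, X_k) = 0$, and since $d$ is a metric this forces $X^x_m = x$, independently of the tie-breaking rule. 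So $\babs{\eta(X^x_m) - \eta(x)} = 0$ for all $m \ge K$ on this event, and I conclude as above.

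For Case~\ref{i:trivial-2}, fix $r > 0$ with $\E\bsb{\I_{\bar{B}_r}(X)\,\babs{\eta(X) - \eta(x)}} = 0$. Unwinding the hypothesis, the set $N := \bcb{x' \in \bar{B}_r \mid \eta(x') \neq \eta(x)}$ is measurable (because $\eta$ is) and $\P_X$-null. I would then intersect two almost sure events. First, since $\P_X(N) = 0$, almost surely \emph{no} $X_k$ ever falls in $N$ (a countable union of null events). Second, since $x \in \supp\brb{\P_X}$ we have $\P_X\brb{\bar{B}_r} > 0$, so $\P\brb{\forall k \in \N,\, X_k \notin \bar{B}_r} = \prod_k \brb{1 - \P_X(\bar{B}_r)} = 0$, i.e.\ almost surely some $X_K \in \bar{B}_r$. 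On the intersection of these events: for every $m \ge K$ the minimum distance $\min_{k \le m} d(x, X_k)$ is at most $d(x, X_K) \le r$, so \emph{every} nearest neighbor $X^x_m$ lies in $\bar{B}_r$; and by the first event it avoids $N$; hence $X^x_m \in \bar{B}_r \m N$ and $\eta(X^x_m) = \eta(x)$. So again $\babs{\eta(X^x_m) - \eta(x)} = 0$ for all large $m$, almost surely, and dominated convergence finishes the proof.

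I do not expect a genuine obstacle — these are precisely the two degenerate situations the surrounding discussion flags as trivial. The only point that needs a little care is in Case~\ref{i:trivial-2}: it is not enough that $X^x_m$ eventually enters $\bar{B}_r$; it must enter the \emph{sub}set of $\bar{B}_r$ on which $\eta$ actually equals $\eta(x)$. This is why the null-set-avoidance event must be used together with the hitting event, and why the support hypothesis $x \in \supp\brb{\P_X}$ is invoked precisely to guarantee that $\bar{B}_r$ is hit.
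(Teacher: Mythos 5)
Your proof is correct, and it gets to the same two degenerate events as the paper, but by a somewhat different mechanism. The paper argues directly in $\cL^1$ at each fixed $m$: in Case~\ref{i:trivial-1} it bounds $\E \bsb{ \babs{ \eta(X^x_m) - \eta(x) } } \le 2 \lno{\eta}_\iop \, \P(X^x_m \neq x) = 2\lno{\eta}_\iop \brb{1-\P(X=x)}^m$, and in Case~\ref{i:trivial-2} it first kills the contribution of $\bar{B}_r$ by noting that $\P_{X^x_m}$ is absolutely continuous with respect to $\P_X$ (so $\E\bsb{\I_{\bar{B}_r}(X^x_m)\babs{\eta(X^x_m)-\eta(x)}}=0$), and then bounds the remaining term by $2\lno{\eta}_\iop\brb{1-\P_X(\bar{B}_r)}^m$. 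You instead upgrade the same events to an almost-sure statement ---eventually some sample hits $\{x\}$, resp.\ some sample hits $\bar{B}_r$ while no sample ever hits the $\P_X$-null set $N$ on which $\eta\neq\eta(x)$--- deduce that $\eta(X^x_m)=\eta(x)$ for all large $m$ almost surely, and finish with dominated convergence. Both routes are sound and hinge on independence plus $x\in\supp(\P_X)$; the paper's computation has the minor advantage of giving an explicit exponential rate, while yours avoids invoking the absolute continuity of $\P_{X^x_m}$ with respect to $\P_X$ (you replace it by a union bound over the null set, which is essentially the same fact) and correctly flags the one subtle point in Case~\ref{i:trivial-2}, namely that entering $\bar{B}_r$ alone is not enough and the null-set-avoidance event is needed as well.
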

\begin{proof}
If condition~\ref{i:trivial-1} is satisfied, we have that
\begin{align*}
    \E \Bsb{\babs{ \eta(X^x_m) - \eta(x) } } 
& = 
    \E \Bsb{ \I_{ \cX \m \{x\} } \brb{ X^x_m } \, \babs{ \eta(X^x_m) - \eta(x) } }  
\le
    2 \, \lno{ \eta }_\iop \, \P(X^x_m \neq x)
\\
&=
    2 \, \lno{ \eta }_\iop \, \P \lrb{ \bigcap_{k = 1}^m \{ X_k \neq x \} }
=
    2 \, \lno{ \eta }_\iop \, \prod_{k = 1}^m \P \lrb{ X_k \neq x }
\\
&=
    2 \, \lno{ \eta }_\iop \, \brb{ 1 - \P(X = x)}^m \to 0 \;,
    \qquad \text{ as } m \to \infty \;.
\end{align*}
Assume now that condition~\ref{i:trivial-2} is satisfied.
Then, being $\P_{X^x_m}$ absolutely continuous with respect to $\P_X$, it follows that $\E \bsb{ \I_{\bar{B}_r}(X^x_m) \, \babs{\eta(X^x_m) - \eta(x)} } = 0$. Since $x \in \supp(\P_X)$, we have that $\P_X(\bar{B}_r ) > 0$, which in turn gives
\begin{align*}
    \E \Bsb{\babs{ \eta(X^x_m) - \eta(x) } } 
& = 
    \E \Bsb{\I_{\bar{B}_r^c}(X^x_m)\babs{ \eta(X^x_m) - \eta(x) } } 
\le 
    2 \, \lno{ \eta }_\iop \, \P \brb{ X^x_m \notin \bar{B}_r }
\\ 
&= 
    2 \, \lno{ \eta }_\iop \, \P \lrb{ \bigcap_{k = 1}^m \lcb{ X_k \notin \bar{B}_r } } 
= 
    2 \, \lno{ \eta }_\iop \, \prod_{k = 1}^m \P \lrb{ X_k \notin \bar{B}_r } 
\\
&= 
    2 \, \lno{ \eta }_\iop \, \brb{ 1 - \P_X(\bar{B}_r )}^m \to 0 \;,
    \qquad \text{ as } m \to \infty \;. \qedhere
\end{align*}
\end{proof}
By the previous lemma, without loss of generality, we can (and do) assume that none of the two previous conditions hold.
\begin{assumption}
\label{a:non-trivial}
Until the end of this section, we will assume the following:
\begin{enumerate} [topsep = 2pt, parsep = 2pt, itemsep = 2pt]
    \item \label{i:non-trivial-1} $\P ( X = x )=0$;
    \item \label{i:non-trivial-2} $\E \bsb{ \I_{\bar{B}_r}(X) \, \babs{\eta(X) - \eta(x)} } > 0$, for all $r>0$.
\end{enumerate}
\end{assumption}

We now proceed to estimate the expectation $\E \bsb{\babs{ \eta(X^x_m) - \eta(x) } }$.
Due to the nature of nearest neighbors, one could figure that $\E \bsb{\babs{ \eta(X^x_m) - \eta(x) } }$ behaves differently if $X^x_m$ is close by, or far away from $x$.
This idea leads to the splitting of the expectation $\E \bsb{\babs{ \eta(X^x_m) - \eta(x) } }$ on the region in which $X^x_m$ belongs to a closed ball $\bar{B}_r$ and its complement, i.e. on $\{X^x_m \in \bar{B}_r\}$ and $\{X^x_m \in \bar{B}_r^c\}$. Since ties might raise issues on spheres, we further split $\{X^x_m \in \bar{B}_r\}$ into $\{X^x_m \in B_r\}$ and $\{X^x_m \in S_r\}$.

The next lemma gives estimates of the three terms determined by this splitting.

\begin{lemma}
\label{l:geom-idea}
For all $r>0$ and all $m\in \N$,
\begin{align}
    \label{e:term-interior}
    \E \Bsb{\I_{ B_r }(X^x_m) \babs{ \eta(X^x_m) - \eta(x) } }
& \le
    m \, \E \Bsb{\I_{ B_r }(X) \babs{ \eta(X) - \eta(x) } } \;,
\\
    \label{e:term-sphere}
    \E \Bsb{\I_{ S_r }(X^x_m) \babs{ \eta(X^x_m) - \eta(x) } }
&\le
    2 \,\lno{ \eta }_\infty \, m \, \P_X \lrb{ S_{r} } \, \exp \brb{ -(m-1) \, \P_X \lrb{ B_{r} } } \;,
\\
    \label{e:term-exterior}
    \E \Bsb{\I_{ B_r^c }(X^x_m) \babs{ \eta(X^x_m) - \eta(x) } }
& \le
    2 \, \lno{ \eta }_\infty \, \exp \brb{ -m \, \P_X(\bar{B}_r) } \aaa{\;.}
\end{align}
\end{lemma}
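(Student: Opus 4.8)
The plan is to establish the three estimates one at a time; each follows from a union bound over which of the sample points $X_1,\l,X_m$ realizes the nearest neighbor of $x$, combined (for the last two) with the elementary inequality $1-t\le e^{-t}$ and the crude pointwise bound $\babs{\eta(X^x_m)-\eta(x)}\le 2\,\lno{\eta}_\infty$.

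For \eqref{e:term-interior} I would start from the fact that $X^x_m(\omega)\in\bcb{X_1(\omega),\l,X_m(\omega)}$ for every $\omega\in\Omega$: whatever value $X^x_m$ takes, it coincides with $X_k$ for some index $k$, so, since the summands below are nonnegative,
\[
    \I_{B_r}(X^x_m)\,\babs{\eta(X^x_m)-\eta(x)}
\le
    \sum_{k=1}^m \I_{B_r}(X_k)\,\babs{\eta(X_k)-\eta(x)}\;.
\]
Taking expectations and using that $X_1,\l,X_m$ are i.i.d.\ copies of $X$ turns the right-hand side into $m\,\E\bsb{\I_{B_r}(X)\,\babs{\eta(X)-\eta(x)}}$, which is exactly \eqref{e:term-interior}.

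For \eqref{e:term-sphere} and \eqref{e:term-exterior}, after bounding $\babs{\eta(X^x_m)-\eta(x)}$ by $2\,\lno{\eta}_\infty$ it remains to estimate $\P(X^x_m\in S_r)$ and the probability that $X^x_m$ lies outside the closed ball $\bar B_r$, and here I would use that $X^x_m$ minimizes $d(x,\cdot)$ over the sample. If $X^x_m\in S_r$ then $d(x,X^x_m)=r$, so no sample point lies in the open ball $B_r$ while at least one lies on $S_r$; a union bound over the index of such a point gives $\P(X^x_m\in S_r)\le m\,\P_X(S_r)\,\brb{1-\P_X(B_r)}^{m-1}$, and $1-t\le e^{-t}$ then yields \eqref{e:term-sphere}. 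If instead $X^x_m$ lies outside $\bar B_r$, then \emph{every} sample point lies outside $\bar B_r$ (and conversely), so that event has probability exactly $\brb{1-\P_X(\bar B_r)}^m\le e^{-m\,\P_X(\bar B_r)}$, which gives \eqref{e:term-exterior}.

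I do not expect a genuine obstacle here: the argument is just union bounds and the standard exponential inequality, and it needs neither Assumption~\ref{a:non-trivial} nor any structure on $(\cX,d)$. The one point that requires care is the bookkeeping of open balls, closed balls, and spheres when passing from ``$X^x_m$ lies in region $R$'' to an event described purely in terms of $X_1,\l,X_m$ --- in particular, a nearest neighbor landing on the sphere $S_r$ forbids any sample point inside the \emph{open} ball $B_r$ (this is what produces the exponent $m-1$, rather than $m$, in \eqref{e:term-sphere}), whereas a nearest neighbor outside the \emph{closed} ball $\bar B_r$ forces all $m$ sample points to be outside $\bar B_r$.
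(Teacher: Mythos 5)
Your proposal is correct and follows essentially the same route as the paper: the interior bound via a union bound over the sample index (the paper phrases it as the measure domination $\P_{X^x_m}\le m\,\P_X$, which is just the integrated form of your pointwise inequality), and the sphere and exterior bounds via the same event inclusions, independence, and $1-t\le e^{-t}$. One remark: exactly as in the paper's own proof, your argument for \eqref{e:term-exterior} bounds the expectation on the event $\{X^x_m\in\bar{B}_r^c\}$; the indicator $\I_{B_r^c}$ printed in the lemma should be read as $\I_{\bar{B}_r^c}$ (only the closed-ball version follows from this argument, and it is the version actually used later, e.g.\ in Theorem~\ref{t:victory-spheres}).
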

\begin{proof}
Fix any $r>0$ and $m\in \N$.

We begin by proving inequality \eqref{e:term-interior}.
For all Borel sets $A$ of $(\cX, d)$, if $X^x_m \in A$, then at least one of the $X_i$'s belongs to $A$, i.e.,
\[ 
    \{ X_i \in A \} 
\s
    \bigcup_{k=1}^m \{ X_k \in A \} \;.
\]
This yields
\[
    \P (X^x_m \in A)
\le
    \P\lrb{ \bigcup_{k=1}^m \{ X_k \in A \} }
\le
    \sum_{k=1}^m \P ( X_k \in A )
=
    m \P(X \in A) \;,
\]
hence $\P_{X^x_m} \le m \, \P_X$, which in turn gives, for any measurable function $f\colon \cX \to [0,\iop]$, that $\E \bsb{ f(X^x_m) } \le m \E \bsb{ f(X) } $. 
Then
\[
    \E \Bsb{\I_{ B_r }(X^x_m) \, \babs{ \eta(X^x_m) - \eta(x) } }
\le
    m \, \Bsb{\I_{ B_r }(X) \, \babs{ \eta(X) - \eta(x) } }\;.
\]
This proves \eqref{e:term-interior}.

We now prove inequality \eqref{e:term-sphere}. Note that if $X^x_m \in S_r$ then at least one of the $X_k$'s belongs to $S_r$, while the others can't fall in $B_r$ (and vice versa), i.e.,
\[
   \lcb{  X^x_m \in S_r }
    \aaa{=}
    \bigcup_{k=1}^m\lrb{ \bcb{ X_k \in S_r} \cap \bigcap_{i=1, i \neq k}^{m} \bcb{ X_i \notin B_r } } \;.
\]
This yields
\begin{multline*}
    \E \Bsb{\I_{S_r} \brb{X^x_m} \, \babs{ \eta(X^x_m) - \eta(x) } } 
\le 
    2 \,\lno{ \eta }_\infty \, \P \brb{ X^x_m \in S_r }
\\
\begin{aligned}
&\le 
    2 \,\lno{ \eta }_\infty \, \P \lrb{ \bigcup_{k=1}^{m} \lrb{ \bcb{ X_k \in S_r} \cap \bigcap_{i=1, i \neq k}^{m} \bcb{ X_i \notin B_r } } } 
\\
&\le 
    2 \,\lno{ \eta }_\infty \, \sum_{k=1}^{m} \lrb{ \P \lrb{ X_k \in S_r } \prod_{i=1, i \neq k}^{m} \P \lrb{ X_i \notin B_r } } 
\\
&= 
    2 \,\lno{ \eta }_\infty \, m \, \P_X \lrb{ S_r } \, \brb{ 1 - \P_X \lrb{ B_r } }^{m-1}  
\\ 
&\le 
    2 \,\lno{ \eta }_\infty \, m \, \P_X \lrb{ S_r } \, \exp \brb{ -(m-1) \, \P_X \lrb{ B_r } } \;.
\end{aligned}
\end{multline*}
This proves \eqref{e:term-sphere}.

Finally, we prove inequality \eqref{e:term-exterior}.
Note that $X^x_m \notin \bar{B}_r$ is equivalent to the fact that none of the $X_k$'s belong to $\bar{B}_r$, i.e.,
\[
    \{ X^x_m \notin \bar{B}_r \}
= 
    \bigcap_{k = 1}^m \lcb{ X_k \notin \bar{B}_r } \;,
\]
then
\begin{multline*}
    \E \Bsb{\I_{\bar{B}_r^c} (X^x_m) \, \babs{ \eta(X^x_m) - \eta(x) } }
\le
    2 \,\lno{ \eta }_\infty \, \P \brb{ X^x_m \notin \bar{B}_r }
\\
\begin{aligned}
&= 
    2 \,\lno{ \eta }_\infty \, \P \lrb{ \bigcap_{k = 1}^m \lcb{ X_k \notin \bar{B}_r } } 
= 
    2 \,\lno{ \eta }_\infty \, \prod_{k = 1}^m \P \lrb{ X_k \notin \bar{B}_r } 
\\ &
= 
    2 \,\lno{ \eta }_\infty \, \brb{ 1 - \P_X(\bar{B}_r )}^m 
\le 
    2 \,\lno{ \eta }_\infty \, \exp \brb{ -m \, \P_X(\bar{B}_r) } \;.
\end{aligned}
\end{multline*}
This concludes the proof.
\end{proof}

We now give a high-level overview of the ideas used to prove the $\cL^1$-convergence of $\eta(X^x_m)$ to $\eta(x)$. 
For the sake of simplicity, assume for now that the cumulative function of $d(X,x)$ is continuous around $0$. 
This measure continuity-condition is equivalent to:
\begin{equation}
\label{e:cont-condition}
\exists R >0, \, \forall r\in (0, R), \ \P_X(S_r) = 0 \;.
\end{equation}
In this case, the upper bound in \eqref{e:term-sphere} is always $0$. Therefore,  Lemma~\ref{l:geom-idea} implies, for each $m\in \N$ and all \aaa{$r\in(0,R)$}, that
\begin{equation}
    \label{e:estimate-intuition}
    \E \Bsb{ \babs{ \eta(X^x_m) - \eta (x) } }
\le
    m \, \E \Bsb{\I_{ B_r }(X) \babs{ \eta(X) - \eta(x) } }
    +
    2 \, \lno{ \eta }_\infty \, \exp \brb{ -m \, \P_X(\bar{B}_r) } \;.
\end{equation}
This bound might seem pointless (under \aaa{Assumption}~\ref{a:non-trivial}) since for all \emph{fixed} $r>0$, the first term on the right hand side diverges as $m$ approaches infinity.
The idea is then to pick \aaa{a sequence} of radii $r=r_m$ that vanishes in a way that this first term goes to zero. 
This is easily achievable, e.g. by selecting $(r_m)_{m \in \N}$ that \aaa{decreases} to zero very quickly. 
However, in this case it is now the second term that may not vanish, since $m \P_X(\bar{B}_{r_m})$ may not \aaa{diverge} to $\infty$. 
The key is then to find a trade-off between the two competing terms.
This would be achieved if one could pick a sequence $(r_m)_{m\in \N}$ so that
\begin{equation}
    \label{e:tuning-r}
    m
    \,
    \sqrt{ \E \bsb{ \I_{\bar{B}_{r_m}(x)}(X) \, \labs{\eta(X) - \eta(x)} } } 
    \, 
    \sqrt{ \P_X \brb{ \bar{B}_{r_m}} }
=
    1 \;.
\end{equation}
Indeed, in this case, inequality \eqref{e:estimate-intuition} together with the measure-continuity condition \eqref{e:cont-condition} yields
\begin{align*}
    \E \Bsb{ \babs{ \eta(X^x_m) - \eta (x) } }
&
\le
    \lrb{ \frac{\E \bsb{\I_{ \bar{B}_{r_m} }(X) \babs{ \eta(X) - \eta(x) } }} 
    {\P_X \brb{ \bar{B}_{r_m}}} }^{1/2}
\\
&
    \quad +
    2 \, \lno{ \eta }_\infty \, \exp \lrb{ -\lrb{ \frac{\E \bsb{\I_{ \bar{B}_{r_m} }(X) \babs{ \eta(X) - \eta(x) } }} 
    {\P_X \brb{ \bar{B}_{r_m}}} }^{-1/2} }
\end{align*}
which vanishes if $x$ is a \leb{} point.

Under the current assumptions, one can indeed show the existence of a sequence $(r_m)_{m\in \N}$ satisfying \eqref{e:tuning-r}.
However, if the measure-continuity condition \eqref{e:cont-condition} does not hold, this might no longer be the case.
In order to address more general cases, we introduce the following definition.
\begin{definition}[$\alpha$-sequence]
\label{d:alpha-seq-1}
Fix any $\alpha \in (0,1)$. For all $r>0$, define
\[
    M_\alpha(r)
:=
    \bbrb{ \E \Bsb{\aaa{\I_{\bar{B}_{r}}(X)} \, \babs{ \eta(X) - \eta(x) } }  }^{\alpha} 
    \Brb{ \P_X \brb{ \bar{B}_r} }^{1-\alpha} \;.
\]
Define $m_1 := \bce{ 1/ M_\alpha(1) }$. 
For all $m \in \N$ such that $m \ge m_1$, define
\[
    r_m
:=
    \sup \lcb{ r > 0 \mid M_\alpha(r) < \frac{1}{m} } \;.
\]
We say that $(r_m)_{m\in \N, m \ge m_1}$ is the \emph{$\alpha$-sequence} (for $\eta$ with respect to $\P_X$) at $x$.
\end{definition}
The following lemma states several useful properties of $\alpha$-sequences.
\begin{lemma}
\label{l:incistato}
Fix any $\alpha\in (0,1)$. Then, the $\alpha$-sequence $(r_m)_{m\in \N, m \ge m_1}$ is a well-defined vanishing sequence of strictly positive numbers.
Moreover, for all $m\in \N$ with $m\ge m_1$, we have
\begin{align}
    \label{e:m-2}
    m & \le 
    \frac{1}{\E \Bsb{\I_{ B_{r_m} }(X) \babs{ \eta(X) - \eta(x) } }}
    \lrb{ 
        \frac{
            \E \Bsb{ \I_{B_{r_m}}(X) \, \babs{\eta(X) - \eta(x)} }
        }
        {
            \P_X \brb{ B_{r_m}}
        } 
    }^{1-\alpha}  
\\
    \label{e:m-1}
    m & \ge
    \frac{1}{\P_X \lrb{ \bar{B}_{r_m}}}
    \lrb{ 
        \frac
        {
            \E \bsb{ \I_{\bar{B}_{r_m}}(X) \, \babs{\eta(X) - \eta(x)} }  
        }
        {
            \P_X \lrb{ \bar{B}_{r_m}}
        }
    }^{-\alpha} 
\end{align}
where we stress that the balls $B_{r_m}$ in \eqref{e:m-2} are open.
\end{lemma}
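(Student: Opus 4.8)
The plan is to leverage the monotonicity and one-sided continuity of the function $r \mapsto M_\alpha(r)$, together with the strict positivity of its two factors. I would first observe that, by Assumption~\ref{a:non-trivial}.\ref{i:non-trivial-2} and the fact that $x\in\supp(\P_X)$, both $\E\bsb{\I_{\bar{B}_r}(X)\babs{\eta(X)-\eta(x)}}$ and $\P_X(\bar{B}_r)$ are strictly positive for every $r>0$, so $M_\alpha(r)>0$; in particular $M_\alpha(1)>0$ and $m_1$ is well defined. Since both factors are non-decreasing in $r$, so is $M_\alpha$; and since $\bar{B}_r\downarrow\{x\}$ as $r\downarrow 0$, one has $\P_X(\bar{B}_r)\to\P(X=x)=0$ (Assumption~\ref{a:non-trivial}.\ref{i:non-trivial-1}) and, a fortiori, $\E\bsb{\I_{\bar{B}_r}(X)\babs{\eta(X)-\eta(x)}}\le 2\lno{\eta}_\infty\P_X(\bar{B}_r)\to 0$, whence $M_\alpha(r)\to 0$ as $r\to 0^+$. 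Therefore, for every $m\ge m_1$, the set $\bcb{r>0 \mid M_\alpha(r)<1/m}$ is non-empty (it contains all sufficiently small $r$) and, by monotonicity together with $M_\alpha(1)\ge 1/m_1\ge 1/m$, is bounded above by $1$; hence $r_m$ is well defined and $0<r_m\le 1$.

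To see that $r_m\to 0$, I would use the two elementary consequences of the definition of supremum: $M_\alpha(r)<1/m$ for every $r<r_m$, and $M_\alpha(r)\ge 1/m$ for every $r>r_m$. Fixing $\e>0$, the value $M_\alpha(\e/2)$ is strictly positive, so as soon as $m>1/M_\alpha(\e/2)$ the first consequence rules out $r_m\ge\e$; thus $r_m<\e$ for all large $m$.

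For the two displayed inequalities I would recast them as statements about $M_\alpha$. Writing $a:=\E\bsb{\I_{\bar{B}_{r_m}}(X)\babs{\eta(X)-\eta(x)}}>0$ and $p:=\P_X(\bar{B}_{r_m})>0$, a one-line computation shows that the right-hand side of \eqref{e:m-1} equals $1/\brb{a^\alpha p^{1-\alpha}}=1/M_\alpha(r_m)$, so \eqref{e:m-1} amounts to $M_\alpha(r_m)\ge 1/m$. This I would obtain from right-continuity of $M_\alpha$: because $\bigcap_{r>r_m}\bar{B}_r=\bar{B}_{r_m}$, continuity from above of the finite measure $\P_X$ and dominated convergence (dominating function $2\lno{\eta}_\infty$) make both factors — hence $M_\alpha$ — right-continuous, and letting $r\downarrow r_m$ in $M_\alpha(r)\ge 1/m$ gives $M_\alpha(r_m)\ge 1/m$. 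For \eqref{e:m-2}, writing $a':=\E\bsb{\I_{B_{r_m}}(X)\babs{\eta(X)-\eta(x)}}$ and $p':=\P_X(B_{r_m})$ (again strictly positive, since each dominates the corresponding quantity for any closed ball $\bar{B}_s$ with $0<s<r_m$), the right-hand side equals $1/\brb{(a')^\alpha(p')^{1-\alpha}}$, so \eqref{e:m-2} amounts to $(a')^\alpha(p')^{1-\alpha}\le 1/m$. Here I would use $\bigcup_{s<r_m}\bar{B}_s=B_{r_m}$: monotone convergence and continuity from below identify $a'$ and $p'$ as the left limits at $r_m$ of the closed-ball quantities, so $(a')^\alpha(p')^{1-\alpha}=\lim_{s\uparrow r_m}M_\alpha(s)\le 1/m$, the last inequality because $M_\alpha(s)<1/m$ for every $s<r_m$.

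The computations and the convergence theorems invoked are routine; the only delicate bookkeeping is keeping track of open versus closed balls — specifically the metric identities $\bigcap_{r>r_0}\bar{B}_r(x)=\bar{B}_{r_0}(x)$ and $\bigcup_{s<r_0}\bar{B}_s(x)=B_{r_0}(x)$, which are exactly what make $M_\alpha$ right-continuous and pin down its left limit at $r_m$ as the open-ball expression appearing in \eqref{e:m-2} — together with verifying that no denominator vanishes, which is precisely where Assumption~\ref{a:non-trivial} and $x\in\supp(\P_X)$ enter.
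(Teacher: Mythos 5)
Your proposal is correct and follows essentially the same route as the paper: monotonicity of $M_\alpha$, right-continuity via continuity from above, identification of the left limit at $r_m$ with the open-ball expression via continuity from below, and the two sides of the supremum definition giving \eqref{e:m-1} and \eqref{e:m-2} after the same algebraic rewriting. The extra checks you add (boundedness of the defining set, positivity of the open-ball quantities, the explicit $\e$-argument for $r_m\to 0$) are details the paper leaves implicit, not a different method.
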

\begin{proof}
Note that the function $r\mapsto M_\alpha(r)$ is non-decreasing, right-continuous (by the continuity from above of finite measures), and for each $R>0$, it satisfies
\begin{equation}
    \label{e:M(r)}
    M_\alpha(r) 
\uparrow 
    \bbrb{ 
        \E \Bsb{ \I_{B_R}(X) \, \babs{\eta(X) - \eta(x)} }  
    }^{\alpha} 
    \Brb{ 
        \P_X \brb{ B_R} 
    }^{1-\alpha} \;, 
    \quad \text{as } r \uparrow R
\end{equation}
(by the continuity from below of measures), where we stress that the balls $B_R$ in the previous formula are open. Furthermore, by Assumption~\ref{a:non-trivial} and the fact that $x$ belongs to the support of $\P_X$, we have that $0<M_\alpha(r) \downarrow 0$ as $r \downarrow 0$.
This implies that the $\alpha$-sequence $(r_m)_{m\in \N, \, m \ge m_1}$ is well-defined, strictly positive, and $r_m \downarrow 0$ as $m \uparrow \iop$.

Thus, for each $m \in \N$ such that $m \ge m_1$, if $r \in (0, r_m)$, then $M_\alpha(r) < 1/m$, which in turn yields
\[
    \lim_{r \uparrow r_m} M_\alpha(r) \le 1/m \;.
\]
Using \eqref{e:M(r)} and rearranging gives \eqref{e:m-2}.

Analogously, for each $m \in \N$ such that $m \ge m_1$, if $r> r_m$, then $M(r) \ge 1/m$, which in turn yields
\[
    M(r_m) = \lim_{r \downarrow r_m} M(r) \ge 1/m \;.
\]
Rearranging gives \eqref{e:m-1} and completes the proof.
\end{proof}
Before proceeding with the main results of the section, we need one simple (geometric measure theory flavored) lemma.
\begin{lemma}
\label{l:geom-measure-flavor}
If $x$ is a \leb{} point and $(r_m)_{m \in \N}$ is a vanishing sequence of strictly positive real numbers, then
\begin{equation}
    \label{e:geom-measure-flavor}
    \frac
    {
        \E \Bsb{ \I_{B_{r_m}}(X) \, \babs{\eta(X) - \eta(x)} }
    }
    {
        \P_X (B_{r_m})
    }
    \to 0 \;,
    \qquad m \to \iop
\end{equation}
where we stress that the balls $B_{r_m}$ in the previous formula are open.
\end{lemma}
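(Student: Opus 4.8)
The plan is to derive \eqref{e:geom-measure-flavor} from the \leb{}-point hypothesis, whose statement in Definition~\ref{d:leb-point} involves \emph{closed} balls, by approximating each open ball $B_{r_m}$ from the inside by the closed balls $\bar{B}_s$ with $s\uparrow r_m$, and invoking continuity from below of finite measures.

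First I would note that the ratio in \eqref{e:geom-measure-flavor} is well-defined for every $m$: since $x\in\supp(\P_X)$ and $\bar{B}_{r_m/2}\s B_{r_m}$, we get $\P_X(B_{r_m})\ge\P_X(\bar{B}_{r_m/2})>0$. Then I would fix $m\in\N$ and let $s\uparrow r_m$ along values $0<s<r_m$. Since $\bigcup_{0<s<r_m}\bar{B}_s=B_{r_m}$, continuity from below applied to the finite measures $A\mapsto\P_X(A)$ and $A\mapsto\E\bsb{\I_A(X)\,\babs{\eta(X)-\eta(x)}}$ (the latter of total mass at most $2\lno{\eta}_\iop$) yields $\P_X(\bar{B}_s)\uparrow\P_X(B_{r_m})$ and $\E\bsb{\I_{\bar{B}_s}(X)\,\babs{\eta(X)-\eta(x)}}\uparrow\E\bsb{\I_{B_{r_m}}(X)\,\babs{\eta(X)-\eta(x)}}$. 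Because the limiting denominator $\P_X(B_{r_m})$ is strictly positive, the ratios converge too, i.e.,
\[
    \frac{\E\bsb{\I_{\bar{B}_s}(X)\,\babs{\eta(X)-\eta(x)}}}{\P_X(\bar{B}_s)}
\longrightarrow
    \frac{\E\bsb{\I_{B_{r_m}}(X)\,\babs{\eta(X)-\eta(x)}}}{\P_X(B_{r_m})}
\qquad\text{as }s\uparrow r_m\;.
\]
Consequently, for each $m\in\N$ I can pick $s_m\in(0,r_m)$ with
\[
    \frac{\E\bsb{\I_{B_{r_m}}(X)\,\babs{\eta(X)-\eta(x)}}}{\P_X(B_{r_m})}
\le
    \frac{\E\bsb{\I_{\bar{B}_{s_m}}(X)\,\babs{\eta(X)-\eta(x)}}}{\P_X(\bar{B}_{s_m})}+\frac1m\;.
\]

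Finally, since $0<s_m<r_m$ and $r_m\to0$, the sequence $(s_m)_{m\in\N}$ vanishes; as $x$ is a \leb{} point, the first term on the right-hand side tends to $0$ by Definition~\ref{d:leb-point}, and so does $1/m$. The left-hand side being nonnegative, passing to $\limsup_{m\to\iop}$ on both sides gives \eqref{e:geom-measure-flavor}. The step I expect to require the most care is precisely this open-versus-closed-ball passage: in a general metric space the sphere $S_{r_m}$ may carry positive $\P_X$-mass, so the open- and closed-ball ratios at the very radius $r_m$ need not agree, and the inner approximation by $\bar{B}_s$ is exactly what circumvents this.
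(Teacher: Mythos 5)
Your proposal is correct and follows essentially the same route as the paper: both approximate the open ball $B_{r_m}$ from inside by closed balls via continuity from below, select for each $m$ an auxiliary radius (your $s_m\in(0,r_m)$, the paper's $\rho_m$ with $\labs{r_m-\rho_m}\le 1/m$) whose closed-ball \leb{} ratio is within $1/m$ of the open-ball ratio at $r_m$, and then invoke the \leb{}-point property along the vanishing auxiliary radii. The only differences are cosmetic (a one-sided inequality versus the paper's two-sided bound, plus your explicit check that the denominators are positive).
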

\begin{proof}
By the continuity from below of measures, we \aaa{have that for} all $m \in \N$, there exists $\rho_m > 0$ such that $\labs{r_m - \rho_m} \le 1/m$ and
\begin{equation*}
    \labs{
        \frac{\E \Bsb{ \I_{B_{r_m}}(X) \, \babs{\eta(X) - \eta(x)} }}
            {\P_X \brb{ B_{r_m}}}
        - 
        \frac{\E \Bsb{ \I_{\bar{B}_{\rho_m}}(X) \, \babs{\eta(X) - \eta(x)} }}
            {\P_X \brb{ \bar{B}_{\rho_m}}} 
    } 
\le 
    \frac{1}{m} \;.
\end{equation*}
Since $r_m \to 0^+$ as $m \to \infty$, we have that also $\rho_m \to 0^+$, as $m \to \infty$. Being $x$ a \leb{} point, it follows that
\begin{equation*}
\frac{\E \Bsb{ \I_{\bar{B}_{\rho_m}}(X) \, \babs{\eta(X) - \eta(x)} }}{\P_X \brb{ \bar{B}_{\rho_m}}} \to 0 \;, \qquad \text{as } m \to \infty \;,
\end{equation*}
which in turn implies, for $m \to \iop$,
\begin{equation*}
\frac{\E \Bsb{ \I_{B_{r_m}}(X) \, \babs{\eta(X) - \eta(x)} }}{\P_X \brb{ B_{r_m}}} \le \frac{1}{m} + \frac{\E \Bsb{ \I_{\bar{B}_{\rho_m}}(X) \, \babs{\eta(X) - \eta(x)} }}{\P_X \brb{ \bar{B}_{\rho_m}}} \to 0 \;.
\end{equation*}
\end{proof}
The following result showcases the usefulness of $\alpha$-sequences.
\begin{theorem}
\label{t:victory-spheres}
Let $(r_m)_{m\in \N, m \ge m_1}$ be an $\alpha$-sequence for some $\alpha \in (0,1)$. 
If $x$ is a \leb{} point, then the following are equivalent:
\begin{enumerate} [topsep = 2pt, parsep = 2pt, itemsep = 2pt]
\item \label{i:alpha-paua-1} $\E \bsb{\babs{ \eta(X^x_m) - \eta(x) } } \to 0$, as $m\to \iop$;
\item \label{i:alpha-paua-2} $\E \bsb{\I_{ S_{r_m} }(X^x_m) \, \babs{ \eta(X^x_m) - \eta(x) } } \to 0$, as $m\to \iop$.
\end{enumerate}
\end{theorem}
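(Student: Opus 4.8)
The plan is to start from the pointwise partition $\cX = B_{r_m} \sqcup S_{r_m} \sqcup \bar B_{r_m}^c$, which yields the exact identity
\[
    \E \Bsb{ \babs{ \eta(X^x_m) - \eta(x) } }
=
    \E \Bsb{\I_{ B_{r_m} }(X^x_m) \babs{ \eta(X^x_m) - \eta(x) } }
+
    \E \Bsb{\I_{ S_{r_m} }(X^x_m) \babs{ \eta(X^x_m) - \eta(x) } }
+
    \E \Bsb{\I_{ \bar{B}_{r_m}^c }(X^x_m) \babs{ \eta(X^x_m) - \eta(x) } } \;,
\]
and to show that the first and third terms on the right-hand side vanish as $m\to\iop$ whenever $x$ is a \leb{} point. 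Granted this, the equivalence is immediate: \ref{i:alpha-paua-1}~$\Rightarrow$~\ref{i:alpha-paua-2} holds trivially since the sphere term is bounded by the full expectation, and \ref{i:alpha-paua-2}~$\Rightarrow$~\ref{i:alpha-paua-1} holds because the two remaining terms tend to $0$ regardless of how ties are broken.

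For the interior term I would combine inequality \eqref{e:term-interior} of Lemma~\ref{l:geom-idea}, namely $\E\bsb{\I_{ B_{r_m} }(X^x_m) \babs{ \eta(X^x_m) - \eta(x) }} \le m\,\E\bsb{\I_{ B_{r_m} }(X) \babs{ \eta(X) - \eta(x) }}$, with the upper bound \eqref{e:m-2} on $m$ from Lemma~\ref{l:incistato}. The factor $\E\bsb{\I_{ B_{r_m} }(X)\babs{\eta(X)-\eta(x)}}$ — strictly positive by Assumption~\ref{a:non-trivial} — cancels, leaving
\[
    \E \Bsb{\I_{ B_{r_m} }(X^x_m) \babs{ \eta(X^x_m) - \eta(x) } }
\le
    \lrb{ \frac{\E \bsb{ \I_{B_{r_m}}(X)\,\babs{\eta(X)-\eta(x)} }}{\P_X( B_{r_m})} }^{1-\alpha} \;,
\]
and since $(r_m)_{m\ge m_1}$ is a vanishing sequence of strictly positive reals (Lemma~\ref{l:incistato}) and $x$ is a \leb{} point, Lemma~\ref{l:geom-measure-flavor} forces the bracketed open-ball ratio to $0$; raised to the positive exponent $1-\alpha$ it still goes to $0$. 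For the exterior term I would use the third estimate of Lemma~\ref{l:geom-idea}, $\E\bsb{\I_{\bar B_{r_m}^c}(X^x_m)\babs{\eta(X^x_m)-\eta(x)}} \le 2\,\lno{\eta}_\infty\,\exp\brb{-m\,\P_X(\bar B_{r_m})}$, together with the lower bound \eqref{e:m-1} on $m$, which rearranges to $m\,\P_X(\bar B_{r_m}) \ge \brb{ \E\bsb{\I_{\bar B_{r_m}}(X)\babs{\eta(X)-\eta(x)}}/\P_X(\bar B_{r_m}) }^{-\alpha}$. Because $x$ is a \leb{} point and $r_m\downarrow 0$, the \leb{} ratio inside (positive for every $m$ by Assumption~\ref{a:non-trivial}) tends to $0$, so its $(-\alpha)$-power diverges, hence $m\,\P_X(\bar B_{r_m})\to\iop$ and the exponential — thus the exterior term — vanishes.

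The crux is not any single computation but the open/closed-ball bookkeeping that the $\alpha$-sequence was designed to handle: isolating the sphere term forces \emph{open} balls into the interior estimate \eqref{e:m-2}, whereas the exterior estimate \eqref{e:m-1} and Definition~\ref{d:leb-point} live with \emph{closed} balls, and Lemma~\ref{l:geom-measure-flavor} is precisely the bridge converting the closed-ball \leb{} property into the open-ball limit needed above. The exponent $\alpha\in(0,1)$ is exactly what makes $m$ simultaneously small enough for $m\,\E\bsb{\I_{B_{r_m}}(X)\babs{\eta(X)-\eta(x)}}\to 0$ and large enough for $m\,\P_X(\bar B_{r_m})\to\iop$; any fixed value of $\alpha$ does the job, hence the statement only asks for \emph{some} $\alpha$. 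The sphere term is deliberately left in place — it is the only piece sensitive to the tie-breaking rule, and controlling it is exactly what the subsequent conditions \eqref{e:ass-proba} and \eqref{e:ass-nn} (Theorems~\ref{t:ass-proba} and~\ref{t:ass-nn}) are for.
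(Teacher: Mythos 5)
Your proposal is correct and follows essentially the same route as the paper's proof: the same three-term split over $B_{r_m}$, $S_{r_m}$, $\bar{B}_{r_m}^c$, with the interior term handled via \eqref{e:term-interior} and \eqref{e:m-2} plus Lemma~\ref{l:geom-measure-flavor}, and the exterior term via \eqref{e:term-exterior} and \eqref{e:m-1}. The only (immaterial) difference is that you explicitly extract $m\,\P_X(\bar{B}_{r_m})\to\iop$ before exponentiating, whereas the paper substitutes the bound directly into the exponential.
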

\begin{proof}
We prove the non-trivial implication \ref{i:alpha-paua-2}$~\Rightarrow$~\ref{i:alpha-paua-1}.
Recall that $r_m \to 0^+$ as $m \to \infty$ by Lemma~\ref{l:incistato}.
For each $m \in \N$, if $m \ge m_1$, we have
\begin{align*}
    \E \bsb{\babs{ \eta(X^x_m) - \eta(x) } }
& =
    \E \bsb{\I_{B_{r_m}}(X^x_m) \, \babs{ \eta(X^x_m) - \eta(x) } }
\\& \qquad
    +
    \E \bsb{\I_{S_{r_m}}(X^x_m) \, \babs{ \eta(X^x_m) - \eta(x) } }
\\& \qquad \qquad
    +
    \E \bsb{\I_{\bar{B}_{r_m}^c}(X^x_m) \, \babs{ \eta(X^x_m) - \eta(x) } }
\\&
    =: 
    (\mathrm{I}) + (\mathrm{II}) + (\mathrm{III})\;.
\end{align*}
We will show that all three terms above approach $0$ as $m\to \iop$.

To show that the first vanishes, we apply inequality \eqref{e:term-interior} in Lemma~\ref{l:geom-idea}, inequality \eqref{e:m-2} in Lemma~\ref{l:incistato}, and Lemma~\ref{l:geom-measure-flavor}:
\begin{align*}
    (\mathrm{I}) 
& \overset{\phantom{\eqref{e:m-2}}}{=} 
    \E \Bsb{\I_{ B_{\aaa{r_m}} }(X^x_m) \,\babs{ \eta(X^x_m) - \eta(x) } }
\overset{\eqref{e:term-interior}}{\le}
    m \, \E \Bsb{\I_{ B_{\aaa{r_m}} }(X) \,\babs{ \eta(X) - \eta(x) } }
\\
& \overset{\eqref{e:m-2}}{\le}
    \lrb{ 
        \frac{
            \E \Bsb{ \I_{B_{r_m}}(X) \, \babs{\eta(X) - \eta(x)} }
        }
        {
            \P_X \brb{ B_{r_m}}
        } 
    }^{1-\alpha}
    \overset{\eqref{e:geom-measure-flavor}}{\longrightarrow} 0 \;,
    \qquad \text{as } m \to \iop\;.
\end{align*}

The second term vanishes by assumption.

To show that the third term vanishes, we apply inequality \eqref{e:term-exterior} in Lemma~\ref{l:geom-idea} and inequality \eqref{e:m-1} in Lemma~\ref{l:incistato}:
\begin{align*}
    (\mathrm{III}) 
&\overset{\phantom{\eqref{e:m-2}}}{=} 
    \E \Bsb{\I_{ \bar{B}_{\aaa{r_m}} ^c }(X^x_m) \, \babs{ \eta(X^x_m) - \eta(x) } }
\overset{\eqref{e:term-exterior}}{\le} 
    2 \, \lno{ \eta }_\infty \, \exp \brb{ -m \, \P_X(\bar{B}_{\aaa{r_m}}) }
\\
&\overset{\eqref{e:m-1}}{\le}
    2 \, \lno{ \eta }_\infty \, \exp \lrb{ - \lrb{ 
        \frac
        {
            \E \bsb{ \I_{\bar{B}_{r_m}}(X) \, \babs{\eta(X) - \eta(x)} }  
        }
        {
            \P_X \lrb{ \bar{B}_{r_m}}
        }
    }^{-\alpha}  } \to 0 \;,
\end{align*}
as $m \to \iop$.
\end{proof}
Thanks to the previous theorem, to prove the $\cL^1$-convergence of $\eta(X^x_m)$ to $\eta(x)$, we only need to control the expectation on spheres along an $\alpha$-sequence.
Recall the measure-continuity condition \eqref{e:cont-condition}.
By the previous theorem, under this condition, if $x$ is a \leb{} point, then automatically $\eta(X^x_m)$ converges to $\eta(x)$ in $\cL^1$.
Theorem~\ref{t:ass-proba} will show that the same thing holds under the following weaker condition:
\begin{equation}
    \label{e:ass-proba}
    \exists K \ge 0, \, \exists R>0, \, \forall r \in (0,R), \ \P_X(S_r) \le K \, \P_X(B_r) \;.
\end{equation}
To give some intuition on condition \eqref{e:ass-proba}, consider the cumulative $F$ of the random variable $d(x,X)$ that evaluates the distance between $x$ and $X$.
For all $r > 0$, we have $F(r) = \P\brb{ d(x,X) \le r } = \P_X (\bar{B}_r)$.
Then, condition \eqref{e:ass-proba} can be restated as
\[
    \exists K \ge 0, \, \exists R>0, \, \forall r \in (0,R), \  \frac{F(r)}{F(r^-)} \le K + 1 \;,
\]
where $F(r^-) := \lim_{\rho \to r^-} F(\rho) = \P_X(B_r)$.
This is a relaxation on the continuity of $F$ in a neighbor of $0$ ---which is precisely the case $K=0$, corresponding to the measure-continuity condition \eqref{e:cont-condition}--- and it allows infinite discontinuity points around $0$.

In order to prove Theorem~\ref{t:ass-proba} as well as the last theorem of the section, we will need the following technical lemma.
\begin{lemma}
\label{l:ass-proba-tech}
Let $(m_j)_{j\in \N}$ be a strictly monotone sequence of natural numbers and $(\rho_j)_{j\in\N}$ a sequence of strictly positive real numbers.
If both of the following conditions hold:
\begin{enumerate} [topsep = 2pt, parsep = 2pt, itemsep = 2pt]
\item\label{i:c1} there exists $K\ge 0$ such that, for all $j \in \N$, $\P_X\brb{S_{\rho_j}} \le K \, \P_X\brb{B_{\rho_j}}$;
\item \label{i:c2} $m_j \P_X \brb{ \bar{B}_{\rho_j} } \to \iop$ as $j\to \iop$;
\end{enumerate}
then
\[
    \E \Bsb{ \I_{S_{\rho_j}} \brb{ X^x_{m_j} } \, \babs{ \eta \brb{ X^x_{m_j} } - \eta(x) } } \to 0 \;, \qquad \text{as } j \to \iop \;.
\]
\end{lemma}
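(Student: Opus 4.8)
The plan is to start from the estimate \eqref{e:term-sphere} in Lemma~\ref{l:geom-idea} applied at radius $\rho_j$ and with sample size $m_j$, which gives
\[
    \E \Bsb{ \I_{S_{\rho_j}} \brb{ X^x_{m_j} } \, \babs{ \eta \brb{ X^x_{m_j} } - \eta(x) } }
\le
    2 \,\lno{ \eta }_\infty \, m_j \, \P_X \lrb{ S_{\rho_j} } \, \exp \brb{ -(m_j-1) \, \P_X \lrb{ B_{\rho_j} } } \;.
\]
So it suffices to show the right-hand side vanishes. Using hypothesis~\ref{i:c1}, I would bound $\P_X(S_{\rho_j}) \le K\,\P_X(B_{\rho_j})$ and reduce the problem to showing that $m_j \, \P_X(B_{\rho_j}) \, \exp\brb{-(m_j-1)\,\P_X(B_{\rho_j})}$ tends to $0$. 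Writing $p_j := \P_X(B_{\rho_j})$, this is an expression of the form (up to a harmless factor comparing $m_j$ with $m_j-1$, and noting $p_j\in[0,1]$ so $\exp(p_j)$ is bounded) $t \, e^{-t}$ with $t = (m_j-1)\,p_j$, times a bounded correction.

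The crux is therefore to argue that $t_j := (m_j-1)\,\P_X(B_{\rho_j}) \to \iop$, so that $t_j e^{-t_j}\to 0$ (since $s\mapsto s e^{-s}\to 0$ as $s\to\iop$). This is where hypothesis~\ref{i:c2} enters: it gives $m_j \, \P_X(\bar{B}_{\rho_j}) \to \iop$. To pass from the closed ball to the open ball I again invoke~\ref{i:c1}: since $\P_X(\bar{B}_{\rho_j}) = \P_X(B_{\rho_j}) + \P_X(S_{\rho_j}) \le (1+K)\,\P_X(B_{\rho_j})$, we get $m_j\,\P_X(B_{\rho_j}) \ge \frac{1}{1+K}\, m_j\,\P_X(\bar{B}_{\rho_j}) \to \iop$; and replacing $m_j$ by $m_j-1$ costs at most a factor $1/2$ for $m_j\ge 2$ (which holds eventually, as $(m_j)$ is strictly monotone in $\N$), so $t_j\to\iop$ as well.

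Putting it together: for $j$ large,
\[
    2 \,\lno{ \eta }_\infty \, m_j \, \P_X(S_{\rho_j}) \, e^{ -(m_j-1)\,\P_X(B_{\rho_j}) }
\le
    2 \,\lno{ \eta }_\infty \, K \, \frac{m_j}{m_j-1} \, e \cdot t_j \, e^{-t_j}
\longrightarrow 0 \;,
\]
using $\exp(p_j)\le e$ to absorb the shift from $m_j-1$ to $m_j$ in the exponent, and $t_j e^{-t_j}\to 0$. Since the left-hand side of the claimed limit is nonnegative and dominated by this quantity, it vanishes as well. The only genuinely delicate point is the bookkeeping around open versus closed balls and the $m_j$ versus $m_j-1$ shift — everything else is a one-line application of $s e^{-s}\to 0$; I expect no real obstacle, just care with the elementary inequalities. (If $K=0$ the statement is immediate since then $\P_X(S_{\rho_j})=0$ for all $j$, so one may assume $K>0$ for the bound above to be stated cleanly, though it is harmless either way.)
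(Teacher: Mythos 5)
Your proposal is correct and follows essentially the same route as the paper: apply the sphere estimate \eqref{e:term-sphere}, use hypothesis~\ref{i:c1} to replace $\P_X(S_{\rho_j})$ by $K\,\P_X(B_{\rho_j})$ and to transfer hypothesis~\ref{i:c2} from closed to open balls via $\P_X(\bar{B}_{\rho_j}) \le (1+K)\P_X(B_{\rho_j})$, then conclude from $t e^{-t}\to 0$. The only difference is that you spell out the $m_j$ versus $m_j-1$ bookkeeping, which the paper leaves implicit.
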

\begin{proof}
We show first that condition \ref{i:c2} still holds if closed balls $\bar{B}_{\rho_j}$ are replaced by open balls $B_{\rho_j}$.
Using conditions \ref{i:c1} and \ref{i:c2} yields the strengthened convergence condition:
\[
    m_j \P_X\brb{ B_{\rho_j} }
\ge
    \frac{1}{K+1} \Brb{ m_j \P_X \brb{S_{\rho_j}} + m_j \P_X \brb{B_{\rho_j}} }
=
    \frac{1}{K+1} m_j \P_X \brb{\bar{B}_{\rho_j}}
\to \iop \;,
\]
as $j \to \iop$. 
Hence, recalling inequality~\eqref{e:term-sphere}, using condition~\ref{i:c1}, and applying the strengthened convergence condition, we have
\begin{multline*}
    \E \Bsb{ \I_{S_{\rho_j}} \brb{ X^x_{m_j} } \, \babs{ \eta \brb{ X^x_{m_j} } - \eta(x) } }
\le
    2 \lno{ \eta }_\iop m_j \P_X\brb{ S_{\rho_j} } \exp \Brb{ -(m_j-1) \P_X \brb{ B_{\rho_j} } }
\\
\le
    2 K \lno{ \eta }_\iop m_j \P_X\brb{ B_{\rho_j} } \exp \Brb{ -(m_j-1) \P_X \brb{ B_{\rho_j} } }
    \to 0\;, \qquad \text{as } j \to \iop \;.
    \tag*{\qedhere}
\end{multline*}
\end{proof}
We can now prove one of our main results which shows that, under the weakened measure-continuity assumption~\eqref{e:ass-proba}, if $x$ is a \leb{} point, then $\eta(X^x_m)$ converges to $x$ in $\cL^1$.
\begin{theorem}
\label{t:ass-proba}
If condition~\eqref{e:ass-proba} is satisfied and $x$ is a \leb{} point, then 
\[
    \E \bsb{ \babs{ \eta(X^x_m) - \eta(x) } } \to 0\;, \qquad \text{as } m \to \iop \;.
\] 
\end{theorem}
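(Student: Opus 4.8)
The plan is to reduce the statement to Theorem~\ref{t:victory-spheres} and Lemma~\ref{l:ass-proba-tech}. Since by Lemma~\ref{l:trivia-cases} the conclusion is immediate unless Assumption~\ref{a:non-trivial} holds, we work under that assumption. Fix any $\alpha \in (0,1)$ and let $(r_m)_{m\in\N,\,m\ge m_1}$ be the $\alpha$-sequence at $x$. By Theorem~\ref{t:victory-spheres} (applicable because $x$ is a \leb{} point), it suffices to prove that the sphere term $\E\bsb{\I_{S_{r_m}}(X^x_m)\,\babs{\eta(X^x_m)-\eta(x)}}$ vanishes as $m\to\iop$.

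The key intermediate fact I would establish is $m\,\P_X(\bar{B}_{r_m})\to\iop$. Multiplying inequality~\eqref{e:m-1} of Lemma~\ref{l:incistato} through by $\P_X(\bar{B}_{r_m})$ gives
\[
    m\,\P_X\lrb{\bar{B}_{r_m}}
\ge
    \lrb{\frac{\E\bsb{\I_{\bar{B}_{r_m}}(X)\,\babs{\eta(X)-\eta(x)}}}{\P_X\lrb{\bar{B}_{r_m}}}}^{-\alpha}\;.
\]
By Lemma~\ref{l:incistato} the $\alpha$-sequence is a vanishing sequence of strictly positive numbers, so the \leb{} ratio on the right-hand side (which uses closed balls, exactly as in Definition~\ref{d:leb-point}) tends to $0$ along $(r_m)_m$; hence its $(-\alpha)$-th power diverges, forcing $m\,\P_X(\bar{B}_{r_m})\to\iop$.

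I would then feed this into Lemma~\ref{l:ass-proba-tech}. Since $r_m\downarrow 0$, there is $m_0\ge m_1$ with $r_m < R$ for all $m\ge m_0$, where $R$ is as in~\eqref{e:ass-proba}. Setting $m_j := m_0+j-1$ and $\rho_j := r_{m_j}$ for $j\in\N$, condition~\ref{i:c1} of Lemma~\ref{l:ass-proba-tech} holds with the $K$ from~\eqref{e:ass-proba} (as $\rho_j\in(0,R)$), and condition~\ref{i:c2} holds by the previous paragraph. The lemma yields $\E\bsb{\I_{S_{\rho_j}}(X^x_{m_j})\,\babs{\eta(X^x_{m_j})-\eta(x)}}\to 0$; since $(m_j)_j$ exhausts all integers $\ge m_0$, this is precisely the tail of the sequence $\E\bsb{\I_{S_{r_m}}(X^x_m)\,\babs{\eta(X^x_m)-\eta(x)}}$, so it converges to $0$. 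Combining with Theorem~\ref{t:victory-spheres} gives $\E\bsb{\babs{\eta(X^x_m)-\eta(x)}}\to 0$.

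I do not expect a genuine obstacle once this machinery is in place: the whole proof is an assembly step, and the only point requiring a small idea is the divergence $m\,\P_X(\bar{B}_{r_m})\to\iop$, which is exactly where the lower bound~\eqref{e:m-1} engineered into the $\alpha$-sequence meets the \leb{}-point hypothesis. The hypothesis~\eqref{e:ass-proba} itself is used only to upgrade this closed-ball divergence to the open-ball divergence $m\,\P_X(B_{r_m})\to\iop$ that is needed to kill the factor $\exp\brb{-(m-1)\,\P_X(B_{r_m})}$ in~\eqref{e:term-sphere}, and that upgrade is already carried out inside Lemma~\ref{l:ass-proba-tech}.
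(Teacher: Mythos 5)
Your proposal is correct and follows essentially the same route as the paper's proof: reduce to the sphere term via Theorem~\ref{t:victory-spheres}, deduce $m\,\P_X(\bar{B}_{r_m})\to\iop$ from \eqref{e:m-1} together with the \leb{}-point hypothesis, and conclude with Lemma~\ref{l:ass-proba-tech} under condition~\eqref{e:ass-proba}. The only difference is that you spell out the trivial-case reduction (Lemma~\ref{l:trivia-cases}) and the tail/reindexing details explicitly, which the paper leaves implicit via Assumption~\ref{a:non-trivial}.
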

\begin{proof}
Take any $\alpha$-sequence $(r_m)_{m\in \N, m\ge m_1}$, for some $\alpha\in (0,1)$.
By Theorem~\ref{t:victory-spheres}, it suffices to prove that $\E \bsb{\I_{ S_{r_m} }(X^x_m) \, \babs{ \eta(X^x_m) - \eta(x) } } \to 0$, as $m\to \iop$. 
Recall that $0 < r_m \to 0$ as $m\to \iop$ (Lemma~\ref{l:incistato}).
By assumption~\eqref{e:ass-proba} and Lemma~\ref{l:ass-proba-tech} it is then sufficient to prove that $m \P_X \brb{ \bar{B}_{r_m} } \to \iop$ as $m \to \iop$.
Being $x$ a \leb{} point, this follows immediately by \eqref{e:m-1}.
\end{proof}
The previous result shows that there is a large class of distributions $\P_X$ for which $\eta(X^x_m)$ converges to $\eta(x)$ in $\cL^1$, no matter how ties are broken in the definition of the nearest neighbor $X^x_m$.

Vice versa, we will show there exists a large class of tie-breaking rules for which $\eta(X^x_m)$ converges to $\eta(x)$ in $\cL^1$, no matter how pathological $\P_X$ is. This is a consequence of Theorem~\ref{t:ass-nn}, together with the forthcoming Proposition~\ref{p:ISIMIN}. 
The theorem relies on the following condition,  whose purpose \aaa{is to bound} the bias of the tie-breaking rule \aaa{(for a concrete example, see Example~\ref{ex:counter-rev})}:
\begin{multline}
    \label{e:ass-nn}
    \exists C>0, \, \exists R > 0, \, \exists M > 0, \, \forall r \in (0,R), \, \forall m \in \{M, M+1, \l \}, \, \brb{ \P_{X} (S_r) > 0 }
\\
\implies 
    \E \bsb{ \labs{ \eta(X^x_m) - \eta(x) } \mid X^x_m \in S_r }
\le
    C \, \E \bsb{ \labs{ \eta(X) - \eta(x) } \mid X \in S_r } \;.
\end{multline}
Note that the previous expression is well-defined since the condition $\P_{X} (S_r) > 0$ is equivalent to $\P_{X^x_m} (S_r) > 0 $, as shown in the following lemma.
\begin{lemma}
\label{l:abs-cont-X-Xm}
Let $r>0$. Then the following are equivalent:
\begin{enumerate} [topsep = 2pt, parsep = 2pt, itemsep = 2pt]
\item $\P_X(S_r) >0$;
\item there exists $m\in \N$ such that $\P_{X^x_m} (S_r) >0$;
\item for all $m\in \N$, we have that $\P_{X^x_m} (S_r) >0$.
\end{enumerate}
\end{lemma}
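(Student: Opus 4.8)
The plan is to establish the cycle $1 \Rightarrow 3 \Rightarrow 2 \Rightarrow 1$, reusing two facts already extracted inside the proof of Lemma~\ref{l:geom-idea}: the bound $\P_{X^x_m} \le m\,\P_X$ (obtained there while proving~\eqref{e:term-interior}) and the exact description of the event $\{X^x_m \in S_r\}$ (obtained there while proving~\eqref{e:term-sphere}). Two of the three implications are immediate. For $3 \Rightarrow 2$ there is nothing to prove (take $m=1$), and for $2 \Rightarrow 1$, if $\P_{X^x_m}(S_r) > 0$ for some $m\in\N$, then $0 < \P_{X^x_m}(S_r) \le m\,\P_X(S_r)$ by $\P_{X^x_m} \le m\,\P_X$, so $\P_X(S_r) > 0$.

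The only implication with content is $1 \Rightarrow 3$. Assuming $\P_X(S_r) > 0$, I would fix an arbitrary $m\in\N$ and start from the identity
\[
    \{ X^x_m \in S_r \}
=
    \bigcup_{k=1}^m \Brb{ \{ X_k \in S_r \} \cap \bigcap_{i=1,i\neq k}^m \{ X_i \notin B_r \} } \;;
\]
keeping only the summand $k = 1$ and using that the $X_i$ are i.i.d.\ gives
\[
    \P_{X^x_m}(S_r)
\ge
    \P\Brb{ \{ X_1 \in S_r \} \cap \bigcap_{i=2}^m \{ X_i \notin B_r \} }
=
    \P_X(S_r)\,\brb{ 1 - \P_X(B_r) }^{m-1} \;.
\]
Since $S_r \s B_r^c$, the hypothesis $\P_X(S_r) > 0$ forces $\P_X(B_r^c) > 0$, i.e.\ $\P_X(B_r) < 1$, so the right-hand side above is strictly positive; as $m$ was arbitrary, this is statement~3.

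I do not expect a real obstacle: the one point to watch is that the ``survival'' factor $\brb{1-\P_X(B_r)}^{m-1}$ does not degenerate to $0$, which is precisely why one records the inclusion $S_r \s B_r^c$ so that condition~1 already rules out $\P_X(B_r) = 1$. (The standing hypothesis $x \in \supp(\P_X)$ plays no role in this lemma.)
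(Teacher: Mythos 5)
Your proof is correct, and it follows the same basic strategy as the paper: bounding $\P_{X^x_m}(S_r)$ above and below in terms of $\P_X(S_r)$ via the structure of the nearest-neighbor event. The only real difference is in the lower bound. The paper uses the inclusion $\{X^x_m \in S_r\} \supset \bigcap_{k=1}^m \{X_k \in S_r\}$, which gives $\P_{X^x_m}(S_r) \ge \brb{\P_X(S_r)}^m$ and is strictly positive with no further remark needed; you instead use the event that one sample lands on $S_r$ while the others avoid $B_r$, which gives $\P_X(S_r)\,\brb{1-\P_X(B_r)}^{m-1}$ and requires the extra (correct) observation that $\P_X(S_r)>0$ forces $\P_X(B_r)<1$ because $S_r$ and $B_r$ are disjoint. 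Your bound is slightly sharper, since $1-\P_X(B_r) \ge \P_X(S_r)$, but nothing here needs that sharpness, so the paper's choice is the more economical one. The rest of your argument (the bound $\P_{X^x_m} \le m\,\P_X$ for the implication $2 \Rightarrow 1$, and your remark that the hypothesis $x \in \supp(\P_X)$ is never used) matches the paper's proof.
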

\begin{proof}
The results follow by the fact that, for all $m\in \N$, we have 
\[
    \bigcup_{k=1}^m \{ X_k \in S_r \}
\supset
    \{ X^x_m \in S_r \}
\supset
    \bigcap_{k=1}^m \{ X_k \in S_r \} \;,
\]
which in turn implies
\[
    m \,\P_X(S_r) 
\ge 
    \P_{X^x_m} (S_r)
\ge
    \brb{ \P_X(S_r) }^m \;. \qedhere
\]
\end{proof}
We can now prove the last theorem of this section.
\begin{theorem}
\label{t:ass-nn}
If condition~\eqref{e:ass-nn} is satisfied and $x$ is a \leb{} point, then 
\[
    \E \bsb{ \babs{ \eta(X^x_m) - \eta(x) } } \to 0 \;, \qquad \text{as } m \to \iop \;.
\] 
\end{theorem}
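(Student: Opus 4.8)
The plan is to reduce everything, via Theorem~\ref{t:victory-spheres}, to a statement about spheres along an $\alpha$-sequence. Fix any $\alpha\in(0,1)$ and let $(r_m)_{m\ge m_1}$ be the corresponding $\alpha$-sequence; by Lemma~\ref{l:incistato} this is a vanishing sequence of strictly positive reals, so for every sufficiently large $m$ we have $m\ge M$ and $r_m\in(0,R)$, where $M,R$ are the constants in~\eqref{e:ass-nn}. For such $m$ set
\[
    \ell_m
:=
    \frac{\E\bsb{\I_{\bar{B}_{r_m}}(X)\,\babs{\eta(X)-\eta(x)}}}{\P_X\brb{\bar{B}_{r_m}}}\;,
\]
which is strictly positive (Assumption~\ref{a:non-trivial}) and tends to $0$ as $m\to\iop$, because $x$ is a \leb{} point and $r_m\to 0^+$. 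By Theorem~\ref{t:victory-spheres} it then suffices to prove $\E\bsb{\I_{S_{r_m}}(X^x_m)\,\babs{\eta(X^x_m)-\eta(x)}}\to 0$.

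The main step is a dichotomy based on how the mass $\P_X(\bar{B}_{r_m})=\P_X(B_{r_m})+\P_X(S_{r_m})$ is distributed. Suppose first $\P_X(B_{r_m})>\tfrac12\P_X(\bar{B}_{r_m})$ (the ``interior-heavy'' case, where no assumption on ties is needed). Since $X^x_m\in S_{r_m}$ forces every $X_k$ to avoid $B_{r_m}$, i.e. $\{X^x_m\in S_{r_m}\}\s\bigcap_{k=1}^m\{X_k\notin B_{r_m}\}$, independence gives
\[
    \E\bsb{\I_{S_{r_m}}(X^x_m)\,\babs{\eta(X^x_m)-\eta(x)}}
\le
    2\,\lno{\eta}_\iop\brb{1-\P_X(B_{r_m})}^m
\le
    2\,\lno{\eta}_\iop\exp\Brb{-\tfrac12\,m\,\P_X(\bar{B}_{r_m})}\;;
\]
by inequality~\eqref{e:m-1}, $m\,\P_X(\bar{B}_{r_m})\ge\ell_m^{-\alpha}\to\iop$, so this bound vanishes.

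In the complementary regime $\P_X(S_{r_m})\ge\tfrac12\P_X(\bar{B}_{r_m})>0$ (the ``sphere-heavy'' case) I invoke~\eqref{e:ass-nn} --- legitimate since $m\ge M$, $r_m\in(0,R)$, and $\P_X(S_{r_m})>0$, the latter also guaranteeing, via Lemma~\ref{l:abs-cont-X-Xm}, that the conditional expectations below are well defined --- and then estimate crudely $\P(X^x_m\in S_{r_m})\le 1$, enlarge $S_{r_m}$ to $\bar{B}_{r_m}$, and use $\P_X(S_{r_m})\ge\tfrac12\P_X(\bar{B}_{r_m})$:
\begin{align*}
    \E\bsb{\I_{S_{r_m}}(X^x_m)\,\babs{\eta(X^x_m)-\eta(x)}}
&=
    \P\brb{X^x_m\in S_{r_m}}\,\E\bsb{\babs{\eta(X^x_m)-\eta(x)}\mid X^x_m\in S_{r_m}}
\\
&\le
    C\,\P\brb{X^x_m\in S_{r_m}}\,\E\bsb{\babs{\eta(X)-\eta(x)}\mid X\in S_{r_m}}
\\
&\le
    C\,\frac{\E\bsb{\I_{\bar{B}_{r_m}}(X)\,\babs{\eta(X)-\eta(x)}}}{\P_X(S_{r_m})}
\le
    2C\,\ell_m\;\longrightarrow\;0\;.
\end{align*}
Since for every large $m$ exactly one of the two cases occurs, the sphere term is bounded by $\max\bcb{2\lno{\eta}_\iop\exp(-\tfrac12\ell_m^{-\alpha}),\,2C\ell_m}$, which tends to $0$; Theorem~\ref{t:victory-spheres} then yields the conclusion.

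The step I expect to be the main obstacle is isolating the right dichotomy. No fixed-radius bound can beat the bias of an arbitrary tie-breaking rule, and a naive estimate of the sphere term fails precisely when $\P_X$ concentrates most of its local mass on a sphere. The resolution is that whenever the open ball $B_{r_m}$ retains a constant fraction of the mass of $\bar{B}_{r_m}$, the nearest neighbor falls strictly inside $B_{r_m}$ with overwhelming probability --- using only the property $m\,\P_X(\bar{B}_{r_m})\to\iop$ that an $\alpha$-sequence enjoys at a \leb{} point --- so the sphere contribution dies regardless of ties; and in the opposite regime~\eqref{e:ass-nn} is exactly what transports the vanishing of the \leb{} ratio from $X$ to $X^x_m$ (at the cost of the constant $C$), the crude estimate $\P(X^x_m\in S_{r_m})\le 1$ already sufficing. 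A minor but real subtlety is that the conditional expectations in~\eqref{e:ass-nn} only make sense when $\P_X(S_{r_m})>0$, which is why the split is placed at the threshold $\tfrac12$ rather than at $\P_X(S_{r_m})=0$.
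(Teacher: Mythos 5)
Your proof is correct, and its core mechanism genuinely differs from the paper's. Both arguments reduce, via Theorem~\ref{t:victory-spheres}, to showing that the sphere term $\E\bsb{\I_{S_{r_m}}(X^x_m)\,\babs{\eta(X^x_m)-\eta(x)}}$ vanishes along an $\alpha$-sequence, and both rely on \eqref{e:m-1}, Lemma~\ref{l:abs-cont-X-Xm} and condition~\eqref{e:ass-nn}. The paper then proceeds by subsequence extraction: from an arbitrary subsequence, either the sphere term is eventually zero, or (along a further subsequence) \eqref{e:ass-nn} bounds it by $C\,\E\bsb{\babs{\eta(X)-\eta(x)}\mid X\in S_{r_{m_l}}}$; if that conditional ratio stays bounded below by some $\e>0$, the Lebesgue property forces $\P_X(S_{r_{m_l}})/\P_X(B_{r_{m_l}})\to 0$, and the conclusion follows from the technical Lemma~\ref{l:ass-proba-tech} together with $m_l\,\P_X\brb{\bar{B}_{r_{m_l}}}\to\iop$. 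You dispense with both the subsequences and Lemma~\ref{l:ass-proba-tech} by splitting each fixed $m$ according to whether the open ball or the sphere carries at least half the mass of $\bar{B}_{r_m}$: in the interior-heavy case the elementary inclusion $\{X^x_m\in S_{r_m}\}\s\bigcap_{k=1}^m\{X_k\notin B_{r_m}\}$ combined with \eqref{e:m-1} kills the sphere term with no assumption on the tie-breaking rule, while in the sphere-heavy case \eqref{e:ass-nn} together with the crude bounds $\P\brb{X^x_m\in S_{r_m}}\le 1$ and $\P_X(S_{r_m})\ge\tfrac12\P_X\brb{\bar{B}_{r_m}}$ reduces it to $2C$ times the Lebesgue ratio; all side conditions (eventual validity of $r_m\in(0,R)$ and $m\ge M$, strict positivity of $\ell_m$ from Assumption~\ref{a:non-trivial}, well-posedness of the conditional expectations via Lemma~\ref{l:abs-cont-X-Xm}) are checked correctly. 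What your route buys is a shorter, subsequence-free argument with an explicit quantitative bound $\max\bcb{2\lno{\eta}_\iop \exp\brb{-\tfrac12\ell_m^{-\alpha}},\,2C\,\ell_m}$ on the sphere term; what the paper's route buys is reuse of Lemma~\ref{l:ass-proba-tech}, the same device that powers Theorem~\ref{t:ass-proba}, so that both sufficient conditions \eqref{e:ass-proba} and \eqref{e:ass-nn} are discharged by one shared mechanism.
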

\begin{proof}
Up to a rescaling, we can (and do) assume that $\babs{ \eta - \eta(x) } \le 1$.
Take any $\alpha$-sequence $(r_m)_{m\in \N, m\ge m_1}$, for some $\alpha\in (0,1)$.
By Theorem~\ref{t:victory-spheres}, it suffices to show that $\E \bsb{\I_{ S_{r_m} }(X^x_m) \, \babs{ \eta(X^x_m) - \eta(x) } } \to 0$, as $m\to \iop$. 
We will prove that this is the case by showing that each subsequence of $\brb{ \E \bsb{\I_{ S_{r_m} }(X^x_m) \, \babs{ \eta(X^x_m) - \eta(x) } } }_{m\in \N, m\ge m_1}$ has another subsequence that converges to zero.
Take any subsequence $(m_l)_{l \in \N}$ of $(m)_{m \in \N, m \ge m_1}$.
If there exists $L \in \N$ such that, for all $l\in \N$, if $l\ge L$, the following identity holds:
\[
    \E \Bsb{\I_{ S_{r_{m_l}} }(X^x_{m_l}) \babs{ \eta(X^x_{m_l}) - \eta(x) } }
=
    0 \;,
\]
\aaa{then} the claim is trivially true. 
Up to taking another subsequence, then, we can assume that, for all $l \in \N $,
\[
    \E \Bsb{\I_{ S_{r_{m_l}} }(X^x_{m_l}) \babs{ \eta(X^x_{m_l}) - \eta(x) } }
>
    0 \;,
\]
which in turn gives
\[
    \P \brb{ X^x_{m_l} \in S_{r_{m_l}} }
\ge
    \E \Bsb{\I_{ S_{r_{m_l}} }(X^x_{m_l}) \babs{ \eta(X^x_{m_l}) - \eta(x) } }
>
    0 \;.
\]
Moreover, recalling that $0 < r_m \to 0$ as $m\to \iop$ (by Lemma~\ref{l:incistato}), we have that there exists $M_1 \in \N$ such that, for all $l\in \N$, if $l\ge M_1$, then $r_{m_l} < R$.
Then, by Lemma~\ref{l:abs-cont-X-Xm} and condition \eqref{e:ass-nn}, we have, for each $l\in\N$, if $l\ge \max\{M, M_1\}$, 
\begin{align*}
    \E \Bsb{\I_{S_{r_{m_l}} } (X^x_{m_l}) \, \babs{ \eta(X^x_{m_l}) - \eta(x) } }
& \overset{\phantom{\eqref{e:ass-nn}}}{=}
    \E \Bsb{ \babs{ \eta(X^x_{\aaa{m_l}}) - \eta(x) } \mid X^x_{\aaa{m_l}} \in S_{r_{m_l}}}
    \,
    \P_{X^x_{m_l}} (S_{r_{m_l}})
\\
&\overset{\eqref{e:ass-nn}}{\le}
    C\,\E \Bsb{ \babs{ \eta(X) - \eta(x) } \mid X \in S_{r_{m_l}}}
    \,
    \P_{X^x_{m_l}} (S_{r_{m_l}})
\\
& \overset{\phantom{\eqref{e:ass-nn}}}{\le}
    C\,\E \Bsb{ \babs{ \eta(X) - \eta(x) } \mid X \in S_{r_{m_l}}}
\end{align*}
and thus, if $\E \Bsb{ \babs{ \eta(X) - \eta(x) } \mid X \in S_{r_{m_l}}} \to 0$ as $\aaa{l}\to \iop$, the theorem is proven.
Otherwise, up to taking another subsequence, we can (and do) assume that there exists $\e>0$ such that, for all $l\in \N$,
\[
    \E \Bsb{ \babs{ \eta(X) - \eta(x) } \mid X \in S_{r_{m_l}}} 
    \ge
    \e \;.
\]
Thus, for each $l\in \N$, we get
\begin{align*}
    0
& <
    \e \, \P_X \brb{ S_{r_{m_l}} } 
\le 
    \E \Bsb{ \babs{ \eta(X) - \eta(x) } \mid X \in S_{r_{m_l}}} 
    \,
    \P_X \brb{ S_{r_{m_l}} } 
\\
&
= 
    \E \Bsb{ \I_{S_{r_{m_l}}}(X) \, \babs{ \eta(X) - \eta(x) }} \;.
\end{align*}
Equivalently, for any $l\in \N$,
\[
    \frac{1}{\P_X \brb{ S_{r_{m_l}}}} 
\ge 
    \e \frac{1}{\E \Bsb{ \, \I_{S_{r_{m_l}}}(X) \babs{ \eta(X) - \eta(x) }}}
\]
which in turn implies
\begin{multline*}
    \frac{ \P_X \brb{B_{r_{m_l}} } }{ \P_X \brb{S_{r_{m_l}} } } + 1 
= 
    \frac{ \P_X \brb{B_{r_{m_l}} } + \P_X \brb{S_{r_{m_l}} } }{ \P_X \brb{S_{r_{m_l}} } } 
= 
    \frac{  \P_X \brb{\bar{B}_{r_{m_l}} } }{ \P_X \brb{S_{r_{m_l}} } } 
\\
\ge 
    \e \frac{ \P_X \brb{\bar{B}_{r_{m_l}} } }{ \E \Bsb{ \, \I_{S_{r_{m_l}}}(X) \babs{ \eta(X) - \eta(x) }} } 
\ge 
    \e \frac{ \P_X \brb{\bar{B}_{r_{m_l}} } }{ \E \Bsb{ \, \I_{\bar{B}_{r_{m_l}}}(X) \babs{ \eta(X) - \eta(x) }} }
\end{multline*}
and the last term diverges to infinity as $l \to \iop$ because $x$ is a \leb{} point and $0<r_{m_l} \to 0$ as $l\to \iop$.
Therefore, if $l\to \iop$
\[
    \frac{ \P_X \brb{B_{r_{m_l}} } }{ \P_X \brb{S_{r_{m_l}} } } 
    \to \iop\;,
\qquad \text{or equivalently } \qquad
    \frac{ \P_X \brb{S_{r_{m_l}} } }{ \P_X \brb{B_{r_{m_l}} } } 
    \to 0\;.
\]
Hence, there exists $K \ge 0$ such that, for all $l\in \N$, $\P_X(S_{r_{m_l}}) \le K \, \P_X(B_{r_{m_l}})$.
Furthermore, being $(r_{m_l})_{l \in \N}$ a subsequence of an $\alpha$-sequence and $x$ a \leb{} point, we have that $m_l \P_X(\bar{B}_{r_{m_l}}) \to \infty$ as $l \to \infty$ by \eqref{e:m-1}. 
Then, the assumptions of Lemma~\ref{l:ass-proba-tech} are satisfied, implying that $\E \bsb{\I_{S_{r_{m_l}} } (X^x_{m_l}) \, \babs{ \eta(X^x_{m_l}) - \eta(x) } }  \to 0$ as $l\to \iop$ and concluding the proof.
\end{proof}

\aaa{We} now illustrate with a concrete example that Assumption~\eqref{e:ass-nn} (hence Theorem~\ref{t:ass-nn}) can still hold even if the nearest neighbor rule is arbitrarily (but finitely) biased.

\begin{example}
\label{ex:counter-rev}
Let $\cX := \bcb{ -1, -\frac{1}{2}, -\frac{1}{3}, \l } \cup \{0\} \cup \bcb{ \l, \frac{1}{3}, \frac{1}{2}, 1 }$, $d$ be the distance induced on $\cX$ by the Euclidean metric, $\cB$ be the Borel $\sigma$-algebra of $(\cX, d)$, and $\mu \colon \cB \to [0,1]$ be the unique probability measure such that, for all $n\in \N$,
\[
    \mu\lrb{ \lcb{ -\frac{1}{n} } }
=
    \frac{1}{R} \frac{n-1}{n} \frac{1}{2^{2^n}}
\;,\quad
    \mu\lrb{ \lcb{ \frac{1}{n} } }
=
    \frac{1}{R} \frac{1}{n} \frac{1}{2^{2^n}}
\;, \quad\text{where }
    R
:=
    \sum_{n\in \N} \frac{1}{2^{2^n}} \;.
\]
Fix $C>1$ and $(p_n)_{n\in\N} \s (0,1)$ such that, for all $n\in \N$, $p_n \le (C-1)/n$. 
We define $\Omega = \cX \times \cX^{\N} \times \{0,1\}^\N$, $\cF = \cB \otimes \bigotimes_{k\in\N} \cB \otimes \bigotimes_{h \in \N} 2^{\{0,1\}}$, and $\P$ as the unique probability measure on $\cF$ such that for all $B, B_1, B_2, \l \in \cB$ and $\tht_1,\tht_2,\ldots \in \{0,1\}$,
\[
    \P\brb{ B\times B_1 \times B_2 \times \l \times \{\tht_1\} \times \{\tht_2\} \times \ldots }
= 
    \mu(B) \, \lrb{ \prod_{i=1}^{+\iop} \mu(B_i) } \prod_{n=1}^{+\iop} p_n^{\tht_n}(1-p_n)^{1-\tht_n} \;.
\]
It is well-known that such a probability measure exists \cite[Chapter VII, \aaa{Section~38}, Theorem~B]{halmos2013measure}. 
We define $X\colon \Omega \to \cX$, $(x, x_1, x_2, \l, \tht_1, \tht_2, \l) \mapsto x$, and for all $k\in \N$, $X_k \colon \Omega \to \cX$, $(x, x_1, x_2, \l, \tht_1, \tht_2, \l) \mapsto x_k$ and $\Theta_k \colon \Omega \to \{0,1\}$, $(x, x_1, x_2, \l, \tht_1, \tht_2, \l) \mapsto \tht_k$.
By definition, $(\Omega, \cF, \P)$ is a probability space. 
By construction, $X, X_1, X_2, \l$ are $\P$-independent random variables with common distribution $\P_X = \mu$, and they are $\P$-independent of $\Theta_1, \Theta_2, \ldots$, which are Bernoulli random variables with parameters $p_1,p_2,\ldots$ respectively.
Let
\[
    A
:=
    \cX \cap (0,\iop)\;,
\qquad
    \eta := \I_{A}\;,
\qquad\text{and }
    x := 0
    \;.
\]
Finally, for all $m\in \N$, let $X^x_m$ the nearest neighbor such that, whenever $\bcb{ -\frac{1}{n}, \frac{1}{n} } = \argmin_{ X' \in \{X_1, \l, X_m\} } X' $ for some $n\in \N$, then $X^x_m = \frac{1}{n}$ if and only if $\Theta_n=1$. 

\noindent At a high-level:
\begin{itemize}[topsep = 2pt, parsep = 2pt, itemsep = 2pt]
    \item given that $\argmin_{ X' \in \{X_1, \l, X_m\} } X' = \bcb{ -\frac{1}{n}, \frac{1}{n} }$ for some $n\in\N$, by definition of $\mu$, the odds of hitting $\frac{1}{n}$ against hitting $-\frac{1}{n}$ are approximately $1:n$, a fact that a fair tie-breaking rule should reflect; however, if we choose (for some $c>0$ and sufficiently large $n$'s) $p_n \ge c/n$, the odds become at least $c:n$; i.e., we can artificially increase by at least $c$-times the odds of picking positive (over negative) values, with $c$ large (if $C$ is also large);
    \item \eqref{e:ass-nn} holds because $p_n \le (C-1)/n$ poses a bound on the bias of the tie-breaking rule;
    \item \eqref{e:ass-proba} does not hold because the measure $\P_X(S_{1/n})$ of spheres $S_{1/n}$ decreases super-exponentially as $n \to \iop$;
    \item $x$ is a \leb{} point for $\eta$ because the measure $\P_X\brb{ \bar{B}_{1/n} }$ is more and more biased towards negative values as $n$ approaches $\iop$;
\end{itemize}
We begin by proving rigorously that $x$ is a \leb{} point.
For each $r>0$, defining $n_r := \min\{n\in\N \mid 1/n \le r\}$, we have that
\begin{multline*}
    \frac
    {
        \E \Bsb{ \I_{\bar{B}_r}(X)\, \babs{ \eta(X) - \eta(x) } }
    }
    {
        \P_X \brb{ \bar{B}_r }
    }
=
    \frac
    {
        \P_X \brb{ A \cap \bar{B}_r }
    }
    {
        \P_X \brb{ \bar{B}_r }
    }
=
    \frac
    {
        \sum_{k\ge n_r} \P \brb{ X =  \frac{1}{k} }
    }
    {
        \sum_{k\ge n_r} \P \lrb{ X = \pm\frac{1}{k} }
    }
\\
=
    \frac
    {
        \sum_{k\ge n_r} \frac{1}{k} \P \lrb{ X = \pm\frac{1}{k} }
    }
    {
        \sum_{k\ge n_r} \P \lrb{ X = \pm \frac{1}{k} }
    }
\le
    \frac
    {
        \sum_{k\ge n_r} \frac{1}{n_r} \P \lrb{ X = \pm \frac{1}{k} }
    }
    {
        \sum_{k\ge n_r} \P \lrb{ X = \pm \frac{1}{k} }
    }
=
    \frac{1}{n_r}
\to
    0, \quad \text{as } r \to 0^+\;,
\end{multline*}
hence $x$ is a \leb{} point.

While it is immediate that \eqref{e:ass-proba} does not hold, to show rigorously that \eqref{e:ass-nn} does requires some care.
Without loss of generality, we can (and do) assume that $r=1/n$, for some $n \in \N$.
Let $m,n\in \N$ and $E := \argmin_{ X' \in \{X_1, \l, X_m\} } X' = \bcb{ -\frac{1}{n}, \frac{1}{n} }$. 
Since
\begin{align*}
    \P\lrb{ E = \lcb{\frac{1}{n}} }
&
    =
    \sum_{\varnothing \neq I \s \{1,\ldots,m\}}
    \lrb{ \frac{1}{n} \P \lrb{ \labs{X} = \frac{1}{n} } }^{\labs{I}}
    \lrb{ \P\lrb{ \labs{X} > \frac{1}{n} } }^{m-\labs{I}} \;,
\\
    \P\lrb{ E \s \lcb{-\frac{1}{n}, \frac{1}{n}} }
&
    =
    \sum_{\varnothing \neq I \s \{1,\ldots,m\}}
    \lrb{ \P \lrb{ \labs{X} = \frac{1}{n} } }^{\labs{I}}
    \lrb{ \P\lrb{ \labs{X} > \frac{1}{n} } }^{m-\labs{I}} \;,
\end{align*}
we have that
\begin{align*}
&
    \E \Bsb{ \babs{ \eta(X^x_m) - \eta(x) } \mid X^x_m \in S_{1/n} }
=
    \frac{\P\Brb{ \brb{ E = \lcb{ \frac{1}{n} } } \cup \brb{ E=\lcb{-\frac{1}{n},\frac{1}{n} } \cap \Theta_n = 1 } }}{\P\brb{ E \s \lcb{ -\frac{1}{n}, \frac{1}{n} } }}
\\
&
\hspace{17.16469pt}
\le
    \frac{\P\lrb{ E = \lcb{ \frac{1}{n} }  }}{\P\lrb{ E \s \lcb{ -\frac{1}{n}, \frac{1}{n} } }} + \frac{C-1}{n} 
\le 
    \frac{1}{n} + \frac{C-1}{n} = C \E \Bsb{ \babs{ \eta(X) - \eta(x) } \mid X \in S_{1/n} }
\end{align*}
and so, by Theorem~\ref{e:ass-nn}, we know that $\E \bsb{ \babs{ \eta(X^x_m) - \eta(x) }} \to 0$ as $m\to \iop$.
\end{example}

In Theorems~\ref{t:ass-proba}~and~\ref{t:ass-nn} we proved that if conditions \eqref{e:ass-proba} or \eqref{e:ass-nn} are satisfied and $x$ is a \leb{} point, then $\E \bsb{\babs{ \eta(X^x_m) - \eta(x) } }  \to 0$, as $m\to \iop$.
The reader might be wondering if this implication holds with no assumptions other than that $x$ is a \leb{} point and $X^x_m$ is a nearest neighbor (among $X_1, \l, X_m$), i.e., if the result holds (in general) when neither condition \eqref{e:ass-proba} nor condition \eqref{e:ass-nn} are satisfied.
A modification of the previous example shows that this is not the case.

\begin{example}
\label{ex:counter}
Consider the same setting as in Example~\ref{ex:counter-rev}, where we now define, for all $m\in \N$, $X^x_m$ as the nearest neighbor such that, whenever $\bcb{ -\frac{1}{n}, \frac{1}{n} } \s \argmin_{ X' \in \{X_1, \l, X_m\} } X'$ for some $n\in \N$, then $X^x_m = \frac{1}{n}$. 

\noindent At a high-level:
\begin{itemize}[topsep = 2pt, parsep = 2pt, itemsep = 2pt]
    \item in contrast to the nearest neighbor defined in Example~\ref{ex:counter-rev}, here $X^x_m$ is ``infinitely'' biased towards positive values, breaking ties always in their favor.
    \item as in Example~\ref{ex:counter-rev}, $x$ is a \leb{} point for $\eta$ because the measure $\P_X\brb{ \bar{B}_{1/n} }$ is more and more biased towards negative values as $n$ approaches $\iop$;
    \item $\eta(X^x_m)$ does not converge to $\eta(x)$ in $\cL^1$ because of the interplay between the tie-breaking rule being biased towards the direction of positive values and the measure $\P_X(S_{1/n})$ of spheres $S_{1/n}$ decreasing super-exponentially as $n \to \iop$.
\end{itemize}
The same computation as in Example~\ref{ex:counter-rev} shows that $x$ is a \leb{} \aaa{point.}
We show now that $\eta(X^x_m)$ does not converge to $\eta(x)$ in $\cL^1$.
First, for all $m\in \N$, we have
\begin{multline*}
    \E \Bsb{ \babs{ \eta(X^x_m) - \eta(x) } }
=
    \E \Bsb{ \I_{A}(X^x_m) }
=
    \P ( X^x_m \in A )
=
    \sum_{n\in \N} \P \lrb{ X^x_m = \frac{1}{n} }
\\
\begin{aligned}
&\ge
    \sum_{n \in \N} \P \lrb{ \bigcup_{k=1}^m \lcb{ X_k = \frac{1}{n} } \cap \bigcap_{\substack{i=1\\i\neq k}}^m \lrb{ \lcb{ \labs{X_i} > \frac{1}{n} } \cup \lcb{ X_i = -\frac{1}{n} } } }
\\
&=
    m 
    \sum_{n \in \N} 
    \P \lrb{ X = \frac{1}{n} }
    \lrb{ \P \lrb{ \labs{X} > \frac{1}{n} } + \P \lrb{ X = -\frac{1}{n} } }^{m-1}
\\
&=
    m 
    \sum_{n \in \N} 
    \P \lrb{ X = \frac{1}{n} }
    \lrb{ \sum_{k=1}^{n-1} \P \lrb{ X = \pm \frac{1}{k} } 
    + \P \lrb{ X = -\frac{1}{n} } }^{m-1}
\\
&=
    m 
    \sum_{n \in \N} 
    \frac{1}{R} \frac{1}{n} \frac{1}{2^{2^n}}
    \lrb{ \sum_{k=1}^{n} 
    \frac{1}{R} \frac{1}{2^{2^k}} 
    - \frac{1}{R} \frac{1}{n} \frac{1}{2^{2^n}} }^{m-1}
\\
&=
    m 
    \sum_{n \in \N} 
    \frac{1}{R} \frac{1}{n} \frac{1}{2^{2^n}}
    \lrb{  
    \lrb{ 1 - \sum_{k=n+1}^{\iop} \frac{1}{R} \frac{1}{2^{2^k}} }
    - \frac{1}{R} \frac{1}{n} \frac{1}{2^{2^n}} }^{m-1}
\\
&\ge
    m 
    \sum_{n \in \N} 
    \frac{1}{R} \frac{1}{n} \frac{1}{2^{2^n}}
    \lrb{  
    1
    - \frac{1}{R} \frac{1}{n} \frac{1}{2^{2^n}} }^{m-1} \;.
\end{aligned}
\end{multline*}
This inequality will be used to prove that for countably many $m\in \N$, the quantity $ \E \bsb{ \babs{ \eta(X^x_m) - \eta(x) } }$ is bounded away from zero.
To do so, for all $j \in \N$, we fix an $m_j \in \N$ such that
\[
    \frac{1}{2m_j}
\le
    \frac{1}{R} \frac{1}{j} \frac{1}{2^{2^j}}
\le
    \frac{2}{m_j} \;.
\]
Note that $m_j \to \iop$ as $j\to \iop$. Then, for all $j\in \N$, we have
\begin{align*}
    \E \Bsb{ \babs{ \eta(X^x_{m_j}) - \eta(x) } }
&\ge
    m_j 
    \sum_{n \in \N} 
    \frac{1}{R} \frac{1}{n} \frac{1}{2^{2^n}}
    \lrb{  
    1
    - \frac{1}{R} \frac{1}{n} \frac{1}{2^{2^n}} }^{m_j-1}
\\
& >
    m_j 
    \frac{1}{R} \frac{1}{j} \frac{1}{2^{2^j}}
    \lrb{  
    1
    - \frac{1}{R} \frac{1}{j} \frac{1}{2^{2^j}} }^{m_j-1}
\\
& \ge 
    m_j 
    \frac{1}{2 m_j}
    \lrb{  
    1
    - \frac{2}{m_j} }^{m_j-1}
\\
& 
=
    \frac{1}{2}
    \lrb{  
    1
    - \frac{2}{m_j} }^{m_j-1} \to \frac{1}{2 e^{2}}
    \qquad \text{as } j\to \iop \;.
\end{align*}
This argument shows that there exist countably many $m\in \N$ such that the inequality $\E \Bsb{ \babs{ \eta(X^x_{m}) - \eta(x) } } \ge 1/(4e^2)$ holds, which in turn implies that $\eta(X^x_m)$ does \emph{not} converge to $\eta(x)$ in $\cL^1$.
\end{example}

\subsection{A broad class of tie-breaking rules}
\label{s:tie-break}

In this section we introduce a broad class of tie-breaking rules for which condition~\eqref{e:ass-nn} is always satisfied, regardless of how pathological $\P_X$ is. 
To do so, we introduce Independent Selectors of Indices of Minimum Numbers (or ISIMINs for short).
\begin{definition}[ISIMIN]
\label{d:ISIMIN}
We say that a sequence of pairs $(\smn_m, \Theta_m)_{m\in \N}$ is \aaa{an} \emph{Independent Selector of Indices of Minimum Numbers} (ISIMIN)\footnote{ISIMIN is pronounced ``easy-min''.} if, for all $m \in \N$, there exists a measurable space $(\cZ_m,\cG_m)$ such that $\Theta_m \colon \Omega \to \cZ_m$ is a random variable $\P$-independent from $X_1,\dots,X_m$ and $\smn_m \colon [0,\iop)^m \times \cZ_m \to \{1,\l,m\}$ is a measurable function satisfying, for all $r_1, \l , r_m \ge 0$ and all $z \in \cZ_m$,
\[
    \smn_m(r_1, \l, r_m, z) \in \argmin_{k \in \{1,\l,m\}} r_k \;.
\]
\end{definition}
At a high-level, the function $\smn_m$ selects the smallest among $m$ numbers, relying on an independent source $\Theta_m$ to break ties.

One of the classical ways to break ties in \nna{}s \citep{devroye1981almost} is lexicographically.
More precisely, ties are broken by selecting the smallest index among all indices minimizing the distance from $x$, i.e., 
\begin{equation}
    \label{e:det-choice}
    \min \lrb{ \argmin_{k \in \{1,\l,m\}} d(x,X_k) } \;.
\end{equation}
We can easily represent such a selection through \aaa{an ISIMIN} by taking, for all $m \in \N$, $\cZ_m := \{0\}$, $\cG_m := \bcb{ \{0\},\varnothing }$, $\Theta_m \equiv 0$,  and for all $r_1,\l,r_m \ge 0$,
\[
    \smn_m(r_1,\l,r_m, 0) := \min \lrb{ \argmin_{k \in \{1,\l,m\}} r_k } \;.
\]
We prove that such a sequence $(\smn_m, \Theta_m)_{m\in\N}$ is \aaa{an ISIMIN}.
With a slight abuse of notation, for all $m\in \N$ and all $i\in \{1,\l,m\}$, we denote the coordinate map $(r_1, \l, r_m, z) \mapsto r_i$ simply by $r_i$.
Then, for each $m\in \N$ and all $k \in \{1,\l,m\}$, we have that $\{ \smn_m = k \}$ is measurable,  being the Cartesian product of a measurable set (intersection of open and closed sets) with $\cZ_m$:
\[
    \{ \smn_m = k \} 
=
    \lrb
    {
        \bigcap_{j=1}^{k-1} \lcb{ r_j > r_k }
    \cap 
        \bigcap_{j=k+1}^{m} \lcb{ r_j \ge r_k }
    }
    \times
    \cZ_m \;.
\]
Therefore $\psi_m$ is measurable. Moreover, for each $m\in \N$, the constant function $\Theta_m \equiv 0$ is trivially $\P$-independent of $X_1, \l, X_m$.
Thus, the sequence $(\smn_m, \Theta_m)_{m\in \N}$ is \aaa{an ISIMIN}.
With this choice, we can reproduce the deterministic tie-breaking rule \eqref{e:det-choice} by taking, for each $m\in \N$,
\[
    \min \lrb{ \argmin_{k \in \{1,\l,m\}} d(x,X_k) }
=
    \smn_m \brb{ d(x,X_1), \l, d(x,X_m), \Theta_m  } \;.
\]

Another classical way to break ties is picking one of the closest points uniformly at random \citep{cerou2006nearest}.
More precisely, one draws a number $\tht_k$ in $[0,1]$ uniformly at random, and independently of everything else, for each $X_k$.
If there exist multiple $X_k \in \{X_1,\l,X_m\}$ with  $d(x,X_k) = \min_{j\in \{1,\l,m\}}d(x,X_j)$, then ties are broken by picking the smallest $X_k$ with the smallest value of $\tht_k$.
In the zero-probability event in which multiple closest $X_k$ have the same (smallest) value of $\tht_k$, one of them is chosen arbitrarily, e.g., lexicographically.
This selection can also be represented by \aaa{an ISIMIN}.
Indeed, for all $m\in \N$, let $\cZ_m := [0,1]^m$, $\cG_m$ be the Borel $\sigma$-algebra of $\cZ_m$, and we assume there exist $m$ uniform random variables $\tht_1, \l, \tht_m \colon \Omega \to [0,1]$ independent of $X_1, \l, X_m$ and also of each others (if they do not exist, one can simply enlarge the original probability space to accommodate them with a construction analogous to the one we present in Section~\ref{s:charact-arbitrary-measures}). 
Then for all $m\in \N$ we take $\Theta_m := (\tht_1, \l, \tht_m)$ and for all $r_1, \l, r_m \ge 0$,  $t_1, \l, t_m \in [0,1]$,
\[
    \smn_m(r_1, \l, r_m, t_1, \l, t_m) :=
    \min \lrb{ \argmin_{j \in  \underset{k\in \{1,\l,m\}}{\argmin} r_k} t_j } \;.
\]
We prove that such a sequence $(\smn_m, \Theta_m)_{m\in \N}$ is \aaa{an ISIMIN}. 
With a slight abuse of notation, for all $m\in \N$ and all $i\in \{1,\l,m\}$, we denote the coordinate map $(r_1, \l, r_m, t_1, \l, t_m) \mapsto r_i$ simply by $r_i$, and similarly, the coordinate map $(r_1, \l, r_m, t_1, \l, t_m) \mapsto t_i$ simply by $t_i$.
Then, for each integer $m\in \N$ and any $k\in \{1,\l,m\}$ we have that $\{\smn_m = k\}$ is measurable, being the union of an intersection of open and closed sets (in the following formula, think of $A$ as the set of indices that tie with $k$):
\[
        \bigcup_{\substack{A \s \{1,\l, m\} \\ A \not \ni k }}
        \lrb
        {
            \bigcap_{\substack{j \in \{1,\l,m\} \\ j \notin (\{k\}\cup A) } } \{r_k < r_j\}
            \cap
            \bigcap_{j\in A}  
            \{r_k = r_j\}
            \cap
            \bigcap_{\substack{j\in A \\ j < k}} 
            \{t_k < t_j\}
            \cap
            \bigcap_{ \substack{j\in A \\ j > k}} 
            \{t_k \le t_j\}
        } \;,
\]
where we recall that the intersection over an empty set of indices is the universe $[0,\iop)^m \times \cZ_m$. 
Therefore, for all $m\in \N$, the function $\psi_m$ is measurable.
Moreover, for each $m \in \N$, the function $\Theta_m = (\tht_1, \l, \tht_m)$ is $\P$-independent of $X_1, \l, X_m$ by construction.
Thus, the sequence $(\smn_m, \Theta_m)_{m\in \N}$ is a ISIMIN.
With this choice, we can reproduce the random tie-breaking rule in \citep{cerou2006nearest} by taking, for each $m\in \N$, 
\[
    \smn_m \brb{ d(x,X_1), \l, d(x,X_m), \Theta_m  } \;.
\]

\noindent We now define nearest neighbors according to arbitrary ISIMINs.

\begin{definition}[Nearest neighbors according to \aaa{an ISIMIN}]
\label{d:nn-selector}
Let $(\smn_m, \Theta_m)_{m\in \N}$ be \aaa{an ISIMIN}. 
For each $m \in \N$, the \emph{nearest neighbor} of $x$ (among $X_1, \l, X_m$), according to $(\smn_m, \Theta_m)$ is defined by
\[
    X^x_m 
:= 
    X_{\smn_m ( d(x,X_1), \l, d(x,X_m), \Theta_m )} \;.
\]
\end{definition}
Note that, being $(\smn_m, \Theta_m)_{m\in \N}$ a sequence of measurable pairs, nearest neighbors defined according to ISIMINs are also measurable, i.e., they are actually nearest neighbors according to Definition~\ref{n:tr-set-X'm}.
The next proposition shows that these nearest neighbors always satisfy an even stronger condition than \eqref{e:ass-nn}, regardless of pathological $\P_X$ is. 
The intuition behind it is that the distribution of $X^x_m$ is obtained by ``pulling'' the distribution of $X$ towards $x$.
Thus, if we prevent the ``pull'' towards $x$ by conditioning $X^x_m$ to have a constant distance from $x$, the distributions of $X^x_m$ and $X$ might coincide (at least if $X^x_m$ does not break ties with directional preferences, since, as we saw in Example~\ref{ex:counter}, directional preferences might distort the distribution of $X^x_m$ badly).
This is indeed the case for nearest neighbors defined according to ISIMINs.
Indeed, ISIMINs hide directions, basing decisions only on distances and (independent random choices of) indices.

\begin{proposition}
\label{p:ISIMIN}
Let $(\psi_m, \Theta_m)_{m\in\N}$ be \aaa{an ISIMIN}.
Assume that, for all $m\in \N$, the nearest neighbor $X^x_m$ (among $X_1,\l,X_m$) is defined according to $(\psi_m, \Theta_m)$.
If $r > 0$ is such that $\P_X ( S_r ) >0$,
then, for any Borel subset $A$ of $(\cX,d)$, it holds that
\[
    \P \brb{ X^x_m \in A \mid  X^x_m \in S_r } 
=
    \P\brb{ X \in A \mid X \in S_r } \;.
\]
This, in particular, implies that the condition \eqref{e:ass-nn} holds.
\end{proposition}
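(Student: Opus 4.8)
The plan is to condition on the value of the minimal distance and show that, given this value equals $r$, the index selected by the ISIMIN is ``uniformly spread'' over the sample points attaining that minimum in a way that exactly reproduces the conditional law of $X$ on $S_r$. Concretely, I would first fix $r>0$ with $\P_X(S_r)>0$ and decompose the event $\{X^x_m\in S_r\}$ according to which nonempty subset $I\subseteq\{1,\dots,m\}$ of indices achieves the minimum distance and lands exactly on $S_r$: writing $D_k:=d(x,X_k)$, the event $\{X^x_m\in S_r\}$ is the disjoint union over $\varnothing\neq I$ of the events $E_I:=\{\forall k\in I,\ D_k=r\}\cap\{\forall k\notin I,\ D_k>r\}$. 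On $E_I$ the ISIMIN output $\psi_m(D_1,\dots,D_m,\Theta_m)$ must lie in $I$ (since $\argmin_k D_k=I$), and crucially $\psi_m$ depends on the $X_k$'s only through the distances $D_k$, together with $\Theta_m$, which is independent of $X_1,\dots,X_m$.

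The key step is the exchangeability/symmetry observation: for a fixed index set $I$ and a fixed Borel $A\subseteq\cX$, I want to compute $\P\brb{X^x_m\in A,\ E_I}$ and compare it with what the conditional law of $X$ on $S_r$ would predict. I would condition on $\Theta_m$ and on the distances $D_1,\dots,D_m$; given these, $\psi_m$ picks a \emph{deterministic} index $j=\psi_m(D_1,\dots,D_m,\Theta_m)\in I$, and the residual randomness is the conditional law of $(X_1,\dots,X_m)$ given $(D_1,\dots,D_m)$. Because the $X_k$ are i.i.d., this conditional law is a product of the conditional laws $\P(X\in\cdot\mid d(x,X)=D_k)$, and in particular the conditional law of $X_j$ given $D_j=r$ is exactly $\P(X\in\cdot\mid X\in S_r)$ — independently of which $j\in I$ was chosen and of the other coordinates. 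Hence $\P\brb{X^x_m\in A\mid D_1,\dots,D_m,\Theta_m,\,E_I\text{-pattern}}=\P(X\in A\mid X\in S_r)$ pointwise, and integrating back over $\Theta_m$ and the distance vector (summing over all $I$) gives
\[
    \P\brb{X^x_m\in A,\ X^x_m\in S_r}
=
    \P(X\in A\mid X\in S_r)\;\P\brb{X^x_m\in S_r}\;,
\]
which is the claimed identity after dividing by $\P(X^x_m\in S_r)>0$ (positive by Lemma~\ref{l:abs-cont-X-Xm}, since $\P_X(S_r)>0$).

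For the final sentence, once the conditional-law identity is established, condition \eqref{e:ass-nn} follows immediately with $C=1$, $R=+\iop$, $M=1$: taking $A$ to be (a Borel version of) $\{x'\in S_r : |\eta(x')-\eta(x)|\le t\}$ — or more directly integrating $|\eta(\cdot)-\eta(x)|$ against both sides of the identity — yields $\E\bsb{|\eta(X^x_m)-\eta(x)|\mid X^x_m\in S_r}=\E\bsb{|\eta(X)-\eta(x)|\mid X\in S_r}$ for every $r$ with $\P_X(S_r)>0$ and every $m\ge 1$. The main obstacle I anticipate is purely a matter of bookkeeping rigor rather than of ideas: one must justify carefully the disintegration of $(X_1,\dots,X_m)$ with respect to the distance vector $(D_1,\dots,D_m)$ and argue that, conditionally on this vector, the selected index $j=\psi_m(D_1,\dots,D_m,\Theta_m)$ is a function of data that is independent of $X_j$'s position \emph{within} $S_r$ — i.e., that conditioning on $D_j=r$ leaves the conditional law of $X_j$ untouched by the tie-breaking. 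This is exactly where the hypothesis that $\psi_m$ sees only distances and an independent $\Theta_m$ (rather than directions) is used, and it is the only place the ISIMIN structure is essential; the rest is a summation over the partition into the events $E_I$.
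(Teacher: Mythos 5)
Your proposal is built on the same core idea as the paper's proof: the selector sees only the distance vector and an independent source $\Theta_m$, so, given that the selected point lies on $S_r$, its position on the sphere must follow the law of $X$ conditioned on $S_r$; and the final claim about \eqref{e:ass-nn} with $C=1$ (by integrating $|\eta(\cdot)-\eta(x)|$ against the two conditional laws) is exactly how the paper concludes. The difference is in the implementation, and that is where the one genuine weak point sits. You decompose over the set $I$ of minimizers and then invoke a disintegration: ``the conditional law of $(X_1,\dots,X_m)$ given $(D_1,\dots,D_m)$ is a product of the conditional laws $\P(X\in\cdot\mid d(x,X)=D_k)$.'' As stated, this is a regular-conditional-distribution statement about conditioning on the (generally non-atomic) distance vector, and its existence is not guaranteed in the paper's setting of an arbitrary, possibly non-separable metric space with an arbitrary Borel law; the paper is explicitly careful on this point (see the remark before Lemma~\ref{l:fake-indep}, whose purpose is precisely to avoid conditional-probability/disintegration arguments so that the results hold without separability). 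So, taken literally, your key step is not justified in the intended generality.

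The gap is patchable, and the patch essentially collapses your argument onto the paper's: you only ever use the disintegration at the atom $\{D_j=r\}$, which has positive probability since $\P_X(S_r)>0$, and there the identity $\P\brb{X_k\in A\mid \sigma(D_1,\dots,D_m,\Theta_m)}=\P_X(A\mid S_r)$ a.s.\ on $\{D_k=r\}$ can be proved with elementary tools (product sets plus a $\pi$--$\lambda$ argument), using only that $X_k$ is independent of $\brb{(D_i)_{i\neq k},\Theta_m}$. The paper avoids even this by decomposing over the \emph{selected index} $k$ rather than the minimizer set $I$: on $\{X_k\in S_r\}\cap\{\smn_m(R_{1:m},\Theta_m)=k\}$ one may replace $R_k$ by the constant $r$ inside $\smn_m$, after which the event $\{\smn_m(R_{1:k-1},r,R_{k+1:m},\Theta_m)=k\}$ is independent of $X_k$, and plain (unconditional) independence gives the factorization $\P\brb{\{X_k\in A\cap S_r\}\cap\{\smn_m(\cdots)=k\}}=\P_X(A\mid S_r)\,\P\brb{\{R_k=r\}\cap\{\smn_m(\cdots)=k\}}$; summing over $k$ and dividing by $\P(X^x_m\in S_r)>0$ (Lemma~\ref{l:abs-cont-X-Xm}) finishes. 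If you rewrite your conditioning step in this positive-probability, independence-only form, your proof becomes correct in full generality and is essentially the paper's.
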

\begin{proof}
Let $r > 0$ be such that $\P_X ( S_r ) > 0$. Fix any $m\in \N$.
Recall that, by Lemma~\ref{l:abs-cont-X-Xm}, we have that $\P_{X^x_m} (S_r) > 0$.
Then, both conditional probabilities are well-defined.

To simplify the notation, we define the auxiliary functions
\[
    R_1:=d(x,X_1), \l, R_m:=d(x,X_m)
\]
and, for all $i,j \in \{1,\l,m\}$, we let $R_{i:j}:= R_i, R_{i+1}, \l, R_{j}$ with the understanding that $R_{i:j}$ does not appear if $j<i$.
With this notation, we have
\[
    X^x_m = X_{\smn_m (R_{1:m}, \Theta_m) }\;.
\]
For each $k \in \{ 1, \l,m \}$, note that $R_1, \l, R_{k-1}, X_k, R_{k+1}, \l, R_m, \Theta_m$ are $\P$-inde\-pendent random variables.
Then for each Borel set $A$ of $(\cX,d)$ we have that
{
\allowdisplaybreaks
\begin{align*}
& \P \brb{ X^x_m \in A \cap S_r }  
= \P \brb{ X_{\smn_m (R_{1:m}, \Theta_m) } \in A \cap S_r } \\
&\hspace{59.07991pt} = \sum_{k=1}^{m} \P \Brb{ \bcb{ X_{\smn_m (R_{1:m}, \Theta_m) }  \in A \cap S_r } \cap \bcb{ \smn_m (R_{1:m}, \Theta_m) = k } } \\
&\hspace{59.07991pt} = \sum_{k=1}^{m} \P \Brb{ \bcb{ X_k \in A \cap S_r } \cap \bcb{ \smn_m \lrb{ R_{1:k-1}, r, R_{k+1:m}, \Theta_m } = k } } \\
&\hspace{59.07991pt} = \sum_{k=1}^{m} \P \lrb{ X_k \in A \cap S_r } \, \P \brb{ \smn_m \lrb{ R_{1:k-1}, r, R_{k+1:m}, \Theta_m } = k } \\
&\hspace{59.07991pt} = \P_X \lrb{ A \mid S_r } \sum_{k=1}^{m} \P \lrb{R_k = r} \, \P \brb{ \smn_m \lrb{ R_{1:k-1}, r, R_{k+1:m}, \Theta_m} = k } \\
&\hspace{59.07991pt} = \P_X \lrb{ A \mid S_r } \sum_{k=1}^{m} \P \Brb{ \bcb{ \smn_m \lrb{ R_{1:k-1}, r, R_{k+1:m}, \Theta_m } = k } \cap \lcb{R_k = r} } \\
&\hspace{59.07991pt} = \P_X \lrb{ A \mid S_r } \sum_{k=1}^{m} \P \Brb{ \bcb{ \smn_m (R_{1:m}, \Theta_m) = k } \cap \lcb{ X_{\smn_m (R_{1:m}, \Theta_m) } \in S_r} }  \\
&\hspace{59.07991pt} = \P_X \lrb{ A \mid S_r } \, \P \brb{ X_{\smn_m (R_{1:m}, \Theta_m) }  \in S_r } = \P_X \lrb{ A \mid S_r } \, \P \brb{ X^x_m  \in S_r } \;,
\end{align*}
}

which in turn gives
\begin{equation*}
    \P \lrb{ X^x_m \in A \mid X^x_m \in S_r } 
= 
    \frac{ \P \lrb{ X^x_m \in A \cap S_r } }{ \P \lrb{ X^x_m  \in S_r } } 
= 
    \P_X\lrb{ A \mid S_r } \;.
\end{equation*}
Being $r$ and $m$ arbitrary, integrating we get the second part of the result.
\end{proof}

\subsection{\leb{} \texorpdfstring{$\iff$}{if and only if}  \nn}

In this section we collect some of the results we presented above in a theorem and a corollary. 
The theorem gives a characterization of \leb{} points in term of the $\cL^1$-convergence of $\eta(X^x_m)$ to $\eta(x)$ if some mild conditions are satisfied.
The corollary gives a concrete setting in which this characterization holds.

\begin{theorem}
\label{t:sum-up}
If condition~\eqref{e:ass-proba} or condition~\eqref{e:ass-nn} hold, the following are equivalent:
\begin{enumerate}[topsep = 2pt, parsep = 2pt, itemsep = 2pt]
    \item \label{i:leb-teo} $x$ is a \leb{} point;
    \item \label{i:nn-teo} $\eta(X^x_m)$ converges to $\eta(x)$ in $\cL^1$, as $m\to \iop$.
\end{enumerate}
\end{theorem}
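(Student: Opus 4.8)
The plan is to assemble Theorem~\ref{t:sum-up} directly from results already proved in this section, since both implications have essentially been established. The statement is a biconditional between \ref{i:leb-teo} ($x$ is a \leb{} point) and \ref{i:nn-teo} ($\eta(X^x_m)\to\eta(x)$ in $\cL^1$), under the hypothesis that at least one of \eqref{e:ass-proba} or \eqref{e:ass-nn} holds. Crucially, Assumption~\ref{ass:base} is in force, so $x\in\supp(\P_X)$, $\eta$ is bounded measurable, and each $X^x_m$ is a nearest neighbor among $X_1,\dots,X_m$ — exactly the hypotheses needed by the lemmas and theorems above.

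For the direction \ref{i:nn-teo}$\implies$\ref{i:leb-teo}, I would simply invoke Theorem~\ref{t:nn-then-leb}, which states precisely that if $\E\bsb{\babs{\eta(X^x_m)-\eta(x)}}\to 0$ then $x$ is a \leb{} point; note this direction needs no assumption on tie-breaking or on $\P_X$ at all, so in particular it holds whether \eqref{e:ass-proba} or \eqref{e:ass-nn} is the one that is satisfied. For the converse \ref{i:leb-teo}$\implies$\ref{i:nn-teo}, I would split into the two cases according to which condition holds: if \eqref{e:ass-proba} holds, apply Theorem~\ref{t:ass-proba}; if \eqref{e:ass-nn} holds, apply Theorem~\ref{t:ass-nn}. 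Each of these yields $\E\bsb{\babs{\eta(X^x_m)-\eta(x)}}\to 0$, which is exactly $\cL^1$-convergence of $\eta(X^x_m)$ to $\eta(x)$. Since the hypothesis of the theorem is the disjunction of \eqref{e:ass-proba} and \eqref{e:ass-nn}, at least one branch applies, completing the implication.

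There is essentially no obstacle here: the theorem is a packaging statement, and the only thing to be careful about is making sure the case split in the forward direction is exhaustive (it is, by hypothesis) and that each cited theorem's hypotheses are met (they are, via Assumption~\ref{ass:base} plus the relevant condition). One minor point worth a sentence: $\cL^1$-convergence of $\eta(X^x_m)$ to $\eta(x)$ means exactly $\E\bsb{\babs{\eta(X^x_m)-\eta(x)}}\to 0$, which is the phrasing used in the cited theorems, so no reconciliation of definitions is needed.

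\begin{proof}
We prove the two implications separately.

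\ref{i:nn-teo}$\implies$\ref{i:leb-teo}. If $\eta(X^x_m)$ converges to $\eta(x)$ in $\cL^1$, i.e., $\E\bsb{\babs{\eta(X^x_m)-\eta(x)}}\to 0$ as $m\to\iop$, then $x$ is a \leb{} point by Theorem~\ref{t:nn-then-leb}. Note that this implication holds without invoking either condition \eqref{e:ass-proba} or \eqref{e:ass-nn}.

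\ref{i:leb-teo}$\implies$\ref{i:nn-teo}. Assume $x$ is a \leb{} point. By hypothesis, at least one of \eqref{e:ass-proba} and \eqref{e:ass-nn} holds. If \eqref{e:ass-proba} holds, then $\E\bsb{\babs{\eta(X^x_m)-\eta(x)}}\to 0$ as $m\to\iop$ by Theorem~\ref{t:ass-proba}. If instead \eqref{e:ass-nn} holds, the same conclusion follows from Theorem~\ref{t:ass-nn}. In either case, $\eta(X^x_m)$ converges to $\eta(x)$ in $\cL^1$, as $m\to\iop$.
\end{proof}
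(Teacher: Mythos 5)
Your proposal is correct and takes essentially the same route as the paper's own proof of Theorem~\ref{t:sum-up}: the implication \ref{i:nn-teo}~$\Rightarrow$~\ref{i:leb-teo} is exactly Theorem~\ref{t:nn-then-leb}, and the converse is obtained by case-splitting on which of \eqref{e:ass-proba} and \eqref{e:ass-nn} holds and invoking Theorem~\ref{t:ass-proba} or Theorem~\ref{t:ass-nn}, respectively. (Like the paper, you implicitly rely on Lemma~\ref{l:trivia-cases} to cover the degenerate cases excluded by Assumption~\ref{a:non-trivial}, under which those two theorems were established.)
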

\begin{proof}
The implication~\ref{i:nn-teo}~$\Rightarrow$~\ref{i:leb-teo} follows directly from Theorem~\ref{t:nn-then-leb}. 
The vice versa is a restatement of Theorems~\ref{t:ass-proba}~and~\ref{t:ass-nn}.
\end{proof}

Recall that ---while the implication \ref{i:nn-teo}~$\Rightarrow$~\ref{i:leb-teo} is always true--- if none of the conditions~\eqref{e:ass-proba}~and~\eqref{e:ass-nn} are satisfied, the implication~\ref{i:leb-teo}~$\Rightarrow$~\ref{i:nn-teo} does not hold in general (Example~\ref{ex:counter}).

We now state the aforementioned concrete version of the previous result.
\begin{corollary}
\label{c:powa}
If $(X^x_m)_{m\in\N}$ is defined according to \aaa{an ISIMIN} (Definitions~\ref{d:nn-selector}~and~\ref{d:ISIMIN}), the following are equivalent:
\begin{enumerate}[topsep = 2pt, parsep = 2pt, itemsep = 2pt]
    \item $x$ is a \leb{} point;
    \item $\eta(X^x_m)$ converges to $\eta(x)$ in $\cL^1$, as $m\to \iop$.
\end{enumerate}
\end{corollary}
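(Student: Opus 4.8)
The plan is to derive Corollary~\ref{c:powa} as an immediate consequence of Theorem~\ref{t:sum-up} together with Proposition~\ref{p:ISIMIN}. First I would recall (as already observed right after Definition~\ref{d:nn-selector}) that a nearest neighbor $X^x_m := X_{\smn_m(d(x,X_1),\l,d(x,X_m),\Theta_m)}$ built from an ISIMIN $(\smn_m,\Theta_m)_{m\in\N}$ is a bona fide nearest neighbor in the sense of Definition~\ref{n:tr-set-X'm}: the defining property $\smn_m(r_1,\l,r_m,z)\in\argmin_{k} r_k$ forces $X^x_m$ to realize the minimal distance from $x$, and measurability of $X^x_m$ follows from measurability of the $\smn_m$'s and the $\Theta_m$'s. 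Hence Assumption~\ref{ass:base} is in force and all of Section~\ref{s:leb-vs-nn} applies.

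Next I would invoke Proposition~\ref{p:ISIMIN}, whose conclusion is precisely that nearest neighbors defined according to an ISIMIN satisfy condition~\eqref{e:ass-nn} (indeed with constant $C=1$, for every $r>0$ and every $m\in\N$). With~\eqref{e:ass-nn} established, Theorem~\ref{t:sum-up} applies verbatim and yields the equivalence between ``$x$ is a \leb{} point'' and ``$\eta(X^x_m)$ converges to $\eta(x)$ in $\cL^1$ as $m\to\iop$'', which is exactly the statement of the corollary. Concretely, the implication from convergence to \leb{} point is Theorem~\ref{t:nn-then-leb} (which needs no side condition), and the converse is Theorem~\ref{t:ass-nn} fed with the consequence of Proposition~\ref{p:ISIMIN}.

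There is essentially no obstacle here: the corollary is just a repackaging of Proposition~\ref{p:ISIMIN} into Theorem~\ref{t:sum-up}. The only point deserving a line of care is that Theorems~\ref{t:ass-proba}~and~\ref{t:ass-nn} are stated under the running reductions of Assumption~\ref{a:non-trivial}, which are without loss of generality by Lemma~\ref{l:trivia-cases}; one should note that the ISIMIN construction does not interfere with these reductions, since Lemma~\ref{l:trivia-cases} and the passage to Assumption~\ref{a:non-trivial} only use that $X^x_m$ is \emph{some} nearest neighbor among $X_1,\l,X_m$, a property already verified above. This makes the proof a two-line deduction.

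\begin{proof}
As observed after Definition~\ref{d:nn-selector}, a nearest neighbor defined according to an ISIMIN is a nearest neighbor in the sense of Definition~\ref{n:tr-set-X'm}, so Assumption~\ref{ass:base} holds. By Proposition~\ref{p:ISIMIN}, condition~\eqref{e:ass-nn} is satisfied. The claim then follows from Theorem~\ref{t:sum-up}.
\end{proof}
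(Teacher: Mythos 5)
Your proof is correct and follows the paper's own argument exactly: the paper also derives the corollary immediately from Proposition~\ref{p:ISIMIN} (which gives condition~\eqref{e:ass-nn}) combined with Theorem~\ref{t:sum-up}. The extra remarks you add (measurability of the ISIMIN-based nearest neighbor and the harmlessness of the reductions behind Assumption~\ref{a:non-trivial}) are accurate and already noted in the paper, so no further changes are needed.
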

\begin{proof}
The result follows immediately by Theorem~\ref{t:sum-up} and Proposition~\ref{p:ISIMIN}.
\end{proof}
As we previously noted in Section~\ref{s:tie-break}, the two common tie-breaking rules in \citep{devroye1981inequality} (ties are broken lexicographically) and \citep{cerou2006nearest} (ties are broken uniformly at random) are instances of ISIMINs.
Therefore, Corollary~\ref{c:powa} applies in particular to both cases.

Moreover, we remark that the lexicographic version of the $1$-\nna{}  can be formulated as a (memory/computationally-efficient) online algorithm (Algorithm~\ref{algo:onn}).
\begin{algorithm2e}
    \LinesNumbered
    \SetAlgoNoLine
    \SetAlgoNoEnd
    \DontPrintSemicolon
	\SetKwInput{kwInit}{Initialization}
	\KwIn{$x\in\cX$}
	\kwInit{let $R\gets\iop$ and $Y\gets 0$}
\For
    {
        $m=1,2,\l$ 
    }
    {
        observe $X_m$\;
        {
	    \If
	    {
	        $d(x,X_m) < R$
	    }
	    {
	        observe $\eta(X_m)$\;
	        let $R \gets d(x,X_m)$ and $Y \gets \eta(X_m)$\;
	    }
        }
    }
\caption{\label{algo:onn}Online \nn{}}
\end{algorithm2e}

In particular, this gives an \emph{online} characterization of \leb{} points: $x$ is a \leb{} point if and only if $Y \to \eta(x)$ in $\cL^1$.

\section{Beyond probability measures}
\label{s:gen}

In this section we present some consequences and extensions of the results we proved in Section~\ref{s:leb-vs-nn}, shifting the focus on the characterization of \leb{} points defined in terms of general measures.
We begin by fixing some notation and definitions that will be used throughout Sections~\ref{s:charact-arbitrary-measures}~and~\ref{s:seq-convergence}.

Let $(\cX, d)$ be a metric space and $x\in \cX$ an arbitrary point.
As per previous sections, we will denote for all $r>0$, the balls $\bar{B}_r(x)$ and $B_r(x)$ by $\bar{B}_r$ and $B_r$.
Let $\cB$ be the Borel $\sigma$-algebra of $(\cX,d)$,
$\eta \colon \cX \to \R$ a measurable function,
and $\mu\colon \cB \to [0,\iop]$ a (Borel) measure.

To avoid constant repetitions in our results, we now explicitly state our (mild) assumptions.

\begin{assumption}
Until the end of Section~\ref{s:gen}, we will assume the following:
\begin{enumerate} [topsep = 2pt, parsep = 2pt, itemsep = 2pt]
\item $x$ is in the \emph{support} of $\mu$, i.e., for all $r>0$, we have $\mu\brb{\bar{B}_r}>0$;
\item $\mu$ is \emph{locally-finite} at $x$, i.e., there exists an $R>0$ such that $\mu\brb{\bar{B}_R} < \iop$.
\end{enumerate}
\end{assumption}
From this point to the end of Section~\ref{s:gen}, we fix such an $R$ and we remark that all our results hold for any other $R' \in (0, R)$.

We now state the general definition of \leb{} points.

\begin{definition}[\leb{} point]
We say that $x$ is a \leb{} point (for $\eta$ with respect to $\mu$) if
\[
    \frac{1}{\mu\brb{\bar{B}_r}} \int_{\bar{B}_r} \babs{ \eta(x') - \eta(x) } \dif \mu(x')
    \to 0 \;,
    \qquad \text{as } r \to 0^+\;.
\]
As before, we call the ratios in the the previous definition \emph{\leb{} ratios}.
\end{definition}

\subsection{\leb{} points and nearest neighbors}
\label{s:charact-arbitrary-measures}

In this section we show how to build an instance of a \nna{} in order to obtain a characterization of \leb{} points in our general metric measure space $(\cX,d,\mu)$.

Take an arbitrary sequence of probability spaces $(\cZ_m, \cG_m, \nu_m)_{m\in\N}$.
Let 
\[
    \Omega 
:= 
    \cX \times \cX^\N \times \prod_{m\in\N} \cZ_m
\qquad\text{and}\qquad
    \cF
:=
    \cB \otimes \bigotimes_{k\in \N} \cB \otimes \bigotimes_{m\in \N} \cZ_m \;.
\]
Since $\mu\brb{\bar{B}_R(x)} < \iop$, we can define the probability measure 
\begin{align*}
    \nu\colon \cB & \to [0,1] \\
    A & \mapsto \frac{\mu\brb{ A \cap \bar{B}_R(x) } }{\mu\brb{\bar{B}_R(x)}}
\end{align*}
Let $\P \colon \cF \to [0,1]$ be the unique probability measure such that for all $A, A_1, A_2, \l \in \cB$, $G_1\in \cG_1, G_2 \in\cG_2, \l$, we have 
\[
    \P(A\times A_1 \times A_2 \times \l \times G_1 \times G_2 \times \ldots ) 
= 
    \nu(A) \, \nu(A_1) \, \nu(A_2) \l \nu_1(G_1) \, \nu_2(G_2) \l\;.
\]
It is well-known that such a probability measure exists \cite[Chapter VII, \aaa{Section~38}, Theorem~B]{halmos2013measure}. 
We define $X\colon \Omega \to \cX$, $(x, x_1, x_2, \l, z_1, z_2, \l) \mapsto x$, for all $k\in \N$, $X_k \colon \Omega \to \cX$, $(x, x_1, x_2, \l, z_1, z_2, \l) \mapsto x_k$, and for all $m\in \N$, $\Theta_m \colon \Omega \to \cZ_m$, $(x, x_1, x_2, \l, z_1, z_2, \l) \mapsto z_m$.
Finally, take an arbitrary sequence $(\psi_m)_{m\in \N}$ such that, for all $m\in \N$, $\psi_m \colon [0,\iop)^m \times \cZ_m \to \{1,\l,m\}$ is a measurable function satisfying, for all $r_1, \l , r_m \ge 0$ and all $z \in \cZ_m$,
$
    \smn_m(r_1, \l, r_m, z) \in \argmin_{k \in \{1,\l,m\}} r_k
$.

By construction, $X, X_1, X_2, \l$ are $\P$-independent random variables with common distribution $\P_X = \nu$ and $(\psi_m,\Theta_m)_{m\in\N}$ is \aaa{an ISIMIN} (Definition~\ref{d:ISIMIN}).

For the remainder of this section, for all $m\in \N$, $X^x_m$ will be the nearest neighbor (among $X_1, \l, X_m$) according to $(\psi_m,\Theta_m)$ (\aaa{Definition~\ref{d:nn-selector}}).

\begin{theorem}
\label{t:general-measures}
If the restriction of $\eta$ to $\bar{B}_R$ is bounded, the following are equivalent:
\begin{enumerate} [topsep = 2pt, parsep = 2pt, itemsep = 2pt]
    \item \label{i:lebt-a} $x$ is a \leb{} point for $\eta$ with respect to $\mu$;
    \item \label{i:nnt-b} $\E \bsb{\babs{ \eta(X^x_m) - \eta(x) } } \to 0$ as $m \to \infty$.
\end{enumerate}
\end{theorem}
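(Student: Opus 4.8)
The plan is to reduce the statement to the probabilistic characterization already established in Corollary~\ref{c:powa}, exploiting that for radii below $R$ the \leb{} ratios with respect to $\mu$ and with respect to $\P_X = \nu$ coincide up to the cancellation of the normalizing constant $\mu\brb{\bar B_R}$.

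First I would record a few preliminaries. Since $\bar B_r \s \bar B_R$ for every $r\in(0,R]$ and $\mu\brb{\bar B_R}<\iop$, the ratios $\mu\brb{\bar B_r}^{-1}\int_{\bar B_r}\babs{\eta(x')-\eta(x)}\dif\mu(x')$ are well-defined for all small $r$ (with $0<\mu(\bar B_r)\le\mu(\bar B_R)<\iop$, as $x$ is in the support of $\mu$). Moreover $\P_X(\bar B_r)=\mu(\bar B_r)/\mu(\bar B_R)>0$ for $r\le R$ and $\P_X(\bar B_r)\ge\P_X(\bar B_R)=1$ for $r\ge R$, so $x\in\supp(\P_X)$. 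Next, set $\wtil := \eta\cdot\I_{\bar B_R}$: this is a bounded measurable function on $(\cX,d)$ (bounded because the restriction of $\eta$ to $\bar B_R$ is bounded by hypothesis) with $\wtil(x)=\eta(x)$ since $x\in\bar B_R$. Hence $(x,\wtil,X^x_m)$ satisfies Assumption~\ref{ass:base}, and because $(\psi_m,\Theta_m)_{m\in\N}$ is an ISIMIN (Definition~\ref{d:ISIMIN}) and each $X^x_m$ is the nearest neighbor according to it (in particular a nearest neighbor per Definition~\ref{n:tr-set-X'm}), Corollary~\ref{c:powa} applies to $\wtil$: the point $x$ is a \leb{} point for $\wtil$ with respect to $\P_X$ if and only if $\wtil(X^x_m)\to\wtil(x)$ in $\cL^1$.

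The core of the argument is then a pair of translations. On the one hand, for every $r\in(0,R)$ we have $\bar B_r\s\bar B_R$, so $\eta$ and $\wtil$ agree on $\bar B_r$ and, using $\int f\dif\nu = \mu(\bar B_R)^{-1}\int_{\bar B_R} f\dif\mu$,
\[
    \frac{\E\bsb{\I_{\bar B_r}(X)\,\babs{\wtil(X)-\wtil(x)}}}{\P_X(\bar B_r)}
=
    \frac{\int_{\bar B_r}\babs{\eta(x')-\eta(x)}\dif\mu(x')}{\mu(\bar B_r)}\;,
\]
since the factor $1/\mu(\bar B_R)$ cancels; letting $r\to0^+$ shows that $x$ is a \leb{} point for $\wtil$ with respect to $\P_X$ precisely when it is a \leb{} point for $\eta$ with respect to $\mu$. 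On the other hand, each $X_k$ lies in $\bar B_R$ $\P$-almost surely (because $\P_X(\bar B_R)=1$), hence so does $X^x_m$, so $\wtil(X^x_m)=\eta(X^x_m)$ and $\wtil(x)=\eta(x)$ $\P$-a.s., giving $\E\bsb{\babs{\wtil(X^x_m)-\wtil(x)}}=\E\bsb{\babs{\eta(X^x_m)-\eta(x)}}$ for every $m$. Chaining these two equivalences through Corollary~\ref{c:powa} yields \ref{i:lebt-a}~$\iff$~\ref{i:nnt-b}.

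I do not expect a genuine obstacle: the proof is essentially a change of reference measure. The only points needing a little care are the truncation $\wtil$ (required because Corollary~\ref{c:powa} assumes a globally bounded function, whereas here only the restriction to $\bar B_R$ is assumed bounded) and the verification that the constructed $X^x_m$ is a legitimate nearest neighbor driven by an ISIMIN — both already noted in Section~\ref{s:charact-arbitrary-measures}.
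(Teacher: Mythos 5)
Your proof is correct and follows essentially the same route as the paper: reduce the statement to Corollary~\ref{c:powa} and observe that for $r\le R$ the \leb{} ratios with respect to $\mu$ and with respect to $\P_X=\nu$ coincide because the normalizing constant $\mu\brb{\bar{B}_R}$ cancels. The only (cosmetic) difference is that you apply Corollary~\ref{c:powa} to the truncation $\wt{\eta}=\eta\,\I_{\bar{B}_R}$ and then use $\P_X\brb{\bar{B}_R}=1$ to identify $\wt{\eta}(X^x_m)$ with $\eta(X^x_m)$ almost surely, whereas the paper applies the corollary to $\eta$ directly; your version is, if anything, slightly more careful about the hypothesis that only the restriction of $\eta$ to $\bar{B}_R$ is bounded.
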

\begin{proof}
By Corollary~\ref{c:powa}, condition \ref{i:nnt-b} is equivalent to $x$ being a \leb{} point for $\eta$ with respect to $\P_X$. Since, for all $r\in (0,R]$, we have
\begin{align*}
    \frac{1}{\mu\brb{\bar{B}_r}} \int_{\bar{B}_r} \babs{ \eta(x') - \eta(x) } \dif \mu(x')
&=
    \frac{1}{\mu\brb{\bar{B}_r}/\mu\brb{\bar{B}_R}} \int_{\bar{B}_r} \babs{ \eta(x') - \eta(x) } \dif \frac{\mu(x')}{\mu\brb{\bar{B}_R}}
\\
&=
    \frac{1}{\nu\brb{\bar{B}_r}} \int_{\bar{B}_r} \babs{ \eta(x') - \eta(x) } \dif \nu(x')
\\
&=
    \frac{\E \Bsb{ \I_{\bar{B}_r}(X) \, \babs{\eta(X) - \eta(x)} } }{\P_X \lrb{ \bar{B}_r }} \;,
\end{align*}
then $x$ a \leb{} point for $\eta$ with respect to $\mu$ if and only if $x$ is a \leb{} point for $\eta$ with respect to $\P_X$, which implies the result.
\end{proof}

\subsection{Sequential convergence}
\label{s:seq-convergence}

In this section we want to characterize \leb{} points in terms of convergence to zero of \leb{} ratios along suitable sequences of vanishing radii.
Note that $x$ is trivially a \leb{} point if $\mu\brb{ \{x\} } > 0$ (by the dominated monotonicity of measures) or if there exists an $r>0$ such that $\int_{\bar{B}_r }\babs{\eta - \eta (x)}\dif \mu = 0$.
Thus, for the remainder of this section we will focus on the non-trivial case in which $\mu\brb{ \{x\} }=0$ and for all $r>0$, $\int_{\bar{B}_r }\babs{\eta - \eta (x)}\dif \mu >0 $.

We now generalize to arbitrary measures the definition of $\alpha$-sequences introduced in Section~\ref{s:main-section-part-2} (Definition~\ref{d:alpha-seq-1}) which will play a central role in our sequential characterization of \leb{} points.
\begin{definition}[$\alpha$-sequence]
\label{d:alpha-seq}
Fix any $\alpha \in (0,1)$. For all $r>0$, define
\[
    M(r)
:=
    \bbrb{ \int_{\bar{B}_{r}} \babs{ \eta - \eta(x) } \dif \mu  }^{\alpha} 
    \Brb{ \mu \brb{ \bar{B}_r} }^{1-\alpha} \;.
\]
Let $R_1 \in (0, R)$ such that $M(R_1) < M(R)$ and $m_1 := \bce{ 1/ M(R_1) }$. 
For all $m \in \N$ such that $m \ge m_1$, define
\[
    r_m
:=
    \sup \lcb{ r > 0 \mid M(r) < \frac{1}{m} } \;.
\]
We say that $(r_m)_{m\in \N, m \ge m_1}$ is the \emph{$\alpha$-sequence} (for $\eta$ with respect to $\mu$) at $x$.
\end{definition}
Note that $\alpha$-sequences are well-defined, strictly positive, and vanishing by our assumptions that $\mu\brb{ \{x\} }=0$ and for all $r>0$, $\int_{\bar{B}_r }\babs{\eta - \eta (x)}\dif \mu >0 $ (as they were in Lemma~\ref{l:incistato}).

We now take a closer look to the proofs of Theorems~\ref{t:victory-spheres},~\ref{t:ass-proba},~\ref{t:ass-nn}. 
Note that, there, it is redundant to assume that $x$ is a \leb{} point. 
Indeed, we merely used the fact that \leb{} ratios with respect to closed (see after Definition~\ref{d:leb-point}) and open (the ratios appearing in \eqref{e:geom-measure-flavor}) balls are vanishing along the radii given by an $\alpha$-sequence, for some $\alpha \in (0,1)$.
By Theorem~\ref{t:nn-then-leb}, this proves that, at least when $\mu$ is a probability measure, a construction like the one we presented in Section~\ref{s:charact-arbitrary-measures} gives that $x$ is a \leb{} point if and only if \leb{} ratios with respect to both closed and open balls are vanishing along the radii given by an $\alpha$-sequence, for some $\alpha \in (0,1)$.
This is very surprising, since, in general, if \leb{} ratios (with respect to both closed and open balls) vanish along the radii given by a sequence, $x$ is not necessarily a \leb{} point, as shown by the following counterexample.

\begin{example}
Let $\cX := [0,1]$, $d$ be the Euclidean distance, $\mu$ be the \leb{} measure, and $x := 0$. 
For all $n\in \N$, define $\tht_n := \frac{1}{2^{2^n}}$. 
Consider the function $\eta := \sum_{n = 1}^\iop \I_{( \tht_{2n}, \tht_{2n-1}  ]}$. 
For any $m\in \N$, take $\rho_m := \tht_{2m}$ and $\rho'_m := \tht_{2m +1}$.
We show now that the \leb{} ratios with respect to both closed and open balls vanish along the sequence of radii $(\rho_m)_{m\in \N}$, but $x$ is not a \leb{} point since its \leb{} ratios do \emph{not} vanish along the sequence of radii $(\rho'_m)_{m\in \N}$.
Indeed, if $m\to \iop$, we have that
\begin{align*}
    \frac{1}{\mu\brb{ B_{\rho_m} }}
    \int_{B_{\rho_m}} \babs{ \eta - \eta(x) } \dif \mu
&=    
    \frac{1}{\mu\brb{ \bar{B}_{\rho_m} }}
    \int_{\bar{B}_{\rho_m}} \babs{ \eta - \eta(x) } \dif \mu
=
    \frac{1}{\tht_{2m}} \int_{0}^{\tht_{2m}} \eta(t) \dif t
\\
&=
    \frac{1}{\tht_{2m}} \int_{0}^{\tht_{2m+1}} \eta(t) \dif t
 \le
    \frac{\tht_{2m+1}}{\tht_{2m}}
=
    \frac{1}{2^{2^{2m}}}
\to
    0
\end{align*}
but, at the same time,
\begin{align*}
    \frac{1}{\mu\brb{ \bar{B}_{\rho'_m} }}
    \int_{\bar{B}_{\rho'_m}} \babs{ \eta - \eta(x) } \dif \mu
& =
    \frac{1}{\tht_{2m+1}} \int_{0}^{\tht_{2m+1}} \eta(t) \dif t
\ge
    \frac{1}{\tht_{2m+1}} \int_{\tht_{2m+2}}^{\tht_{2m+1}} 1 \dif t
\\
& =
    1- \frac{\tht_{2m+2}}{\tht_{2m+1}}
=
    1- \frac{1}{2^{2^{\aaa{2m+1}}}}
\to
    1\;.
\end{align*}
\end{example}
This highlights that $\alpha$-sequences are very special, because each one of them contains \aaa{in itself} enough information to characterize the convergence to zero of \leb{} ratios in the continuum.
We will prove that even more is true but before presenting the result, we give a handy definition.

\begin{definition}[\leb{} point along a sequence]
Take any $k\in \N$ and any vanishing sequence of strictly positive numbers $(\rho_m)_{m\in \N, m\ge k}$.
We say that $x$ is a \leb{} point (for $\eta$ with respect to $\mu$) along $(\rho_m)_{m\in \N, m\ge k}$ if
\[
    \frac{1}{\mu\brb{ \bar{B}_{\rho_m} }}
    \int_{\bar{B}_{\rho_m}} \babs{ \eta - \eta(x) } \dif \mu
\to 
    0\;,
\qquad \text{as } m \to \iop \;.
\]
\end{definition}

We remark that, unlike the results we proved in Section~\ref{s:leb-vs-nn}, the following theorem holds for more general measures and (possibly) unbounded integrands, with the minimal assumption that $\eta$ is locally-integrable around $x$.
The equivalence will be proved directly, without relying on nearest neighbor techniques.

\begin{theorem}
\label{t:charact-gen-measures}
If $\int_{\bar{B}_R} \labs{\eta} \dif \mu <\iop$, then the following are equivalent:
\begin{enumerate} [topsep = 2pt, parsep = 2pt, itemsep = 2pt]
\item $x$ is a \leb{} point along an $\alpha$-sequence, for some $\alpha\in (0,1)$;
\item $x$ is a \leb{} point.
\end{enumerate}
\end{theorem}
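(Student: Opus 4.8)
The plan is to prove the two implications separately. Write $L(r):=\frac{1}{\mu(\bar B_r)}\int_{\bar B_r}\babs{\eta-\eta(x)}\dif\mu$ for the (closed-ball) \leb{} ratio at radius $r$. The implication ``$x$ is a \leb{} point $\Rightarrow$ $x$ is a \leb{} point along an $\alpha$-sequence'' is immediate: if $L(r)\to 0$ as $r\to 0^+$ then, since every $\alpha$-sequence is a vanishing sequence of strictly positive reals (as noted after Definition~\ref{d:alpha-seq}), $L(r_m)\to 0$, i.e.\ $x$ is a \leb{} point along $(r_m)_{m\ge m_1}$ for every $\alpha\in(0,1)$. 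The content is in the converse.

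For the converse, fix an $\alpha\in(0,1)$ for which $x$ is a \leb{} point along the $\alpha$-sequence $(r_m)_{m\ge m_1}$. I would first record the identity that drives the argument: since $\int_{\bar B_r}\babs{\eta-\eta(x)}\dif\mu=L(r)\,\mu(\bar B_r)$, the definition of $M$ gives $M(r)=L(r)^\alpha\,\mu(\bar B_r)$, equivalently $L(r)=\brb{M(r)/\mu(\bar B_r)}^{1/\alpha}$. Next I would record the structure of $M$ on $(0,R)$: it is non-negative, non-decreasing, and right-continuous (the last because $\mu(\bar B_r)\downarrow\mu(\bar B_\rho)$ and $\int_{\bar B_r}\babs{\eta-\eta(x)}\dif\mu\downarrow\int_{\bar B_\rho}\babs{\eta-\eta(x)}\dif\mu$ as $r\downarrow\rho$, the latter by dominated convergence, legitimate because $\int_{\bar B_R}\labs{\eta}\dif\mu<\iop$ makes $\babs{\eta-\eta(x)}\I_{\bar B_R}$ integrable); moreover $M(r)>0$ for all $r>0$ with $M(r)\downarrow 0$ as $r\downarrow 0$, and $r_m<R$ for all $m\ge m_1$. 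Right-continuity and monotonicity then force, for every $m\ge m_1$,
\[
    \{M<1/m\}=(0,r_m)\qquad\text{and}\qquad M(r_m)\ge 1/m\;.
\]

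The core step is an interpolation between consecutive radii of the $\alpha$-sequence. For $r>0$ small enough that $r<r_{m_1}$, set $N=N(r):=\max\{m\ge m_1:r_m>r\}$; this is finite (since $r_m\downarrow 0$) and at least $m_1$, it satisfies $r_{N+1}\le r<r_N$, and $N(r)\to\iop$ as $r\to 0^+$. From $r<r_N$ we get $M(r)<1/N$, and from $r\ge r_{N+1}$, monotonicity of $\mu(\bar B_\cdot)$, the displayed identity, and $M(r_{N+1})\ge 1/(N+1)$ we get
\[
    \mu(\bar B_r)\ge\mu\brb{\bar B_{r_{N+1}}}=\frac{M(r_{N+1})}{L(r_{N+1})^\alpha}\ge\frac{1}{(N+1)\,L(r_{N+1})^\alpha}\;,
\]
where $L(r_{N+1})>0$ by the standing non-triviality assumption. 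Substituting both bounds into $L(r)=\brb{M(r)/\mu(\bar B_r)}^{1/\alpha}$ yields
\[
    L(r)\le\lrb{\frac{(N+1)\,L(r_{N+1})^\alpha}{N}}^{1/\alpha}=\lrb{1+\frac{1}{N}}^{1/\alpha}L(r_{N+1})\;.
\]
As $r\to 0^+$, $N(r)\to\iop$, so $\brb{1+1/N}^{1/\alpha}\to 1$ and $L(r_{N+1})\to 0$ (because $N(r)+1\to\iop$ and $L(r_m)\to 0$ as $m\to\iop$); hence $L(r)\to 0$ and $x$ is a \leb{} point.

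I expect the only genuinely delicate point to be this interpolation: bounding $L(r)$ at an arbitrary small radius by $L$ at a neighbouring $\alpha$-sequence radius. The key realization is the identity $L(r)^\alpha=M(r)/\mu(\bar B_r)$, which lets one sandwich $L(r)$ between the value of $M$ just below $r_N$ (less than $1/N$) and the value of $\mu(\bar B_\cdot)$ at $r_{N+1}$ (at least $1/((N+1)L(r_{N+1})^\alpha)$), their quotient telescoping to $(1+1/N)^{1/\alpha}L(r_{N+1})$. The remaining things to verify are the right-continuity of $M$ (needed to identify $\{M<1/m\}=(0,r_m)$ and $M(r_m)\ge 1/m$) and that the whole $\alpha$-sequence lies in $(0,R)$, so that local finiteness of $\mu$ and the dominated convergence step are available along it.
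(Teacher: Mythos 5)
Your proof is correct, and it reaches the non-trivial implication by a somewhat different route than the paper. The paper argues by contradiction: it posits a vanishing sequence $(s_m)$ along which the \leb{} ratio stays at least $\delta>0$, brackets the \emph{value} $M(s_m)$ between $1/(n_m+1)$ and $1/n_m$, deduces $r_{n_m+1}\le s_m$ and hence $\mu\brb{\bar{B}_{r_{n_m+1}}}<\delta^{-\alpha}/n_m$, and concludes that the \leb{} ratio along the $\alpha$-sequence stays bounded below by roughly $\delta^\alpha$, contradicting the hypothesis; the two-sided control of $M$ at the radii $r_m$ is packaged there as the open/closed-ball sandwich \eqref{e:pizza}. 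You instead bracket the \emph{radius}, $r_{N(r)+1}\le r<r_{N(r)}$, and prove the explicit direct inequality $L(r)\le(1+1/N(r))^{1/\alpha}\,L\brb{r_{N(r)+1}}$ for your closed-ball ratio $L$, which gives the limit immediately; this dispenses with the auxiliary bad sequence, the contradiction, and the open-ball quantities, since you only need $M<1/N$ strictly to the left of $r_N$ (sup definition plus monotonicity) together with $M(r_{N+1})\ge 1/(N+1)$ (right-continuity of $M$, which is exactly where the hypothesis $\int_{\bar{B}_R}\labs{\eta}\dif\mu<\iop$ enters, as in the paper's Lemma~\ref{l:incistato}). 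Your identity $M(r)=L(r)^\alpha\,\mu\brb{\bar{B}_r}$ and the bookkeeping ($r_m\le R_1<R$, $L(r_{N+1})\in(0,\iop)$ under the standing non-triviality assumption, $N(r)\to\iop$) all check out; the only step left implicit is that $r_m\to 0$, which follows in one line from $M(r)>0$ for every $r>0$. Net effect: same underlying mechanism as the paper, but run directly and slightly more quantitatively, yielding an explicit comparison between the continuum \leb{} ratios and those along the $\alpha$-sequence rather than only the qualitative equivalence.
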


\begin{proof}
We prove the non-trivial implication.
Let $\alpha \in (0,1)$ and assume that $x$ is a \leb{} point along the $\alpha$-sequence $(r_m)_{m\in\N, m\ge m_1}$.
To lighten the notation, we define the auxiliary function
\begin{align*}
    f \colon \cX & \to \R\\
    x' & \mapsto \babs{ \eta(x') - \eta(x) } \;.
\end{align*}
With a straightforward adaptation of \eqref{e:M(r)}, \eqref{e:m-1}, and \eqref{e:m-2}, we can prove that for all $m\in \N$, $m\ge m_1$, we have
\begin{equation}
    \label{e:pizza}
    \bbrb{ \int_{B_{r_m}} f \dif \mu  }^{\alpha} \Brb{ \mu \brb{ {B}_{r_m}} }^{1-\alpha}
\le
    \frac{1}{m}
\le
    \bbrb{ \int_{\bar{B}_{r_m}} f \dif \mu  }^{\alpha} \Brb{ \mu \brb{ \bar{B}_{r_m} } }^{1-\alpha} \;.
\end{equation}
Assume by contradiction that there exists a vanishing sequence $(s_m)_{m\in \N}$ of strictly positive numbers and a $\delta>0$ such that, for all $m\in \N$, we have that
\begin{equation}
    \label{e:contra-proof}
    \frac{1}{\mu\brb{\bar{B}_{s_m}}} \int_{\bar{B}_{s_m}} f \dif \mu
\ge 
    \delta\;.
\end{equation}
Without loss of generality, we can assume that for all $m\in \N$,
\[
    \bbrb{ \int_{\bar{B}_{s_m}} f \dif \mu  }^{\alpha} \Brb{ \mu \brb{ \bar{B}_{s_m} } }^{1-\alpha}
<
    1 \;.
\]
For all $m\in \N$, let $n_m\in \N$ such that
\begin{equation}
    \label{e:n_m}
    \frac{1}{n_m+1}
\le
    \bbrb{ \int_{\bar{B}_{s_m}} f \dif \mu  }^{\alpha} \Brb{ \mu \brb{ \bar{B}_{s_m} } }^{1-\alpha}
<
    \frac{1}{n_m}\;.
\end{equation}
Note that $n_m \to \iop$ as $m\to \iop$, since the middle term vanishes as $m\to \iop$.
Hence, there exists $m_2\in \N$ such that for all $m\in \N$ with $m\ge m_2$, we have that $n_m \ge m_1$.
From the first inequality in \eqref{e:pizza} and the first inequality in \eqref{e:n_m}, for all $m \in \N$ such that $m\ge m_2$, we have that $r_{n_m+1} \le s_m$, which in turn gives
\begin{equation}
    \label{e:eq-a-caso}
    \mu \brb{ \bar{B}_{r_{n_m+1}}} \le \mu \brb{ \bar{B}_{s_m}}\;.
\end{equation}
Hence, for all $m \in \N$ such that $m\ge m_2$, we have that
\begin{align}
    \mu \brb{ \bar{B}_{r_{n_m+1}} } 
& \overset{\eqref{e:eq-a-caso}}{\le}
    \mu \brb{ \bar{B}_{\aaa{s_m}} } 
= 
    1 \cdot \mu \brb{ \bar{B}_{\aaa{s_m}} }\nonumber
\overset{\eqref{e:contra-proof}}{\le}
    \bbrb{ \frac{1}{\delta} \frac{1}{\mu\brb{\bar{B}_{s_m}}} \int_{\bar{B}_{s_m}} f \dif \mu }^\alpha 
    \mu \brb{ \bar{B}_{\aaa{s_m}} } \nonumber
\\ & =
    \frac{1}{\delta^\alpha}
    \bbrb{ \int_{\bar{B}_{s_m}} f \dif \mu }^\alpha 
    \Brb{ \mu \brb{ \bar{B}_{\aaa{s_m}} } }^{1-\alpha} 
\ \overset{\eqref{e:n_m}}{<} \
    \frac{1}{\delta^\alpha}
    \,
    \frac{1}{n_m} \;. \label{e:bagel}
\end{align}
Finally, for all $m \in \N$ such that $m\ge m_2$, we have that
\begin{multline*}
    \bbrb{ \frac{1}{\mu\brb{\bar{B}_{r_{n_m+1}}}} \int_{\bar{B}_{r_{n_m+1}}} f \dif \mu }^\alpha 
=
    \bbrb{ \int_{\bar{B}_{r_{n_m+1}}} f \dif \mu }^\alpha 
        \Brb{ \mu \lrb{ \bar{B}_{r_{n_m + 1}} } }^{1-\alpha}
    \frac{
        1
    }{
        \mu \brb{ \bar{B}_{r_{n_m + 1}} }
    }
\\ 
\overset{\eqref{e:pizza}}{\ge}
    \frac{1}{n_m+1}\,
    \frac{
        1
    }{
        \mu \brb{ \bar{B}_{r_{n_m + 1}} }
    }
\overset{\eqref{e:bagel}}{\ge}
    \delta^\alpha
    \,
    \frac{n_m}{n_m+1}
    \overset{m\to \iop}{\longrightarrow}
    \delta^\alpha > 0\;,
\end{multline*}
which, since $n_m \to \iop$ as $m\to \iop$, implies that $x$ is not a \leb{} point along a subsequence of $(r_m)_{m\in\N, m\ge m_1}$, contradicting the fact that $x$ is a \leb{} point along $(r_m)_{m\in\N, m\ge m_1}$.
\end{proof}

\section{From \leb{} points to \leb{} values}
\label{s:from-points-to-values}

In this brief section, we provide a straightforward but useful\footnote{For an application, see e.g., Example~\ref{ex:preiss}.} generalization of the previous results, shifting the focus from \leb{} points to \leb{} values.

\begin{definition}[Lebesgue value]
Let $(\cX, d )$ be a metric space, $\mu$ a locally-finite Borel measure of $(\cX, d )$, $\eta : \cX \to \R$ a locally-integrable function with respect to $\mu$ and $x \in \cX$ a point in the support of $\mu$. We say that $l \in \R$ is the \emph{Lebesgue value} of $\eta$ at $x$ (with respect to $\mu$) if
\[
    \frac{1}{\mu\brb{\bar{B}_r(x)}}\int_{\bar{B}_r(x)} \labs{\eta(x')-l} \dif\mu(x') \to 0 \;, \qquad r \to 0^+\;.
\]
\end{definition}
We point out that we use the word ``the'' in the definition of \leb{} values since if $l_1,l_2 \in \R$ are \leb{} values for $\eta$ at $x$ with respect to $\mu$, then the triangle inequality gives immediately $l_1 = l_2$.

We now make two key observations about \leb{} values.
The first one is that if $\mu \brb{ \{x\} } > 0$, then $\eta$ admits $\eta(x)$ as its \leb{} value at $x$.
The second one is that if $\mu \brb{ \{x\} } = 0$ and $\eta$ admits $l \in \R$ as its \leb{} value at $x$, then we can define
\begin{align*}
    \wtil \colon \cX  & \to \R\\
    x' & \mapsto
    \begin{cases}
        l & \text{if } x' = x \\
        \eta(x') & \text{otherwise}
    \end{cases}
\end{align*}
and obtain that $\wtil$ has a \leb{} point at $x$. 
With these two observations in mind, we can restate appropriately \emph{every condition and every result we obtained so far}, using \leb{} values instead of \leb{} points.

We illustrate with an example how this translation works for e.g., Corollary~\ref{c:powa}.
Consider the same setting as in Section~\ref{s:leb-vs-nn}.

\begin{corollary}
\label{c:powa-values}
If $(X^x_m)_{m\in\N}$ is defined according to \aaa{an ISIMIN} (Definitions~\ref{d:nn-selector}~and~\ref{d:ISIMIN}) and $l \in \R$, the following are equivalent:
\begin{enumerate}[topsep = 2pt, parsep = 2pt, itemsep = 2pt]
    \item $l$ is the \leb{} value for $\eta$ at $x$ with respect to $\P_X$;
    \item $\eta(X^x_m)$ converges to $l$ in $\cL^1$, as $m\to \iop$.
\end{enumerate}
\end{corollary}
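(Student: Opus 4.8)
The plan is to deduce Corollary~\ref{c:powa-values} from Corollary~\ref{c:powa} by the two observations recorded at the start of Section~\ref{s:from-points-to-values}. First I would dispose of the case $\P_X(\{x\}) > 0$. On the one hand, arguing exactly as in the proof of item~\ref{i:trivial-1} of Lemma~\ref{l:trivia-cases}, the nearest neighbor $X^x_m$ equals $x$ for all large enough $m$, $\P$-almost surely, so $\eta(X^x_m) \to \eta(x)$ in $\cL^1$; on the other hand, since $\bigcap_{r>0}\bar{B}_r = \{x\}$, dominated convergence (with the constant dominating function $\bno{\eta}_\iop + \babs{l}$) and the continuity from above of the finite measure $\P_X$ give that the ratio $\E\bsb{\I_{\bar{B}_r}(X)\,\babs{\eta(X)-l}}/\P_X(\bar{B}_r)$ converges to $\babs{\eta(x)-l}$ as $r\to 0^+$, which vanishes if and only if $l = \eta(x)$. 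Hence in this case both statements of the corollary are equivalent to $l = \eta(x)$, and we are done.

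It remains to treat the case $\P_X(\{x\}) = 0$. Define $\wtil\colon\cX\to\R$ by setting $\wtil(x) := l$ and $\wtil(x') := \eta(x')$ for $x'\neq x$; since singletons are Borel and $l\in\R$, the function $\wtil$ is bounded and measurable. Because $\wtil$ and $\eta$ differ only on the $\P_X$-null set $\{x\}$, for every $r>0$ we have
\[
    \frac{\E\bsb{\I_{\bar{B}_r}(X)\,\babs{\wtil(X) - \wtil(x)}}}{\P_X(\bar{B}_r)}
=
    \frac{\E\bsb{\I_{\bar{B}_r}(X)\,\babs{\eta(X) - l}}}{\P_X(\bar{B}_r)}\;,
\]
so that $l$ is the \leb{} value of $\eta$ at $x$ with respect to $\P_X$ if and only if $x$ is a \leb{} point of $\wtil$ with respect to $\P_X$. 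Moreover, since $\P(X_k = x) = 0$ for every $k$, the nearest neighbor $X^x_m\in\{X_1,\l,X_m\}$ satisfies $X^x_m\neq x$ $\P$-a.s., whence $\wtil(X^x_m) = \eta(X^x_m)$ $\P$-a.s., so $\eta(X^x_m)\to l$ in $\cL^1$ if and only if $\wtil(X^x_m)\to\wtil(x)$ in $\cL^1$. Note that $X^x_m$ is a function of $d(x,X_1),\l,d(x,X_m)$ and $\Theta_m$ only (Definition~\ref{d:nn-selector}), so the same sequence $(X^x_m)_{m\in\N}$ is a nearest neighbor sequence defined according to the given ISIMIN both for $\eta$ and for $\wtil$.

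To finish, I would apply Corollary~\ref{c:powa} with $\wtil$ in the role of $\eta$ --- legitimate because $\wtil$ is bounded and measurable, as required by Assumption~\ref{ass:base} --- obtaining that $x$ is a \leb{} point of $\wtil$ if and only if $\wtil(X^x_m)\to\wtil(x)$ in $\cL^1$; chaining this equivalence with the two reductions above yields the statement. I do not anticipate a real obstacle: the argument is essentially bookkeeping around a single measure-zero point, and the only care needed is to verify that $\wtil$ satisfies the standing hypotheses of the earlier results and that replacing $\eta$ by $\wtil$ alters neither the \leb{} ratios nor the $\cL^1$-distances $\E\bsb{\babs{\eta(X^x_m)-l}}$. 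The identical template, applied verbatim, upgrades every condition and result obtained so far from \leb{} points to \leb{} values, as claimed in this section.
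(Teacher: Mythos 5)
Your proof is correct and follows essentially the same route as the paper: in the non-atomic case it reduces to Corollary~\ref{c:powa} via the auxiliary function $\wtil$ (using $\P(X=x)=0$ and $\P(X^x_m=x)=0$ to identify the \leb{} ratios and the $\cL^1$-distances), and it treats the atom case $\P_X(\{x\})>0$ separately. Your handling of the atom case is merely a slightly more self-contained version of the paper's (showing both conditions are equivalent to $l=\eta(x)$ rather than asserting $l=\eta(x)$ and citing Corollary~\ref{c:powa}), so no substantive difference remains.
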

\begin{proof}
Assume that $\eta(X^x_m)$ converges to $l$ in $\cL^1$, as $m\to \iop$.
If $\P_X\brb{ \{x\} }>0$ then $l = \eta(x)$ and the results follows directly from Corollary \ref{c:powa}. If $\P_X \brb{ \{x\} }=0$ then, since also $\P_{X^x_m}\brb{ \{x\} }=0$, using $\wtil$ as above, we have that
\[
    \E \bsb{\babs{\wtil(X^x_m) - \wtil(x)}} 
= 
    \E \bsb{\babs{\eta(X^x_m) - l}} \to 0\;, 
    \qquad m \to \infty\;, 
\]
and Corollary~\ref{c:powa} applied to $\wtil$ yields
\[
    \frac{\E \Bsb{ \I_{\bar{B}_r(x)}(X) \, \babs{\eta(X) - l} } }{\P_X \lrb{ \bar{B}_r(x) }} 
= 
    \frac{\E \Bsb{ \I_{\bar{B}_r(x)}(X) \, \babs{\wtil(X) - \wtil(x)} } }{\P_X \lrb{ \bar{B}_r(x) }} \to 0\;, 
    \qquad r \to 0^+\;.
\]
Vice versa, assume that $l$ is the \leb{} value for $\eta$ at $x$ with respect to $\P_X$. If $\P_X \brb{ \{x\} }>0$ then $l = \eta(x)$ and the results follows from Corollary \ref{c:powa}. If $\P_X \brb{ \{x\} }=0$ then, using $\wtil$ as above,
\[
    \frac{\E \Bsb{ \I_{\bar{B}_r(x)}(X) \, \babs{\wtil(X) - \wtil(x)} } }{\P_X \lrb{ \bar{B}_r(x) }} 
= 
    \frac{\E \Bsb{ \I_{\bar{B}_r(x)}(X) \, \babs{\eta(X) - l} } }{\P_X \lrb{ \bar{B}_r(x) }} \to 0\;,
    \qquad r \to 0^+\;,
\]
and since also $\P_{X^x_m} \brb{ \{x\} }=0$, Corollary~\ref{c:powa} gives us
\[
    \E \bsb{\babs{\eta(X^x_m) - l}} 
= 
    \E \bsb{\babs{\wtil(X^x_m) - \wtil(x)}} \to 0\;,
    \qquad m \to \infty\;. \qedhere
\]
\end{proof}

\section{ISIMINs and nearest neighbor classification}
\label{s:classification}

In this section we present applications of ISIMINs to nearest neighbor classification. 
Our results extend what is known for lexicographical tie-breaking rules \citep{devroye1981inequality} to the more general tie-breaking rules given by ISIMINs. 
In particular, this implies that our results hold for the (uniformly) random tie-breaking rule in \citep{cerou2006nearest}.

We consider the same setting as in Section~\ref{s:leb-vs-nn} (more specifically, of Subsection~\ref{s:tie-break}), with the following differences:
\begin{enumerate}[topsep = 2pt, parsep = 2pt, itemsep = 2pt]
    \item there is no fixed $x \in \cX$;
    \item there exists a sequence of random variables $Y_1, Y_2, \l\colon \Omega \to \{0,1\}$ such that $(X,Y),(X_1,Y_1),(X_2,Y_2),\l$ are $\P$-i.i.d.;
    \item $(\psi_m, \Theta_m)_{m\in \N}$ is \aaa{an ISIMIN} such that for each $m \in \N$ we have that $\Theta_m$ is $\P$-independent of $(X,Y),(X_1,Y_1),\l,(X_m,Y_m)$;
    \item rather than being an arbitrary bounded function, $\eta \colon \Omega \to \R$ is the \emph{regression function} of $Y$ with respect to $X$, i.e., $\eta$ is any measurable function such that $\eta(X) = \E[ Y \mid X]$ (whose existence is guaranteed by the Doob--Dynkin Lemma \citep[Chapter~1.2, Proposition~3]{rao2006probability}), that we can assume $[0,1]$-valued.
\end{enumerate}
We now define nearest neighbor classification by means of \aaa{an ISIMIN}.
\begin{definition}[Nearest neighbor classification according to \aaa{an ISIMIN}]
For all $m\in \N$, the nearest neighbor classification (with training set $(X_1,Y_1),\l(X_m,Y_m)$ and test data point $X$) according to $(\psi_m, \Theta_m)$ is defined by $Y_{\Smn_m}$,
where $\Smn_m$ is the random index of the random point among $X_1, \l, X_m$ that is closest to $X$ according to $(\smn_m, \Theta_m)$, i.e., $\Smn_m := \smn_m \brb{ d(X,X_1), \l, d(X,X_m), \Theta_m}$.
\end{definition}
We will prove the convergence of the classification risk $\P(Y_{\Smn_m} \neq Y)$ of the nearest neighbor $Y_{\Smn_m}$ whenever the \leb{}--\besi{} differentiation theorem holds for $\eta$, i.e., if $\P_X$-almost every $x\in \cX$ is a \leb{} point for $\eta$.
This will follow from our results on ISIMINs and Proposition~\ref{p:prop-inf}.
This proposition is similar in spirit to other known results for plug-in decisions, but it is tailored to our $1$-\nn{} classification problem.
E.g., in \citep[Theorem~2.2]{devroye1996probabilistic}, the comparison term on the right hand side is the Bayes risk $\E\bsb{ \min\brb{ \eta(X),1-\eta(X) } }$, since the goal there is to obtain consistency. 
However, it has long been known that the $1$-\nna{} is not consistent without additional assumptions (see, e.g., \citep[Theorem~5.4 and subsequent remark]{devroye1996probabilistic}).
Nevertheless, one can study the convergence of its risk.
With this goal in mind, the appropriate quantity to compare $\P(Y_{\Smn_m} \neq Y)$ to is not the Bayes risk, but rather a ``surrogate risk'', which turns out to be $2 \E\bsb{ \eta(X) \brb{ 1-\eta(X) } }$.
A similar formula appears also in \citep{devroye1981inequality}.
\begin{proposition}
\label{p:prop-inf}
For all $m\in \N$, we have
\[
    \biggl\lvert \P \brb{ Y_{\Smn_m} \neq Y } - 2\E \Bsb{ \eta(X)\brb{ 1-\eta(X) } } \biggr\rvert 
\le 
    \int_{\cX} \E \Bsb{\babs{ \eta(X^x_m) - \eta(x) } } \dif\P_X(x) \;,
\]
where for each $x \in \cX$, $X^x_m$ is the nearest neighbor of $x$ (among $X_1, \l, X_m$), according to $(\smn_m, \Theta_m)$, i.e.,  
$
    X^x_m 
:= 
    X_{\smn_m ( d(x,X_1), \l, d(x,X_m), \Theta_m )}
$.
\end{proposition}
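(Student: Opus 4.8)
The plan is to compute $\P(Y_{\Smn_m} \neq Y)$ by conditioning on the $\sigma$-algebra $\cG := \sigma(X, X_1, \l, X_m, \Theta_m)$ generated by all the feature variables together with the tie-breaking randomness. First I would split the event according to which index realizes the nearest neighbor, $\P(Y_{\Smn_m} \neq Y) = \sum_{k=1}^m \P\brb{ \{Y \neq Y_k\} \cap \{\Smn_m = k\} }$, and observe that $\{\Smn_m = k\}$ is $\cG$-measurable (since $\Smn_m = \smn_m(d(X,X_1), \l, d(X,X_m),\Theta_m)$ with $\smn_m$ measurable, cf.\ Definition~\ref{d:ISIMIN}), whereas, conditionally on $\cG$, the labels $Y, Y_1, \l, Y_m$ are independent with $Y$ distributed as $\mathrm{Bernoulli}(\eta(X))$ and $Y_k$ as $\mathrm{Bernoulli}(\eta(X_k))$ --- this last fact follows from the $\P$-i.i.d.\ assumption on the pairs $(X,Y),(X_1,Y_1),\l$ together with the $\P$-independence of $\Theta_m$ from them. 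Hence $\E\bsb{ \I\{Y \neq Y_k\} \mid \cG } = \eta(X)\brb{1-\eta(X_k)} + \brb{1-\eta(X)}\eta(X_k)$ almost surely, and summing over $k$ (using $\sum_{k=1}^m \I\{\Smn_m = k\}\, g(X_k) = g(X_{\Smn_m})$) yields
\[
    \P(Y_{\Smn_m} \neq Y) = \E\Bsb{ \eta(X)\brb{1-\eta(X_{\Smn_m})} + \brb{1-\eta(X)}\eta(X_{\Smn_m}) } \;.
\]

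Next I would isolate the contribution of the test point. Since $X$ is $\P$-independent of $(X_1,\l,X_m,\Theta_m)$, and since for a fixed value $x$ of the test point the random point $X_{\Smn_m}$ is by construction exactly the nearest neighbor $X^x_m$ of Definition~\ref{d:nn-selector}, Tonelli's theorem (the integrand is bounded, measurable, and non-negative, being a probability) gives
\[
    \P(Y_{\Smn_m} \neq Y) = \int_{\cX} \E\Bsb{ \eta(x) + \eta(X^x_m) - 2\,\eta(x)\,\eta(X^x_m) } \dif \P_X(x) \;,
\]
using the identity $\eta(x)\brb{1-\eta(x')} + \brb{1-\eta(x)}\eta(x') = \eta(x) + \eta(x') - 2\eta(x)\eta(x')$. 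On the other hand, $2\,\E\bsb{\eta(X)\brb{1-\eta(X)}} = \int_{\cX} \brb{ 2\eta(x) - 2\eta(x)^2 } \dif\P_X(x)$. Subtracting the two integrands pointwise and factoring, the difference of the two quantities equals $\int_{\cX} \E\bsb{ \brb{1-2\eta(x)}\brb{\eta(X^x_m) - \eta(x)} } \dif\P_X(x)$.

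Finally, since $\eta$ is $[0,1]$-valued we have $\babs{1-2\eta(x)} \le 1$, hence $\babs{ \brb{1-2\eta(x)}\brb{\eta(X^x_m)-\eta(x)} } \le \babs{\eta(X^x_m) - \eta(x)}$ pointwise; moving the absolute value inside the expectation and then inside the outer integral (triangle inequality) produces the claimed bound.

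The computations are all elementary; the one step that requires genuine care is the conditional-independence argument --- namely, spelling out precisely that conditioning on $\cG$ leaves $Y, Y_1, \l, Y_m$ independent with the stated Bernoulli conditional laws, and that the ISIMIN index $\Smn_m$ is a deterministic measurable function of the variables generating $\cG$ (so that it commutes with the conditional expectation over the labels). Everything else --- the algebraic identity, the Tonelli interchange, and the bound $\babs{1-2\eta}\le 1$ --- is routine.
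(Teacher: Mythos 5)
Your argument is correct, and it reaches the same two key waypoints as the paper's proof --- the exact identity $\P(Y_{\Smn_m}\neq Y)=\E\bsb{\eta(X)\brb{1-\eta(X_{\Smn_m})}+\brb{1-\eta(X)}\eta(X_{\Smn_m})}$ and the final bound via $\labs{1-2\eta}\le 1$ --- but it implements the conditioning differently. The paper never asserts conditional laws: it splits on $\{\Smn_m=k\}$ and applies its Lemma~\ref{l:fake-indep} twice (once conditioning on $X$ to replace $1-Y$ by $1-\eta(X)$, once conditioning on $X_k$ to replace $Y_k$ by $\eta(X_k)$), precisely because that lemma is designed to avoid conditional-distribution language in possibly non-separable spaces; it then converts $\E\bsb{\labs{\eta(X_{\Smn_m})-\eta(X)}}$ into the integral over $x$ via the Freezing Lemma (Lemma~\ref{l:freezing}). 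You instead condition once on $\cG=\sigma(X,X_1,\l,X_m,\Theta_m)$, use that $\{\Smn_m=k\}$ is $\cG$-measurable, and invoke conditional independence of the labels with conditional Bernoulli parameters $\eta(X)$, $\eta(X_k)$; then you use Tonelli with the independence of $X$ from $(X_1,\l,X_m,\Theta_m)$ in place of the Freezing Lemma, subtracting $2\E\bsb{\eta(X)(1-\eta(X))}$ inside the integral before taking absolute values (the paper bounds first and freezes afterwards --- both orders work). The only step you assert rather than prove is the conditional statement $\E\bsb{Y(1-Y_k)\mid\cG}=\eta(X)\brb{1-\eta(X_k)}$ and its companions; since the labels are $\{0,1\}$-valued this involves no regular conditional distributions and follows from the independence of the pairs $(X,Y),(X_1,Y_1),\l,(X_m,Y_m),\Theta_m$ by a product/monotone-class argument, or by two applications of the paper's Lemma~\ref{l:fake-indep}, so it is true --- but for a complete write-up you should supply that justification (and, for the Tonelli step, note that $(x,\omega)\mapsto\eta\brb{X^x_m(\omega)}$ is jointly measurable, which holds since $d$ is continuous and $\smn_m$, $\eta$ are measurable). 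What your route buys is a shorter, more transparent derivation of the exact error formula in one conditioning pass; what the paper's route buys is that every step is reduced to the single self-contained Lemma~\ref{l:fake-indep}, keeping the argument free of any appeal, even informal, to conditional distributions.
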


\begin{proof}
Fix any $m\in \N$.
Note that
\[
    \P \brb{ Y_{\Smn_m} \neq Y }
=
    \P \Brb{ \bcb{ Y_{\Smn_m} = 1 } \cap \bcb{ Y = 0 } } + \P \Brb{ \bcb{ Y_{\Smn_m} = 0 } \cap \bcb{ Y = 1 } } \;.
\]
We analyze the first term. The second one can be computed similarly.
Note that
\[
    \P \Brb{ \bcb{ Y_{\Smn_m} = 1 } \cap \bcb{ Y = 0 } } 
= 
    \E\bsb{ Y_{\Smn_m}(1-Y) } = \E\Bsb{\E\bsb{ Y_{\Smn_m}(1-Y) \mid X } }
=
    (\star) \;.
\]
Now, if we let $\cV = \cX$, $\cW = \lrb{\cX\times\{0,1\}}^m\times\cZ_m$, $U = 1-Y$, $V = X$, $W = \brb{ (X_1,Y_1), \l, (X_m,Y_m), \Theta_m }$, and 
\begin{align*}
    f \colon \cV \times \cW & \to [0,1] \:, \\ \Brb{x,\brb{(x_1,y_1),\l,(x_m,y_m),z_m } } & \mapsto y_{ \smn_m(d(x,x_1), \l, d(x,x_m),z_m) } \;,
\end{align*}
applying Lemma~\ref{l:fake-indep} (see Appendix~\ref{s:proba}), we get
\begin{align*}
    \E\bsb{ (1-Y)Y_{\Smn_m} \mid X }
&= 
    \E\bsb{ Uf(V,W) \mid V } 
\overset{\eqref{e:generalized-conditional-independence}}{=} 
    \E [ U \mid V ] \, \E\bsb{ f(V,W) \mid V } 
\\
&= 
    \E\bsb{ 1-Y \mid X } \, \E\bsb{ Y_{\Smn_m} \mid X } 
= 
    \brb{1 - \E\bsb{ Y \mid X } } \, \E\bsb{ Y_{\Smn_m} \mid X } 
\\
&= 
    \brb{1 - \eta(X) } \, \E\bsb{ Y_{\Smn_m} \mid X } 
= 
    \E\Bsb{ Y_{\Smn_m} \, \brb{1 - \eta(X) } \mid X  } \;.
\end{align*}
Thus
\begin{align*}
    (\star)
&=
    \E \bbsb{ \E\Bsb{ Y_{\Smn_m} \, \brb{1 - \eta(X) }  \mid X } }
=
    \E\Bsb{ Y_{\Smn_m} \, \brb{1 - \eta(X) } }
\\
&=
    \sum_{k=1}^m \E\Bsb{ Y_{k} \, \brb{1 - \eta(X) } \, \I_{\{\Smn_m = k \}} }
\\
&=
    \sum_{k=1}^m \E \bbsb{ \E\Bsb{Y_{k} \, \brb{1 - \eta(X) } \, \I _{\{\Smn_m = k \}} \mid X_k } }
=
    (\star \star)
\end{align*}
Now, if for each $k\in \{1,\l,m\}$, we let $\cV = \cX$, $\cW = {\cX}^m \times \cZ_m$, $U = Y_k$, $V = X_k$, $W = (X, X_1, \l, X_{k-1}, X_{k+1}, \l, X_m, \Theta_m)$, applying Lemma~\ref{l:fake-indep} (Appendix~\ref{s:proba}) to the function
\begin{align*}
    f \colon \cV \times \cW & \to [0,1] \:, 
\\ 
    \Brb{ x_k,\brb{(x,x_{1:k-1},x_{k+1:m}),z_m} }  & \mapsto \brb{ 1-\eta(x) } \, \I_{\{\smn_m = k \}} \brb{ d(x,x_1), \l, d(x,x_m),z_m } \;,
\end{align*}
where $x_{1:k-1}$ (resp., $x_{k+1:m}$) is a shorthand for $x_1, \l, x_{k-1}$ (resp., $x_{k+1}, \l, x_m$) ---with the obvious adjustments for $k=1$ (resp., $k=m$)--- yields
\begin{align*}
    \E\Bsb{Y_{k} \, \brb{1 - \eta(X) } \,  \I_{\{\Smn_m = k \}} \mid X_k }
&= 
    \E\bsb{ U f(V,W) \mid V } 
\overset{\eqref{e:generalized-conditional-independence}}{=} 
    \E [ U \mid V ] \E\bsb{ f(V,W) \mid V }
\\
&= \E [ Y_{k} \mid X_k ] \, \E\Bsb{ \brb{1 - \eta(X) } \,  \I_{ \{\Smn_m = k \} } \mid X_k }
\\
&= \eta(X_{k}) \, \E\Bsb{ \brb{1 - \eta(X) } \,  \I_{ \{\Smn_m = k \} } \mid X_k }
\\
&= \E\Bsb{ \eta(X_{k}) \, \brb{1 - \eta(X) } \, \I_{ \{\Smn_m = k \} } \mid X_k } \;.
\end{align*}
Thus
\begin{align*}
    (\star\star)
&=
    \sum_{k=1}^m \E \bbsb{ \E\Bsb{  \eta(X_{k}) \, \brb{1 - \eta(X) } \, \I_{ \{\Smn_m = k \} } \mid X_k } }
\\
&=
    \sum_{k=1}^m \E\Bsb{ \eta(X_{k}) \, \brb{1 - \eta(X) } \, \I_{ \{\Smn_m = k \} } }
\\
&=
    \sum_{k=1}^m \E\Bsb{ \eta(X_{\Smn_m}) \, \brb{1 - \eta(X) } \, \I_ { \{\Smn_m = k \} } }
=
    \E\Bsb{ \eta(X_{\Smn_m}) \, \brb{1 - \eta(X) } } \;.
\end{align*}
So
\[
    \P \Brb{ \bcb{ Y_{\Smn_m} = 1 } \cap \bcb{ Y = 0 } } 
= 
    \E\Bsb{ \eta(X_{\Smn_m})\brb{1 - \eta(X) } } \;.
\]
Analogously, we can prove
\[
    \P \Brb{ \bcb{ Y_{\Smn_m} = 0 } \cap \bcb{ Y = 1 } } 
= 
    \E\Bsb{ \eta(X)\brb{1 - \eta(X_{\Smn_m}) } } \;.
\]
Hence
\begin{multline*}
    \biggl\lvert \P \brb{ Y_{\Smn_m} \neq Y } - 2\E \Bsb{ \eta(X)\brb{ 1-\eta(X) } } \biggr\rvert
\\
    \begin{aligned}
&=
    \biggl\lvert \E\Bsb{ \eta(X_{\Smn_m}) \brb{1 - \eta(X) } } + \E\Bsb{ \eta(X)\brb{1 - \eta(X_{\Smn_m}) } } - 2\E \Bsb{ \eta(X)\brb{ 1-\eta(X) } } \biggr\rvert
\\
&=
    \biggl\lvert \E\Bsb{ \brb{\eta(X_{\Smn_m}) - \eta(X) }\brb{1 - \eta(X) } + \eta(X) \Brb{ \brb{1 - \eta(X_{\Smn_m}) } - \brb{1 - \eta(X) } } }  \biggr\rvert
\\
&=
    \biggl\lvert \E\Bsb{ \brb{1 - 2\eta(X) }\brb{\eta(X_{\Smn_m}) - \eta(X) } } \biggr\rvert 
    \le 
    \E\bbsb{ \Babs{ \brb{1 - 2\eta(X) }\brb{\eta(X_{\Smn_m}) - \eta(X) } } }
\\
&\le
    \E\Bsb{ \babs{ \eta(X_{\Smn_m}) - \eta(X) } }
=
    \E \bbsb{ \E\Bsb{ \babs{ \eta(X_{\Smn_m}) - \eta(X) } \mid X } }
\\
&=
    \E \bbsb{ \E\Bsb{ \babs{ \eta(X_{\smn_m(d(X,X_1),\l,d(X,X_m),\Theta_m)}) - \eta(X) } \mid X } }
\\
& \overset{\eqref{e:freezing}}{=}
   \E\Bbsb{ \bbsb{\E\Bsb{ \babs{ \eta(X_{\smn_m(d(x,X_1),\l,d(x,X_m),\Theta_m)}) - \eta(x) } } }_{x=X} }
\\
&=
    \E\Bbsb{ \bbsb{\E\Bsb{ \babs{ \eta(X^x_m) - \eta(x) } } }_{x=X} }
=
    \int_{\cX} \E\Bsb{ \babs{ \eta(X^x_m) - \eta(x) } } \dif\P_X(x) \;,
\end{aligned}
\end{multline*}
where the second to last inequality follows by the Freezing Lemma (Lemma~\ref{l:freezing} in Appendix~\ref{s:proba}).
\end{proof}
We can now state the most important result of this section.
It guarantees the convergence of the classification risk of the $1$-\nna{}, using ISIMINs, on arbitrary metric spaces, \aaa{with the only assumption that $\P_X$-almost every $x \in \cX$ is a \leb{} point for the regression function $\eta$ with respect to $\P_X$.}
\begin{theorem}
\label{t:conv-nn-classification}
If $\P_X$-almost every $x \in \cX$ is a \leb{} point for $\eta$ with respect to $\P_X$, then
\begin{equation}
    \P \brb{ Y_{\Smn_m} \neq Y } \to 2\E \Bsb{ \eta(X)\brb{ 1-\eta(X) } } \;, \qquad{} m \to \infty \;.
\end{equation}
\end{theorem}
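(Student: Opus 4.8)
The plan is to combine Proposition~\ref{p:prop-inf} with a dominated-convergence argument driven by Corollary~\ref{c:powa}. By Proposition~\ref{p:prop-inf}, for every $m\in\N$,
\[
    \Babs{ \P\brb{ Y_{\Smn_m} \neq Y } - 2\E\bsb{ \eta(X)\brb{1-\eta(X)} } }
\le
    \int_{\cX} \E\bsb{ \babs{ \eta(X^x_m) - \eta(x) } } \dif \P_X(x)\;,
\]
so it suffices to show that the right-hand side tends to $0$ as $m\to\iop$. Write $g_m(x) := \E\bsb{\babs{\eta(X^x_m)-\eta(x)}}$; the map $g_m$ is $\P_X$-measurable, as is already implicit in the proof of Proposition~\ref{p:prop-inf}, where the Freezing Lemma (Lemma~\ref{l:freezing}) is used to identify $\E\bsb{\babs{\eta(X_{\Smn_m})-\eta(X)}\mid X}$ with $[g_m(x)]_{x=X}$.

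First I would establish a uniform domination: since $\eta$ is $[0,1]$-valued, $\babs{\eta(X^x_m)-\eta(x)}\le 1$ pointwise, hence $0\le g_m(x)\le 1$ for all $x\in\cX$ and all $m\in\N$; moreover the constant $1$ is $\P_X$-integrable because $\P_X$ is a probability measure. Next I would extract the pointwise convergence: by hypothesis, $\P_X$-almost every $x\in\cX$ is a \leb{} point for $\eta$ with respect to $\P_X$, and in particular such an $x$ lies in $\supp(\P_X)$, so Corollary~\ref{c:powa} applies (recall $(X^x_m)_{m\in\N}$ is defined according to an ISIMIN) and gives $g_m(x)\to 0$ as $m\to\iop$, for $\P_X$-almost every $x$.

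Finally I would invoke the dominated convergence theorem: $g_m\to 0$ $\P_X$-almost everywhere and $0\le g_m\le 1\in\cL^1(\P_X)$, so $\int_{\cX} g_m\dif\P_X\to 0$ as $m\to\iop$, and the displayed inequality yields the conclusion. I do not expect any genuine obstacle here: the heavy lifting is entirely contained in Proposition~\ref{p:prop-inf} (the plug-in bound), Proposition~\ref{p:ISIMIN} (ISIMINs satisfy~\eqref{e:ass-nn}), and Corollary~\ref{c:powa} (the \leb{}-vs-\nn{} equivalence); the only mild points of care are the measurability of $g_m$ (handled via the Freezing Lemma) and the fact that the set of \leb{} points is $\P_X$-conull and contained in $\supp(\P_X)$, both of which are immediate.
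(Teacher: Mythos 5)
Your proposal is correct and follows essentially the same route as the paper: apply Proposition~\ref{p:prop-inf}, use Corollary~\ref{c:powa} to get pointwise convergence of $x \mapsto \E\bsb{\babs{\eta(X^x_m)-\eta(x)}}$ to $0$ on the $\P_X$-conull set of \leb{} points, and conclude by dominated convergence. Your added remarks on the uniform bound by $1$ and on measurability via the Freezing Lemma are just slightly more explicit versions of steps the paper leaves implicit.
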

\begin{proof}
As in Proposition~\ref{p:prop-inf}, we define, for each $m\in \N$ and any $x \in \cX$, $X^x_m$ as the nearest neighbor of $x$ (among $X_1, \l, X_m$), according to $(\smn_m, \Theta_m)$, i.e.,  
$
    X^x_m 
:= 
    X_{\smn_m ( d(x,X_1), \l, d(x,X_m), \Theta_m )}
$.
Take $N \subset \cX$ be a Borel set of $(\cX,d)$ such that $\P_X(N) = 0$ and, for all $x \in \cX \m N$, we have that $x$ is a \leb{} point.
Then, Proposition~\ref{p:prop-inf} implies, for all $m \in \N$
\begin{multline*}
    \biggl\lvert \P \brb{ Y_{\Smn_m} \neq Y } - 2\E \Bsb{ \eta(X)\brb{ 1-\eta(X) } } \biggr\rvert 
\le 
    \int_{\cX} \E \Bsb{\babs{ \eta(X^x_m) - \eta(x) } } \dif\P_X(x)
\\
=
    \int_{\cX\m N} \E \Bsb{\babs{ \eta(X^x_m) - \eta(x) } } \dif\P_X(x)
    \to 0 \;, \qquad \text{as } m \to \iop
    \;,
\end{multline*}
where the last term vanishes by Corollary~\ref{c:powa} and the dominated convergence theorem.
\end{proof}
The previous result gives immediately the consistency of the $1$-\nn{} classifier $Y_{\Smn_m}$ in arbitrary metric spaces under the \emph{realizability assumption}
\begin{equation}
    \label{e:realizability-assumption}
    \exists f\colon \cX \to \{0,1\}, f \text{ measurable, such that } \P \brb{ f(X) = Y } = 1\;.
\end{equation}
Before stating the result, we recall that a classification algorithm is said $\P$-\emph{consistent} if its classification risk converges to the \emph{Bayes risk}
\[
    L^{\star}:=\E \Bsb{ \min \Brb{\eta(X),\brb{ 1-\eta(X) }}  }\;.
\]
In our setting, the $\P$-consistency condition for $Y_{\Smn_m}$ is
\[
    \P\brb{ Y_{\Smn_m} \neq Y } \to L^{\star} \;, \qquad m \to \iop \;.
\]
\begin{corollary}
\label{c:const-realizz}
Assume that $\P_X$-almost every $x \in \cX$ is a \leb{} point for $\eta$ with respect to $\P_X$.
Then, the following are equivalent:
\begin{enumerate}[topsep = 2pt, parsep = 2pt, itemsep = 2pt]
    \item $Y_{\Smn_m}$ is $\P$-consistent and $\P\brb{\eta(X) = 1/2} = 0$;
    \item the realizability assumption holds.
\end{enumerate}
\end{corollary}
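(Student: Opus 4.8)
The plan is to chain three equivalences: (i) reduce $\P$-consistency to an equality between the limiting $1$-\nn{} risk and the Bayes risk via Theorem~\ref{t:conv-nn-classification}; (ii) turn that equality into a pointwise constraint on $\eta(X)$ via an elementary inequality; and (iii) identify the resulting constraint with the realizability assumption through a short conditional-expectation computation.

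First I would apply Theorem~\ref{t:conv-nn-classification}: since $\P_X$-almost every $x$ is a \leb{} point for $\eta$ with respect to $\P_X$, we have $\P\brb{ Y_{\Smn_m}\neq Y }\to 2\E\bsb{ \eta(X)\brb{1-\eta(X)} }$ as $m\to\iop$. A convergent real sequence has a unique limit, so $Y_{\Smn_m}$ is $\P$-consistent, i.e.\ $\P\brb{ Y_{\Smn_m}\neq Y }\to L^\star=\E\bsb{ \min\brb{ \eta(X),1-\eta(X) } }$, if and only if $2\E\bsb{ \eta(X)\brb{1-\eta(X)} }=\E\bsb{ \min\brb{ \eta(X),1-\eta(X) } }$.

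Next I would prove the elementary fact that $2p(1-p)\ge \min(p,1-p)$ for every $p\in[0,1]$, with equality exactly when $p\in\{0,1/2,1\}$: by the symmetry $p\leftrightarrow 1-p$ it suffices to treat $p\in[0,1/2]$, where $2p(1-p)-\min(p,1-p)=p(1-2p)\ge 0$, vanishing only at $p\in\{0,1/2\}$. Plugging in $p=\eta(X)$ and integrating, the equality from step (i) holds if and only if the $\P$-a.s.\ nonnegative random variable $2\eta(X)\brb{1-\eta(X)}-\min\brb{\eta(X),1-\eta(X)}$ vanishes $\P$-a.s., that is, if and only if $\eta(X)\in\{0,1/2,1\}$ $\P$-a.s. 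Intersecting with the second requirement of the first condition, namely $\P\brb{\eta(X)=1/2}=0$, we conclude that the first condition of the corollary is equivalent to: $\eta(X)\in\{0,1\}$ $\P$-a.s.

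Finally I would show that $\eta(X)\in\{0,1\}$ $\P$-a.s.\ is equivalent to the realizability assumption~\eqref{e:realizability-assumption}. If~\eqref{e:realizability-assumption} holds with a measurable witness $f$, then $Y=f(X)$ $\P$-a.s.; since $f(X)$ is $\sigma(X)$-measurable, it is a version of $\E[Y\mid X]$, hence $\eta(X)=f(X)\in\{0,1\}$ $\P$-a.s. Conversely, put $f:=\I_{\{\eta\ge 1/2\}}$, which is measurable and $\{0,1\}$-valued; if $\eta(X)\in\{0,1\}$ $\P$-a.s.\ then $f(X)=\eta(X)$ $\P$-a.s., and conditioning on $X$ (under which $Y$ is Bernoulli with parameter $\eta(X)$) yields $\E\bsb{ \labs{Y-\eta(X)}\mid X }=2\eta(X)\brb{1-\eta(X)}=0$ $\P$-a.s., so $\E\bsb{\labs{Y-f(X)}}=\E\bsb{\labs{Y-\eta(X)}}=0$ and therefore $\P\brb{f(X)=Y}=1$. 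Chaining steps (i)--(iii) gives the corollary. I do not anticipate a genuine obstacle; the only delicate point is the measure-theoretic bookkeeping in the last paragraph --- that an a.s.-equal $\sigma(X)$-measurable random variable is a legitimate version of the conditional expectation, which lets us identify it with $\eta(X)$ $\P$-a.s.
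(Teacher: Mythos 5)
Your proposal is correct. The skeleton coincides with the paper's: invoke Theorem~\ref{t:conv-nn-classification} to identify the limit of the risk, use the elementary fact that $2y(1-y)\ge\min(y,1-y)$ on $[0,1]$ with equality exactly at $y\in\{0,\nicefrac{1}{2},1\}$, and thereby reduce the first condition to $\eta(X)\in\{0,1\}$ $\P$-almost surely; the forward implication (realizability gives $\eta(X)=f(X)=Y$ a.s.) is also the same. Where you genuinely depart from the paper is the converse construction of the realizability witness. The paper works at the level of events: it sets $E:=\brb{\eta(X)}^{-1}(\{1\})$, uses $\sigma(X)$-measurability to extract a Borel set $A\subset\cX$ with $E=X^{-1}(A)$, and then verifies $\P\brb{\I_A(X)=Y}=1$ through the identities $\P(E)=\P(E\cap F)$ and $\P(E^c\cap F)=0$ with $\I_F=Y$. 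You instead exploit that $\eta$ is already given as a measurable function on $\cX$ and take the explicit classifier $f=\I_{\{\eta\ge 1/2\}}$, checking $Y=f(X)$ a.s.\ via the computation $\E\bsb{\labs{Y-\eta(X)}\mid X}=2\eta(X)\brb{1-\eta(X)}=0$ a.s.\ (made rigorous through $\labs{Y-\eta(X)}=Y\brb{1-\eta(X)}+(1-Y)\eta(X)$ and taking out what is known, so no regular conditional distribution is needed). Your route buys a concrete witness and avoids the Doob--Dynkin-style factorization step; the paper's route is insensitive to which version of the regression function is used, since it only manipulates the random variable $\eta(X)$ and events in $\sigma(X)$. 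Both arguments are complete.
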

\begin{proof}
We know from the previous theorem that the classification risk of $Y_{\Smn_m}$ converges to $2\E \bsb{ \eta(X)\brb{ 1-\eta(X) } }$, as $m \to \infty$. 
Since for any $y \in \brb{ 0, \frac{1}{2} } \cup \brb{ \frac{1}{2}, 1 }$ it holds that $2y (1-y) > \min (y,1-y)$, to have $2\E \bsb{ \eta(X)\brb{ 1-\eta(X) } } = L^{\star}$ it is necessary and sufficient that $\P \brb{\eta(X) \in \bcb{0,\frac{1}{2},1 }} = 1$. 
Note that, since $Y$ is $\{0,1\}$-valued, there exists $F \in \cF$ such that $\I_F = Y$.

Suppose that the realizability assumption holds. Let $f\colon \cX \to \{0,1\}$ be a measurable function such that $\P \brb{ f(X) = Y } = 1$. Since
\[
    \eta(X) = \E[Y \mid X] = \E[f(X) \mid X] = f(X) = Y = \I_F \;, \qquad \P\text{-almost surely} \;,
\]
it holds that $\P \brb{ \eta(X) \in \{0,1\} } = 1$. 
Thus, we have that $Y_{\Smn_m}$ is $\P$-consistent and $\P\brb{\eta(X) = 1/2} = 0$.

Vice versa, suppose that $Y_{\Smn_m}$ is $\P$-consistent and $\P\brb{\eta(X) = 1/2} = 0$. 
Then $\P \brb{ \eta(X) \in \lcb{ 0, 1 } } = 1$. Let $E := \brb{\eta(X)}^{-1}\brb{\{1\}}$. 
Since $E$ is $\sigma\brb{ \eta(X) }$-measurable, it is also $\sigma(X)$-measurable.
Then, by definition, there exists a Borel subset $A$ of $(\cX,d)$ such that $E = X^{-1}(A)$, and so $\I_E = \I_{X^{-1}(A)} = \I_A(X)$. We show that the realizability assumption holds with $f = \I_A$. Now, note that
\[
    \P(E) = \E[\I_E] = \E[\I_E\I_E] = \E\bsb{\I_E \E[Y \mid X]} = \E[\I_E Y] = \E[\I_E \I_F] = \P(E \cap F)\;,
\]
where we used the fact that $\P \brb{ \I_{E} = \eta(X) } = 1$. 
Furthermore, note that
\[
    0 = \E[0] = \E[\I_{E^c} \I_E] = \E\bsb{\I_{E^c} \E[Y \mid X]} = \E[\I_{E^c} Y] = \E[\I_{E^c} \I_F] = \P(E^c \cap F)\;.
\]
Then
\begin{align*}
    \E\Bsb{ \babs{\I_A(X) - Y} } &= \E\bsb{ \labs{\I_E - \I_F} } = \P(E \cap F^c) + \P(E^c \cap F)
\\
    &= \P(E \cap F^c) = \P(E) - \P(E\cap F) = 0 \;,
\end{align*}
that is equivalent to $\P \lrb{\I_A(X) = Y} = 1 $.
\end{proof}
As we pointed out in the introduction, if $(\cV,d_{\cV})$ is a Euclidean space then $\mu$-almost every $v \in \cV$ is a \leb{} point for $f$ with respect to $\mu$, for every Borel probability measure $\mu$ and each bounded measurable function $f \colon \cV \to \R$. 
This is a consequence of \leb{}--\besi{} differentiation theorem \cite[Theorems~1.32-1.33]{evans2015measure}. The same holds true if $(\cV,d_{\cV})$ is a finite dimensional Banach space \citep{loeb2006microscopic}, a locally-compact separable ultrametric space \cite[Theorem 9.1]{simmons2012conditional}, a separable Riemannian manifold \cite[Theorem 9.1]{simmons2012conditional}, or the straightforward case where $\cV$ is (at most) countable.
In particular, if $(\cX,d)$ is one of the previous metric spaces, then $\P_X$-almost every $x \in \cX$ is a \leb{} point for $\eta$ with respect to $\P_X$, regardless of which specific Borel probability measure $\P_X$ is.
This does not hold in every metric space. 
Indeed, \cite{preiss1979invalid} showed that if $(\cV,d_{\cV})$ is an infinite dimensional separable Hilbert space, then counterexamples exist, even if the underlying Borel probability measure is Gaussian. 
\cite{preiss1983dimension} also characterized the metric spaces where \leb{}--\besi{} differentiation theorem holds true, in terms of a notion of $\sigma$-finite dimensionality of the space.
\cite{cerou2006nearest} used Preiss' counterexample to show that the $k_m$-\nn{} classification algorithm is not necessarily consistent when the \leb{}--\besi{} differentiation theorem does not hold. 
The same idea works in our context.

\begin{example}
\label{ex:preiss}
\cite{preiss1979invalid} showed that there exists a Polish metric space $\lrb{ \cX, d }$ (which is actually an infinite dimensional separable Hilbert space), a Borel probability measure $\mu$ (which is actually Gaussian) and a compact set $K$ of $\lrb{\cX, d}$ such that $\mu(K) > 0$ and, for all $x\in \cX$, $x \in \supp(\mu)$ and
\[
    \frac{ 1 } { \mu \lrb{\bar{B}_r(x)} } \int_{\bar{B}_r(x)} \I_K \dif \mu \to 0 \;,
    \qquad \text{as } r \to 0^+\;.
\]
Let $(\Omega, \cF, \P)$ be a probability space, $X, X_1, X_2, \l$ be $\P$-i.i.d.\ random variables with common distribution $\P_X = \mu$, and $(\smn_m, \Theta_m)_{m\in \N}$ be \aaa{an ISIMIN} (for the existence of this setting, see Section~\ref{s:charact-arbitrary-measures}).
Define $Y := \I_K(X), Y_1 := \I_K (X_1), Y_2 := \I_K (X_2), \l$. 
As before, we denote for any $m\in \N$ and all $x\in \cX$, the nearest neighbor (among $X_1,\l,X_m$) according to $(\smn_m,\Theta_m)$ by $X^x_m := X_{\smn( d(x,X_1), \l, d(x,X_m), \Theta_m ) }$. 
Note that, for all $x \in \cX$, we have that
\[
    \frac{\E \Bsb{\I_{\bar{B}_r(x)}(X) \babs{\I_{K}(X)-0}}}{\P_X(\bar{B}_r(x))} = \frac{ 1 } { \mu \lrb{\bar{B}_r(x)} } \int_{\bar{B}_r(x)} \I_{K} \dif \mu \to 0 \;, 
    \qquad \text{as } r \to 0^+\;.
\]
By Corollary~\ref{c:powa-values}, this implies that, for all $x \in \cX$, 
\[
    \E\bsb{\I_{K}(X^x_m)}
=
    \E\Bsb{\babs{\I_{K}(X^x_m)-0}} \to 0 \;, 
    \qquad \text{as } m \to \infty\;,
\]
and then also
\[
    \E\bsb{\I_{K^c}(X^x_m)} = 1-\E\bsb{\I_{K}(X^x_m)} \to 1 \;, 
    \qquad \text{as } m \to \infty\;.
\]
Thus, the Freezing~lemma (Lemma~\ref{l:freezing}) and Lebesgue's dominated convergence theorem yield
\begin{align*}
\P \brb{ Y_{\Smn_m} \neq Y } 
&\overset{\phantom{\eqref{e:freezing}}}{=} \P \brb{ \lcb{ Y_{\Smn_m} = 0 } \cap \lcb{ Y = 1 } } + \P \brb{ \lcb{ Y_{\Smn_m} = 1 } \cap \lcb{ Y = 0 } } \\
&\overset{\phantom{\eqref{e:freezing}}}{=} \P \brb{ \lcb{ X_{\Smn_m} \in K^c } \cap \lcb{ X \in K } } + \P \brb{ \lcb{ X_{\Smn_m} \in K } \cap \lcb{ X \in K^c } } \\
&\overset{\phantom{\eqref{e:freezing}}}{=} \E \bsb{ \I_{K^c}(X_{\Smn_m}) \I_{K}(X) } + \E \bsb{ \I_{K}(X_{\Smn_m}) \I_{K^c}(X) }\\
&\overset{\phantom{\eqref{e:freezing}}}{=} \E\Bsb{\E \bsb{ \I_{K^c}(X_{\Smn_m}) \I_{K}(X) \mid X} } + \E\Bsb{\E \bsb{ \I_{K}(X_{\Smn_m}) \I_{K^c}(X) \mid X} }\\
&\overset{\eqref{e:freezing}}{=} \E\bbsb{\Bsb{\E \bsb{ \I_{K^c}(X^x_m) \I_{K}(x) } }_{x=X}} + \E\bbsb{\Bsb{\E \bsb{ \I_{K}(X^x_m) \I_{K^c}(x) } }_{x=X}}\\
&\overset{\phantom{\eqref{e:freezing}}}{=} \int_{\cX} \E \bsb{ \I_{K^c}(X^x_m) \I_{K}(x) } \dif \P_X(x) + \int_{\cX} \E \bsb{ \I_{K}(X^x_m) \I_{K^c}(x) } \dif \P_X(x) \\
&\overset{\phantom{\eqref{e:freezing}}}{=} \int_{\cX} \I_{K}(x) \E \bsb{ \I_{K^c}(X^x_m) } \dif \P_X(x) + \int_{\cX} \I_{K^c}(x) \E \bsb{ \I_{K}(X^x_m) } \dif \P_X(x) \\
&\qquad \to \P_X(K) + 0 = \P_X(K) = \mu(K) \;, 
\qquad \text{as } m \to +\infty.
\end{align*}
Now, note that $\I_K(X) = \E[Y \mid X]$, and so we can choose $\I_K\aaa{(X)}$ as the regression function $\eta$, leading us to $2 \aaa{\E} \lsb{ \eta(X) \brb{1-\eta(X)} } = 2 \aaa{\E} \lsb{ \I_K(X)\I_{K^c}(X) } = 0 < \mu (K)$, which implies that
\[
    \P \brb{ Y_{\Smn_m} \neq Y } \not\to 2 \E \lsb{ \eta(X) \brb{1-\eta(X)} } \;, \qquad \text{as } m \to \infty.
\]
Therefore, Corollary~\ref{c:const-realizz} does not hold without requiring that $\P_X$-almost every $x\in \cX$ is a \leb{} point for $\eta$ with respect to $\P_X$.
\end{example}

\section*{Acknowledgments}
An earlier version of this work appeared in Roberto Colomboni's master's thesis, written under the supervision of Nicol\`o Cesa-Bianchi.
Both Tom and Rob gratefully acknowledge Nicol\`o's helpful advice. 
Rob also thanks Guglielmo Beretta for the many enlightening discussions.
This work has benefitted from the AI Interdisciplinary Institute ANITI. ANITI is funded by the French ``Investing for the Future – PIA3'' program under the Grant agreement n. ANR-19-PI3A-0004.\footnote{\url{https://aniti.univ-toulouse.fr/}}

\bibliographystyle{plainnat}
\bibliography{biblio}

\appendix

\section{Useful Probabilistic Results}
\label{s:proba}

In this section we present two useful probability lemmas that we used several times throughout the paper.
The first one is needed to avoid relying on conditional probabilities to obtain independence properties (allowing us to state results in non-separable metric spaces).
The second one is the classic ``freezing lemma''.
\begin{lemma}
\label{l:fake-indep}
Let $(\Omega, \cF, \P)$ be a probability space. Let $(\cV, \cF_{\cV})$ and $(\cW, \cF_{\cW})$ be two measurable spaces. Let $U\colon \Omega \to [0,\iop]$, $f\colon \cV \times \cW \to [0,\iop]$, $V\colon \Omega \to \cV$, $W \colon \Omega \to \cW$ be four measurable functions. If $(U,V)$ and $W$ are $\P$-independent, then
\begin{equation}
    \label{e:generalized-conditional-independence}
    \E\bsb{  U f(V, W) \mid V }
=
    \E[ U \mid V ] \, \E \bsb{ f(V, W) \mid V } \;.
\end{equation}
\end{lemma}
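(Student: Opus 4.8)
The plan is to prove the identity \eqref{e:generalized-conditional-independence} by a standard measure-theoretic bootstrapping argument: first establish it for indicator functions $f = \I_{A \times B}$ with $A \in \cF_{\cV}$, $B \in \cF_{\cW}$, then extend to simple functions by linearity, and finally to arbitrary nonnegative measurable $f$ by monotone convergence. The only genuinely probabilistic input is the hypothesis that $(U,V)$ and $W$ are $\P$-independent; everything else is bookkeeping with the defining property of conditional expectation.

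For the base case, fix $A \in \cF_{\cV}$ and $B \in \cF_{\cW}$ and take $f(v,w) = \I_A(v)\I_B(w)$, so that $f(V,W) = \I_A(V)\I_B(W)$. I would verify that $\E[U \mid V]\,\I_A(V)\,\P(W \in B)$ is a version of $\E[U f(V,W) \mid V]$ by checking the two defining properties: it is $\sigma(V)$-measurable (clear, as a product of $\sigma(V)$-measurable functions and a constant), and for every $C \in \cF_{\cV}$ it integrates correctly over $\{V \in C\}$. The latter is where independence enters:
\[
    \E\bsb{ U\,\I_A(V)\,\I_B(W)\,\I_C(V) }
=
    \E\bsb{ U\,\I_{A\cap C}(V) }\,\P(W\in B)\;,
\]
using that $(U,V)$ and $W$ are independent (so $(U, \I_{A\cap C}(V))$ and $\I_B(W)$ are independent, hence the expectation of the product factors). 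Then by the defining property of $\E[U \mid V]$ applied with the set $A\cap C$, this equals $\E[\,\I_C(V)\,\E[U\mid V]\,\I_A(V)\,]\,\P(W \in B)$, which is exactly $\int_{\{V\in C\}} \E[U\mid V]\,\I_A(V)\,\P(W\in B) \dif\P$. On the other side, for the same reason $\E[f(V,W)\mid V] = \I_A(V)\,\P(W\in B)$, so the claimed identity holds for such $f$.

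Next I would pass to general $f$. By linearity of conditional expectation, \eqref{e:generalized-conditional-independence} holds whenever $f$ is a finite nonnegative linear combination of rectangle indicators $\I_{A\times B}$; since the rectangles generate $\cF_{\cV} \otimes \cF_{\cW}$, a monotone class / Dynkin argument extends it to $f = \I_E$ for arbitrary $E \in \cF_{\cV}\otimes\cF_{\cW}$, hence to nonnegative simple $f$, and then monotone convergence (applied on both sides, valid since $U \ge 0$ and $f \ge 0$, using the conditional monotone convergence theorem) gives it for all nonnegative measurable $f$. The main subtlety — and the step I would be most careful about — is the independence manipulation in the base case: one must phrase it as ``$(U,\I_{A\cap C}(V))$ is $\P$-independent of $\I_B(W)$'', which follows because any measurable function of $(U,V)$ is independent of any measurable function of $W$; this is the only place the joint independence of the \emph{pair} $(U,V)$ with $W$ (rather than just $U$ with $W$) is actually used, and it is essential.
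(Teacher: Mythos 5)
Your overall strategy is sound, and its skeleton differs from the paper's in one essential respect: you keep a general nonnegative $U$ fixed throughout the extension over $f$ (rectangles verified directly against the defining property of conditional expectation, then nonnegative combinations, then a Dynkin/monotone-class argument for $\I_E$, then monotone convergence), whereas the paper first freezes $U=\I_F$, carries out the entire extension over $f$ for this bounded $U$ (using an $\cL^1\brb{\P_{(V,W)}}$-approximation of $\I_D$ by finite combinations of rectangles instead of a Dynkin argument), and only at the very end passes from indicator to simple to general $U$ by monotone convergence. Your base case is correct even for $[0,\iop]$-valued $U$, and your Dynkin route is arguably cleaner than the paper's approximation step.

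The order of extension, however, creates a genuine gap at the Dynkin step. Since the lemma allows $U\colon\Omega\to[0,\iop]$, the conditional expectation $\E[U\mid V]$ may be infinite on a set of positive probability. To show that the class of sets $E\in\cF_{\cV}\otimes\cF_{\cW}$ for which \eqref{e:generalized-conditional-independence} holds with $f=\I_E$ is a $\lambda$-system, you must handle complements (or proper differences), and the natural argument writes $\E\bsb{U\,\I_{E^c}(V,W)\mid V}=\E[U\mid V]-\E\bsb{U\,\I_{E}(V,W)\mid V}$; this subtraction is illegitimate on $\bcb{\E[U\mid V]=\iop}$, and there the identity for $E^c$ does not follow from the one for $E$: when $\E[U\mid V]=\iop$ and $\E\bsb{\I_E(V,W)\mid V}\in(0,1)$, the additivity relation reads $\iop=\iop+\E\bsb{U\,\I_{E^c}(V,W)\mid V}$ and gives no information, so you would have to prove separately that $\E\bsb{U\,\I_{E^c}(V,W)\mid V}=\iop$ whenever $\E\bsb{\I_{E^c}(V,W)\mid V}>0$, which is essentially an instance of the statement being proved. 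The functional monotone class theorem does not rescue this either, as it operates on bounded functions. The fix is precisely the ordering the paper adopts: first prove the identity for bounded $U$ (indicators $\I_F$, or the truncations $U\wedge n$, noting that $(U\wedge n,V)$ is still $\P$-independent of $W$ because $U\wedge n$ is a measurable function of $(U,V)$), run your rectangle/Dynkin/monotone-convergence extension over $f$ in that bounded regime, and only at the end let $n\to\iop$ via the conditional monotone convergence theorem on both sides, checking the convention $\iop\cdot 0=0$ separately on $\bcb{\E\bsb{f(V,W)\mid V}=0}$. With that reordering your argument is complete.
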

\begin{proof}
If $Z$ is a random variable, we will denote by $\sigma(Z)$ the $\sigma$-algebra generated $Z$.
Since $\E[ U \mid V ] \, \E \bsb{ f(V, W) \mid V }$ is a $\sigma(V)$-measurable and non negative random variable, then, by definition of conditional expectation, we only need to prove that, for all $A \in \cF_{\cV}$, we have
\begin{equation}
    \label{e:fake-indep}
    \E \Bsb{ \I_A(V) \, \E[ U \mid V ] \, \E \bsb{ f(V, W) \mid V } }
=
    \E \bsb{ \I_A(V) \, U f(V, W) }\;.
\end{equation}
Assume first  that $U = \I_F$, for some $F \in \cF$.

We begin by further assuming that for all $(v,w) \in \cV \times \cW $, $f(v,w) = \I_{B}(v) \, \I_{C}(w)$, for some $B \in \cF_{\cV}$ and $C \in \cF_{\cW}$. For each $A \in \cF_{\cV}$, we have
\begin{align*}
    \E \Bsb{ \I_A(V) \, \E[ U \mid V ] \, \E \bsb{ f(V, W) \mid V } }
&=
    \E \Bsb{ \I_A(V) \, \E[ \I_F \mid V ] \, \E \bsb{ \I_B(V) \, \I_C(W) \mid V } }
\\
&=
    \E \Bsb{ \I_A(V) \, \I_B(V) \, \E[ \I_F \mid V ] \, \E \bsb{ \I_C(W) \mid V } }
\\
&=
    \E \Bsb{ \I_A(V) \, \I_B(V) \, \E[ \I_F \mid V ] \, \E \bsb{ \I_C(W) } }
\\
&=
    \E \bsb{ \I_{A \cap B}(V) \, \E[ \I_F \mid V ] } \, \P (W \in C)
\\
&=
    \E \bsb{ \I_{A \cap B}(V) \, \I_F } \, \P (W \in C)
\\
&=
    \P \brb{ \{V \in A\} \cap \{V \in B\} \cap F } \, \P (W \in C)
\\
&=
    \P \brb{ \{V \in A\} \cap \{V \in B\} \cap F  \cap \{ W \in C \} }
\\
&=
    \E \bsb{ \I_A(V) \, \I_F \, \I_B(V) \, \I_C(W)  }
\\
&=
    \E \bsb{ \I_A(V) \, U f(V, W)  }\;.
\end{align*}
This proves \eqref{e:fake-indep} under these assumptions.

Then assume that, for all $(v,w) \in \cV \times \cW$
\[
    f(v,w)
=
    \sum_{i=1}^n a_i \, \I_{B_i}(v) \, \I_{C_i}(w)
\]
for some $n \in \N$, $a_1, \l, a_n > 0$, $B_1, \l, B_n \in \cF_{\cV}$, and $C_1, \l, C_n \in \cF_{\cW}$.
For each $A \in \cF_{\cV}$, we have
\begin{multline*}
    \E \Bsb{ \I_A(V) \, \E[ U \mid V ] \, \E \bsb{ f(V, W) \mid V } }
\\
\begin{aligned}
&=
    \E \lsb{ \I_A(V) \, \E[ \I_F \mid V ] \, \E \lsb{ \sum_{i=1}^n a_i \, \I_{B_i}(V) \, \I_{C_i}(W) \mid V } }
\\
&=
    \sum_{i=1}^n a_i \, \E \Bsb{ \I_A(V) \, \E[ \I_F \mid V ] \, \E \bsb{ \I_{B_i}(V) \, \I_{C_i}(W) \mid V } }
\\
&=
    \sum_{i=1}^n a_i \, \E \bsb{ \I_A(V) \, \I_F \, \I_{B_i}(V) \, \I_{C_i}(W) }
\\
&=
    \E \lsb{ \I_A(V) \, \I_F  \sum_{i=1}^n a_i \,  \I_{B_i}(V) \, \I_{C_i}(W) }
\\
&=
    \E \bsb{ \I_A(V) \, U f(V, W)  } \;,
\end{aligned}
\end{multline*}
where the third equality follows by \eqref{e:fake-indep}, which is true in this case for what we proved above. This proves \eqref{e:fake-indep} under these assumptions.

Next, assume that $f = \I_D$, where $D$ belongs to the product $\sigma$-algebra $\cF_{\cV} \otimes \cF_{\cW}$.
Let $\cG$ be the algebra generated by $\Pi:=\{B \times C \mid B \in \cF_{\cV}, C \in \cF_{\cW} \}$.
By \citep[Theorem 6.3, Chapter 6 (The general integral/Approximations)]{lang2012real}, there exists a sequence $(f_n)_{n\in\N}$ such that for all $n\in \N$, there exist $m_n \in \N$, $a_{1,n},...,a_{m_n,n}>0$,  $G_{1,n},...,G_{m_n,n} \in \cG$ such that $f_n = \sum_{k=1}^{m_n} a_{k,n} \, \I_{G_{k,n}} $ and $\lno{ f-f_n }_{\cL^1(\P_{(V,W)})} \to 0$, as $n \to \iop$, i.e. $\E\Bsb{ \babs{ f(V,W)-f_n(V,W) } } \to 0$, as $n \to \iop$. 
Since $\Pi$ is a $\pi$-system, we have that the elements of $\cG$ are finite unions of disjoint elements of $\Pi$, and so for each $n \in \N$ and each $k \in \{1,...,m_n\}$ there \aaa{exist} $l_{n,k} \in \N$ and $B_{1,n,k} \times C_{1,n,k}, \l, B_{l_{n,k},n,k} \times C_{l_{n,k},n,k} \in \Pi$ mutually disjoint such that $G_{k,n} = \bigcup_{j=1}^{l_{n,k}} B_{j,n,k} \times C_{j,n,k}$. 
Therefore for each $n \in \N$
\[
    f_n 
= 
    \sum_{k=1}^{m_n} a_{k,n} \, \I_{G_{k,n}} 
= 
    \sum_{k=1}^{m_n} a_{k,n} \, \I_{\bigcup_{j=1}^{l_{n,k}} B_{j,n,k} \times C_{j,n,k}} 
= 
    \sum_{k=1}^{m_n} \sum_{j=1}^{l_{n,k}} a_{k,n} \, \I_{ B_{j,n,k} \times C_{j,n,k}}\;.
\]
Then, $\E \bbsb{ \Babs{ \E \bsb{ f_n(V,W) \mid V }  - \E\bsb{ f(V,W) \mid V } } } \to 0$, as $n\to \iop$.
Similarly, since $\I_F$ is bounded, $\E \bbsb{ \Babs{ \E\bsb{\I_F f_n(V,W) \mid V } - \E\bsb{ \I_F f(V,W) \mid V } } } \to 0$ as $n\to \iop$.
Moreover, being $\I_F$ bounded, its conditional expectation $\E [ \I_F\mid V ]$ is also bounded, which in turn yields $\E \bbsb{ \Babs{ \E\bsb{\I_F \mid V} \E\bsb{ f_n(V,W) \mid V }  - \E\bsb{\I_F \mid V} \E\bsb{ f(V,W) \mid V } } } \to 0$ as $n\to \iop$.
Thus, for each $A \in \cF_{\cV}$,
\begin{align*}
    \E \Bsb{ \I_A(V) \, \E[ U \mid V ] \, \E \bsb{ f(V, W) \mid V } }
&=
    \lim_{n \to \infty} \E \Bsb{ \I_A(V) \, \E[ \I_F \mid V ] \, \E \bsb{ f_n(V, W) \mid V } }
\\
&=
    \lim_{n \to \infty} \E \Bsb{ \I_A(V) \, \I_F \, f_n(V,W) }
\\
&=
    \E \bsb{ \I_A(V) \, U f(V, W)  } \;\aaa{,}
\end{align*}
where the third equality follows by \eqref{e:fake-indep}, which is true in this case for what we proved above. This proves \eqref{e:fake-indep} under these assumptions.

Let now $f = \sum_{i=1}^n a_i \, \I_{D_i}$, for some $n \in \N$, $a_1, \l, a_n > 0$, and $D_1, \l, D_n \in \cF_{\cV} \otimes \cF_{\cW}$. Then, for each $A \in \cF_{\cV}$,
\begin{multline*}
    \E \Bsb{ \I_A(V) \, \E[ U \mid V ] \, \E \bsb{ f(V, W) \mid V } }
\\
\begin{aligned}
&=
    \E \lsb{ \I_A(V) \, \E[ \I_F \mid V ] \, \E \lsb{ \sum_{i=1}^n a_i \, \I_{D_i}(V,W) \mid V } }
\\
&=
    \sum_{i=1}^n a_i \, \E \Bsb{ \I_A(V) \, \E[ \I_F \mid V ] \, \E \bsb{ \I_{D_i}(V,W) \mid V } }
\\
&=
    \sum_{i=1}^n a_i \, \E \bsb{ \I_A(V) \, \I_F \, \I_{D_i}(V,W) }
\\
&=
    \E \lsb{ \I_A(V) \, \I_F  \sum_{i=1}^n a_i \, \I_{D_i}(V,W) }
\\
&=
    \E \bsb{ \I_A(V) \, U f(V, W)  } \;,
\end{aligned}
\end{multline*}
where the third equality follows by \eqref{e:fake-indep}, which is true in this case for what we proved above. This proves \eqref{e:fake-indep} under these assumptions.

Now, if $f$ is general, we can get a sequence $(f_n)_{n\in \N}$ such that for each $n \in \N$ there exists $m_n \in \N$, $a_{1,n}, \l, a_{m_n,n} > 0$, and $D_{1,n},\l,D_{m_n,n} \in \cF_{\cV} \otimes \cF_{\cW}$ such that for each $n \in \N$ we have that $f_n = \sum_{k=1}^{m_n} a_{k,n} \I_{D_{k,n}}$ and $f_n \uparrow f$ pointwise, as $n \uparrow \iop$. Hence, by the monotone convergence theorem for the conditional expectation, we have that, for each $A \in \cF_{\cV}$,
\begin{align*}
    \E \Bsb{ \I_A(V) \, \E[ U \mid V ] \, \E \bsb{ f(V, W) \mid V } }
&=
    \lim_{n \to \infty} \E \Bsb{ \I_A(V) \, \E[ \I_F \mid V ] \, \E \bsb{ f_n(V, W) \mid V } }
\\
&=
    \lim_{n \to \infty} \E \Bsb{ \I_A(V) \, \I_F \, f_n(V,W) }
\\
&=
    \E \bsb{ \I_A(V) \, U f(V, W)  } \;.
\end{align*}
Now, suppose $U = \sum_{i=1}^n a_n \I_{F_i}$ for some $n \in \N$, for $n$ distinct $a_1, \l, a_n > 0$ and $F_1,...,F_n \in \cF$. For each $i \in \{1, \l, n\}$, we have that $F_i = \{U = a_i\}$ so $\sigma(\I_{F_i},V) \subset \sigma(U,V)$, and since $(U,V)$ is $\P$-independent from $W$ we also have that $(\I_{F_i},V)$ is $\P$-independent from $W$. Then, for each $A \in \cF_{\cV}$, we have that
\begin{multline*}
    \E \Bsb{ \I_A(V) \, \E[ U \mid V ] \, \E \bsb{ f(V, W) \mid V } }
\\
\begin{aligned}
&=
    \E \lsb{ \I_A(V) \, \E \lsb{ \sum_{i=1}^n a_i \, \I_{F_i} \mid V } \, \E \lsb{ f(V, W) \mid V } }
\\
&=
    \sum_{i=1}^n a_i \, \E \Bsb{ \I_A(V) \, \E[ \I_{F_i} \mid V ] \, \E \bsb{ f(V,W) \mid V } }
\\
&=
    \sum_{i=1}^n a_i \, \E \bsb{ \I_A(V) \, \I_{F_i} \, f(V,W) }
\\
&=
    \E \lsb{ \I_A(V) \, \sum_{i=1}^n a_i \, \I_{F_i} \, f(V,W) }
\\
&=
    \E \bsb{ \I_A(V) \, U f(V, W)  } \;,
\end{aligned}
\end{multline*}
where in the third equality we used the previous case.
Finally, if $U$ is general, for each $n\in \N$ and $i\in\{ 1,\l,2^{2n-1}-1 \}$ define $a_{i,n} = \frac{i 2^n}{2^{2n-1}}$ and $F_{i,n} = \lcb{U \in \bigl[ \frac{i 2^n}{2^{2n-1}}, \frac{(i+1) 2^n}{2^{2n-1}}\bigr) }$. For each $n\in \N$, define
\[
    U_n = \sum_{i=1}^{2^{2n-1}} a_{i,n} \I_{F_{i,n}}\;.
\]
Then, for each $n\in\N$, we have that $a_{1,n}, \l, a_{2^{2n}-1,n}>0$ are distinct, that $F_{1,n},\l,F_{2^{2n}-1,n} \in \cF$ are mutually disjoint. Also, for each $n \in \N$, we have that $\sigma(U_n) \subset \sigma \brb{\I_{F_{1,n}}, \l, \I_{F_{2^{2n}-1,n}} } \subset \sigma (U)$ and so $(U_n,V)$ is $\P$-independent from $W$ since $(U,V)$ is $\P$-independent from $W$. Furthermore, we have that $U_n \uparrow U$ pointwise as $n \uparrow \iop$ and so, since $f(V,W)\ge 0$, also $U_n f(V,W) \uparrow U f(V,W)$ pointwise as $n \uparrow \iop$ and $\E[ U_n \mid V ] \, \E \bsb{ f(V, W) \mid V } \uparrow \E[ U \mid V ] \, \E \bsb{ f(V, W) \mid V }, \P$-almost everywhere as $n \uparrow \iop$. Hence, by what we observed and the monotone convergence theorem, we have that, for each $A \in \cF_{\cV}$,
\begin{align*}
    \E \Bsb{ \I_A(V) \, \E[ U \mid V ] \, \E \bsb{ f(V, W) \mid V } }
&=
    \lim_{n \to \infty} \E \Bsb{ \I_A(V) \, \E[ U_n \mid V ] \, \E \bsb{ f(V, W) \mid V } }
\\
&=
    \lim_{n \to \infty} \E \Bsb{ \I_A(V) \, U_n \, f(V,W) }
\\
&=
    \E \bsb{ \I_A(V) \, U f(V, W)  } \;.
\end{align*}
where in the second equality we used the previous case.
This concludes the proof.
\end{proof}

The next \aaa{result} can be proven with the same approach as the previous lemma. Alternatively, a proof is given in \citep[Lemma 4.1]{baldi2017stochastic}.

\begin{lemma}[The ``freezing lemma'']
\label{l:freezing}
Let $(\Omega, \cF, \P)$ be a probability space. Let $(\cV, \cF_{\cV})$ and $(\cW, \cF_{\cW})$ be two measurable spaces. Let $f\colon \cV \times \cW \to [0,\iop]$, $V\colon \Omega \to \cV$, $W \colon \Omega \to \cW$ be three measurable functions. If $V$ and $W$ are $\P$-independent, then
\begin{equation}
    \label{e:freezing}
    \E \bsb{ f(V,W) \mid V }
=
     \Bsb{ \E \bsb{ f(v,W) } }_{v=V}
\end{equation}
$\P$-almost surely, where the right hand side is the composition
\[
    \Bsb{ \E \bsb{ f(v,W) } }_{v=V} 
= 
    \Brb{ v\mapsto \E \bsb{ f(v,W) } } \circ V \;.
\]
\end{lemma}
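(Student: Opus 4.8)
The plan is to prove \eqref{e:freezing} by the standard measure-theoretic ``machine'', in the same spirit as the proof of Lemma~\ref{l:fake-indep} but using a $\pi$-$\lambda$ argument for concision. Throughout, write $\P_W$ for the law of $W$; since $V$ and $W$ are $\P$-independent, for every nonnegative measurable $f$ the inner expectation is $\E\bsb{f(v,W)} = \int_{\cW} f(v,w)\dif\P_W(w)$, and the right-hand side of \eqref{e:freezing} is the composition of this partially-integrated function with $V$. One subtlety to keep in mind from the outset is that the map $v\mapsto\E\bsb{f(v,W)}$ must be shown $\cF_{\cV}$-measurable for the right-hand side to make sense as a random variable; this will fall out of the same induction.

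First I would handle the base case $f(v,w) = \I_B(v)\,\I_C(w)$ for $B\in\cF_{\cV}$, $C\in\cF_{\cW}$. On the one hand, pulling the $\sigma(V)$-measurable factor $\I_B(V)$ out of the conditional expectation and using independence of $\I_C(W)$ from $\sigma(V)$ gives
\[
    \E\bsb{\I_B(V)\,\I_C(W)\mid V} = \I_B(V)\,\E\bsb{\I_C(W)} = \I_B(V)\,\P(W\in C)\;,
\]
while on the other hand $\E\bsb{f(v,W)} = \I_B(v)\,\P(W\in C)$, whose value at $v=V$ is the same random variable. Linearity of conditional expectation extends \eqref{e:freezing} to finite nonnegative linear combinations of such products.

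Next I would pass to indicators $f = \I_D$ with $D$ in the product $\sigma$-algebra $\cF_{\cV}\otimes\cF_{\cW}$ via Dynkin's theorem: the family $\cD$ of all $D\in\cF_{\cV}\otimes\cF_{\cW}$ for which $v\mapsto\E\bsb{\I_D(v,W)}$ is measurable and \eqref{e:freezing} holds with $f=\I_D$ is a $\lambda$-system. Indeed, it contains $\cV\times\cW$ (both sides equal $1$), it is closed under proper differences (subtract the two identities, noting both sides are bounded by $1$), and it is closed under increasing unions (apply the conditional monotone convergence theorem to the left-hand side and, for each fixed $v$, the ordinary monotone convergence theorem to $\E\bsb{\,\cdot\,(v,W)}$, so the partially-integrated functions increase to a measurable limit and compose correctly with $V$). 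Since $\cD$ contains the $\pi$-system of measurable rectangles, which generates $\cF_{\cV}\otimes\cF_{\cW}$, we conclude $\cD = \cF_{\cV}\otimes\cF_{\cW}$. Linearity then covers nonnegative simple functions on $\cV\times\cW$, and a final monotone convergence argument (writing a general nonnegative measurable $f$ as an increasing limit of simple functions, and using conditional MCT together with MCT for each fixed $v$) gives \eqref{e:freezing} in full generality. The point that most wants care is precisely the measurability of $v\mapsto\E\bsb{f(v,W)}$ — the measurability half of Tonelli's theorem — which is handled within the $\pi$-$\lambda$/MCT sweep above; alternatively, one may simply cite \citep[Lemma~4.1]{baldi2017stochastic}.
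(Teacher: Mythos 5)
Your proof is correct, and it is worth noting that the paper itself does not actually prove Lemma~\ref{l:freezing}: it only remarks that one can mimic the proof of Lemma~\ref{l:fake-indep} or cite \citep[Lemma~4.1]{baldi2017stochastic}. Your argument follows the same overall ``standard machine'' skeleton as the paper's proof of Lemma~\ref{l:fake-indep} (rectangles, indicators of product-measurable sets, nonnegative simple functions, monotone convergence), but at the key step of passing from rectangles to general $D \in \cF_{\cV}\otimes\cF_{\cW}$ you use Dynkin's $\pi$--$\lambda$ theorem, whereas the paper's template instead approximates $\I_D$ in $\cL^1(\P_{(V,W)})$ by simple functions over the algebra generated by measurable rectangles (via the density theorem it cites from \citep{lang2012real}) and passes to the limit in the conditional expectations. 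Your route is the cleaner of the two here: the $\lambda$-system argument avoids the approximation theorem and the attendant $\cL^1$-limit manipulations, and it has the additional merit of carrying the $\cF_{\cV}$-measurability of $v\mapsto\E\bsb{f(v,W)}$ (the Tonelli-type measurability needed for the right-hand side of \eqref{e:freezing} to be a random variable) through the same induction, a point the paper leaves implicit. The $\cL^1$-approximation route, on the other hand, is the one that generalizes more directly to the setting of Lemma~\ref{l:fake-indep}, where the extra factor $U$ and the conditional expectation $\E[U\mid V]$ make a $\lambda$-system bookkeeping slightly less immediate; for the freezing lemma itself both work, and your choice is perfectly adequate.
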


\end{document}